\documentclass{article}

\usepackage{silence}
\usepackage{silence}
\usepackage{microtype}
\usepackage{graphicx}
\usepackage{subcaption}
\usepackage[table]{xcolor}
\usepackage{booktabs} 
\usepackage{amsmath}
\usepackage{amssymb}
\usepackage{mathtools}
\usepackage{amsthm}
\usepackage{array}
\usepackage{comment}
\usepackage{makecell}
\usepackage{calc}
\usepackage{pifont}
\usepackage[table]{xcolor}
\newcommand{\cmark}{\textcolor{green!60!black}{\ding{51}}} 
\newcommand{\xmark}{\textcolor{red}{\ding{55}}}
\usepackage[most]{tcolorbox}
\usepackage{kotex}
\usepackage{hyperref}

\usepackage{thmtools,thm-restate}
\usepackage{amsthm}

\usepackage[accepted]{icml2026}

\makeatletter

\let\algorithmicrequire\undefined
\let\algorithmicensure\undefined
\let\algorithmiccomment\undefined
\let\algorithmicend\undefined
\let\algorithmicif\undefined
\let\algorithmicthen\undefined
\let\algorithmicelse\undefined
\let\algorithmicfor\undefined
\let\algorithmicforall\undefined
\let\algorithmicdo\undefined
\let\algorithmicwhile\undefined
\let\algorithmicrepeat\undefined
\let\algorithmicuntil\undefined

\makeatother
\usepackage[dvipsnames]{xcolor}

\usepackage{algpseudocode}
\usepackage{amsmath,amsthm,amssymb,mathtools,bm}
\usepackage{mathcommand}
\usepackage{graphicx}
\usepackage{subcaption}
\usepackage{nicefrac}
\usepackage{comment}
\usepackage{booktabs}
\usepackage{multirow}
\usepackage{array}
\usepackage{tikz}
\usetikzlibrary{decorations.text}
\usepackage{pgfplots}
\usepackage{forest}
\usepackage{rotating}
\usepackage{bbding}
\usepackage{pgf}

\newtheoremstyle{funny}
  {}{}
  {\itshape}
  {}
  {\bfseries}
  {.}
  { }
  {%
   \thmname{#1}
   \thmnumber{ #2}
   \thmnote{ {\mdseries\iffunny(\fi#3\iffunny)\fi}}
   \global\funnytrue 
  }
\newif\iffunny

\theoremstyle{funny}

\newtheorem{proposition}{Proposition}
\theoremstyle{funny}

\algnewcommand\algorithmicswitch{\textbf{switch}}
\algnewcommand\algorithmiccase{\textbf{case}}
\algnewcommand\algorithmicassert{\texttt{assert}}
\algnewcommand\Assert[1]{\State \algorithmicassert(#1)}%
\algdef{SE}[SWITCH]{Switch}{EndSwitch}[1]{\algorithmicswitch\ #1\ \algorithmicdo}{\algorithmicend\ \algorithmicswitch}%
\algdef{SE}[CASE]{Case}{EndCase}[1]{\algorithmiccase\ #1}{\algorithmicend\ \algorithmiccase}%
\algtext*{EndSwitch}%
\algtext*{EndCase}%

\renewcommand{\cmark}{\textcolor{green}{\Checkmark}}
\renewcommand{\xmark}{\textcolor{red}{\XSolidBrush}}

\usepackage{colortbl}
\usepackage{tabularx}
\definecolor{bGreen}{HTML}{D3FFD3}
\definecolor{bBlue}{HTML}{D3D3FF}
\definecolor{bRed}{HTML}{FFD3D3}
\definecolor{bGray}{HTML}{D3D3D3}

\usepackage[capitalize,noabbrev]{cleveref}

\usepackage[textsize=tiny]{todonotes}

\icmltitlerunning{Triadic Dynamics Aware Diffusion Posterior Sampling for Inverse Problems}

\begin{document}

\twocolumn[
  \icmltitle{Triadic Dynamics Aware Diffusion Posterior Sampling for Inverse Problems: Optimizing Guidance and Stochasticity Schedules}



  \icmlsetsymbol{equal}{*}
  \begin{icmlauthorlist}
    \icmlauthor{Junseo Bang}{equal,ece}
    \icmlauthor{Dong Ju Mun}{equal,ece}
    \icmlauthor{Hoigi Seo}{ece}
    \icmlauthor{Seongmin Hong}{inmc}
    \icmlauthor{Se Young Chun}{ece,inmc,ipai}
  \end{icmlauthorlist}
  \icmlaffiliation{ece}{Dept. of Electrical and Computer Engineering}
  \icmlaffiliation{inmc}{INMC,}
    \icmlaffiliation{ipai}{IPAI \& AIIS, Seoul National University, Republic of Korea}
  \icmlcorrespondingauthor{Se Young Chun}{sychun@snu.ac.kr}
  \icmlkeywords{Diffusion Posterior Sampling, Inverse Problem, Optimal Guidance and Schedule}

  \vskip 0.3in
]



\printAffiliationsAndNotice{\icmlEqualContribution}


\begin{abstract}
Generative posterior sampling using diffusion models has emerged as a dominant paradigm for solving inverse problems in imaging, which usually consists of three main components: data consistency (DC) guidance, classifier-free guidance (CFG) and stochasticity. While prior arts have focused on how to develop each or all components, less attention has given to how to schedule them, leading to heuristically fixed or partially adjusted suboptimal schedules. In this work, we argue that the interactions among all three components in terms of scheduling are crucial for significantly improved performance in solving inverse problems in imaging. Our analysis shows that aggressive CFG early in sampling conflict with DC guidance, while stochasticity brings the trajectory back to higher-probability regions. Based on these findings, we propose Triadic Dynamics Aware Posterior Sampling (TriPS), which reformulates posterior sampling as a time-varying control problem and optimizes schedules following a triadic trend of decreasing DC and stochasticity scales alongside increasing CFG scale. TriPS achieves this through two strategies: template-based search over functional priors for reliable baseline schedules, and Group Relative Policy Optimization (GRPO)-based reinforcement learning for more flexible temporal curves. Experiments demonstrate TriPS outperforms state-of-the-art baselines in data fidelity and perceptual realism.

\end{abstract}

\section{Introduction}
\label{sec: intro}
Inverse problems aim to recover an unknown signal $\mathbf{x}_0 \in \mathbb{R}^d$ from noisy measurements $\bm{y} = \mathcal{A}\bm{x}_0 + \mathbf{n}$, where $\mathcal{A}: \mathbb{R}^d \rightarrow \mathbb{R}^m$ is a known forward operator and $\bm{n} \in \mathbb{R}^m \sim \mathcal{N}(0, \sigma_{\bm{n}}^2 I_m)$ represents measurement noise. While classical methods often rely on hand-crafted priors~\cite{rudin1992nonlinear, donoho1995noising, beck2009fast}, the paradigm has shifted toward leveraging expressive generative priors via diffusion models~\cite{ho2020denoising, song2020score} and flow matching~\cite{lipman2023flow, esser2024scaling}. 
Since the reverse diffusion process involves the score function of the prior, we can sample from the posterior by estimating the score of the posterior distribution~\cite{hyvarinen2005estimation, song2020score}:
\begin{equation}
\underbrace{\nabla_{\bm{x}_t} \log p(\bm{x}_t|\bm{y})}_{\text{Posterior score}} = \underbrace{\nabla_{\bm{x}_t} \log p(\bm{y}|\bm{x}_t)}_{\text{Log-likelihood score}} + \underbrace{\nabla_{\bm{x}_t} \log p(\bm{x}_t)}_{\text{Prior score}},
\end{equation}
where $t \in [0, T]$ denotes the timestep.

Existing methods focuses on explicitly approximating the log-likelihood score term through three primary categories: (i) projection-based methods project the intermediate state $\bm{x}_t$ or $\hat{\bm{x}}_{0|t} \coloneqq\mathbb{E}[\bm{x}_0|\bm{x}_t]$ onto the measurement subspace $\{\bm{x}|\mathcal{A}\bm{x}=\bm{y}\}$ via singular value decomposition or range-null space decomposition~\cite{snips, ddrm, ddnm}; (ii) gradient-based methods enforce consistency by taking a single gradient step on $\bm{x}_t$ to align the sample with the measurements at each timestep~\cite{dps, pigdm, blinddps, moment_matching, psld}; and (iii) optimization-based methods solve local or global optimization problems using proximal updates to approximate the posterior mean toward the measurement manifold~\cite{diffpir, dds, flower, dilack}. 

Complementing these likelihood-centric approaches, the emergence of large-scale text conditional generative models, such as Stable Diffusion~\cite{rombach2022high} and FLUX~\cite{esser2024scaling}, provides powerful implicit priors for solving inverse problems in a zero-shot manner. Since inverse problems are inherently ill-posed (\textit{e.g.}, $m < d$ in a linear $\mathcal{A}$) and measurements provide insufficient information, recent methods incorporate text prompts to steer the sampling process onto the semantic manifold aligned with the text conditioning, thereby regularizing the solution space~\cite{p2l, treg}

At the core of solving the reverse diffusion process within these diverse methods lie three fundamental components for posterior sampling:
\textbf{(1) Data consistency (DC) guidance}: Regularizing $\mathcal{A}\bm{x}_0$ to align with the measurements $\bm{y}$, with the strength controlled by the DC guidance scale, as in Eq.~\eqref{eq:dc_update}. \textbf{(2) Classifier-free guidance (CFG)}: Extrapolating the score estimate toward the conditioning text prompt with the guidance strength controlled by a CFG scale, as in Eq.~\eqref{eq:2.1}. \textbf{(3) Stochasticity}: Controlling the additive noise during the reverse diffusion process, with the noise strength governed by the stochasticity scale, as in Eq.~\eqref{eq:2.4}.

Although a \textit{time-varying} coordination of these components offers greater degrees of freedom to steer the sampling trajectory and optimize restoration quality, existing works typically treat them in a \textit{time-independent} manner. For instance, Flowchef~\cite{flowchef} employs constant values for the DC guidance and CFG scales without stochasiticity (\textit{i.e.}, $\lambda(t) = \lambda, \beta(t) = \beta$, $\eta(t)=0$). Conversely, some methods adjust only the DC guidance scale in a time-varying fashion to prevent over-saturation artifacts toward the end of sampling~\cite{pigdm, fast_samplers}.
While other methods have attempted to jointly schedule the DC and stochasticity scales to balance data consistency and perceptual quality~\cite{resample, ddpg, flowdps, flowlps, daps}, the interplay between these two components remains insufficiently characterized. Furthermore, time-varying designs for the CFG scale have yet to be explored in the context of inverse problems, despite their reported efficacy in enhancing performance and diversity within generation tasks~\cite{muse, wang2024analysis, limited_interval_cfg}.

To address this, we propose \textbf{TriPS} (Triadic Dynamics Aware Posterior Sampling), which reformulates posterior sampling as an optimization problem of time-varying schedules. TriPS characterizes DC guidance, CFG, and stochasticity as a coupled triadic system where early stage CFG conflicts with DC guidance, which hinders reducing DC. 
Furthermore, we empirically observe that calibrated stochasticity acts as a regularizer, counteracting the tendency of DC guidance and CFG to drive sampling trajectories away from higher-probability regions.
Building upon these insight, we introduce a triadic scheduling, characterized by monotonically decreasing DC and stochasticity scales while increasing CFG scale, and optimize through two complementary paradigms: (i) Template-based schedule search leverages functional priors to efficiently identify effective schedule by constraining the search space within a set of predefined templates. (ii) Group Relative Policy Optimization (GRPO)-based schedule optimization employs a reinforcement learning framework to capture complex temporal curves that transcend fixed functional forms, allowing it to effectively navigate the perception-distortion trade-off through a hybrid reward that jointly optimizes perceptual (\textit{e.g.}, LPIPS) and distortion metrics (\textit{e.g.}, PSNR). 
Our key contributions are as follow: 

\begin{itemize} \setlength\itemsep{-0.2em} \item \textbf{Analysis of Triadic Coupling Dynamics}: 
We identify the inherent conflict between DC guidance and CFG and reveal stochasticity as a regularizer that aligns sampling trajectories toward higher-probability regions.
\item \textbf{Triadic Schedule Optimization Framework}: We propose TriPS that finds effective time-varying schedules of DC, CFG, and stochasticity scales through the template-based search and GRPO-based optimization.
\end{itemize}

\section{Preliminaries}
\label{sec:preliminaries}

\subsection{Flow-based Posterior Sampling}
\label{subsec:flow_posterior}

We describe the general formulation of flow-based posterior sampling for solving inverse problems $\bm{y} = \mathcal{A}\bm{x}_0 + \mathbf{n}$. The sampling process at timestep $t$ is governed by a velocity field derived from a pretrained flow matching model, modified by DC guidance and CFG and stochastic injections.

First, the velocity field is adjusted by the CFG scale $\lambda(t)$ to incorporate conditional information $c$:
\begin{equation}\label{eq:2.1}
    v_t(\bm{x}_t) = v_\theta(\bm{x}_t, c_\emptyset) + \lambda(t)\big(v_\theta(\bm{x}_t, c) - v_\theta(\bm{x}_t, c_\emptyset)\big).
\end{equation}
Based on the velocity $v_t(\bm{x}_t)$, the clean data prediction $\hat{\bm{x}}_{0|t}$ and the corresponding noise prediction $\hat{\bm{x}}_{1|t}$ are estimated via the flow-based Tweedie formula~\cite{flowdps}: 
\begin{equation}
\label{eq:tweedie_update}
    \hat{\bm{x}}_{0|t} = \bm{x}_t - \sigma_t v_t(\bm{x}_t), \quad \hat{\bm{x}}_{1|t} = \bm{x}_t + (1-\sigma_t)v_t(\bm{x}_t),
\end{equation}
where $\sigma_t$ denotes the noise schedule. To enforce consistency with the measurement $\bm{y}$, a DC update is applied to the estimated clean data using a DC guidance scale $\beta(t)$:
\begin{equation}
\label{eq:dc_update}
    \tilde{\bm{x}}_{0|t}(\bm{y}) = \hat{\bm{x}}_{0|t} - \beta(t) \nabla_{\hat{\bm{x}}_{0|t}} \mathcal{L}(\mathcal{A}\hat{\bm{x}}_{0|t}, \bm{y}).
\end{equation}
Here, $\mathcal{L}$ denote the DC loss function (\textit{e.g.}, $\ell_2$ distance) that quantifies the discrepancy between the estimated forward projection and the measurement.
Concurrently, to inject stochasticity, the noise component $\hat{\bm{x}}_{1|t}$ is perturbed with a stochasticity scale $\eta(t)$ and Gaussian noise $\bm{\epsilon} \sim \mathcal{N}(0, I_d)$:
\begin{equation}\label{eq:2.4}
    \tilde{\bm{x}}_{1|t} = \sqrt{1-\eta^2(t)} \hat{\bm{x}}_{1|t} + \eta(t)\bm{\epsilon}.
\end{equation}
The state for the next timestep $\bm{x}_{t+\Delta t}$ is computed via an Euler method, combining the measurement aligned estimate $\tilde{\bm{x}}_{0|t}(\bm{y})$ with the stochastically modulated noise term $\tilde{\bm{x}}_{1|t}$:
\begin{equation}
    \bm{x}_{t+\Delta t} = (1-\sigma_{t+\Delta t})\tilde{\bm{x}}_{0|t}(\bm{y}) + \sigma_{t+\Delta t} \tilde{\bm{x}}_{1|t}.
\end{equation}
We also define $\bm{x}_{t+\Delta t}^{\text{det}}=(1-\sigma_{t+\Delta t})\tilde{\bm{x}}_{0|t}(\bm{y}) + \sigma_{t+\Delta t} \hat{\bm{x}}_{1|t}$ by setting $\eta(t)=0$ in the above procedure.

To explicitly analyze the contribution of each force driving the sampling trajectory, we define the guidance components. The total drift $b_{\text{det}}(\bm{x}_t)$ is defined as the finite difference velocity of the deterministic update:
\begin{equation}
    b_{\text{det}}(\bm{x}_t) = \frac{\bm{x}_{t+\Delta t}^{\text{det}} - \bm{x}_t}{\Delta t}.
\end{equation}
We decompose $b_{\text{det}}$ into $b_{\text{prior}}$, $b_{\text{cfg}}$, and $b_{\text{dc}}$:
\begin{equation}
\label{eq:drfits_definition}
\scalebox{0.92}{$\begin{aligned}
    b_{\text{prior}}(\bm{x}_t) \; &= v_\theta(\bm{x}_t; \emptyset), \\
    b_{\text{cfg}}(\bm{x}_t; c) &= \lambda(t) \big( v_\theta(\bm{x}_t; c) - v_\theta(\bm{x}_t; \emptyset) \big) = \lambda(t) \tilde{b}_{\text{cfg}}(\bm{x}_t; c), \\
    b_{\text{dc}}(\bm{x}_t;\bm{y}) &= b_{\text{det}}(\bm{x}_t) - b_{\text{prior}}(\bm{x}_t) - b_{\text{cfg}}(\bm{x}_t) = \beta(t) \tilde b_{\mathrm{dc}}(\bm{x}_t;\bm{y}).
\end{aligned}$}
\end{equation}
Here, $b_{\text{dc}}(\bm{x}_t;\bm{y})$ is the effective correction applied by the measurement gradients. The terms $\tilde b_{\mathrm{cfg}}$ and $\tilde b_{\mathrm{dc}}$ denote the unit scale guidance for CFG and DC guidance, respectively.

\subsection{TriPS Backbone Sampler}
\label{subsec:trips_backbone}
Based on the flow-based posterior sampling framework, we present the TriPS sampler. The primary distinction of this approach is that the DC guidance scale $\beta(t)$, CFG scale $\lambda(t)$, and stochasticity scale $\eta(t)$ are formulated as dynamic, time-varying functions rather than static constants. This formulation allows the sampler to adaptively regulate the sampling dynamics across different noise levels. For the DC loss function $\mathcal{L}(\cdot, \bm{y})$ in Eq.~\eqref{eq:dc_update}, we employ a hybrid update scheme that interpolates between Back-Projection and Least-Square objectives for robust reconstruction~\cite{ddpg}. The complete procedure is detailed in Appendix~\ref{app:sec_algo}.

\begin{figure}[t]
  \centering
  \includegraphics[width=0.50\textwidth]{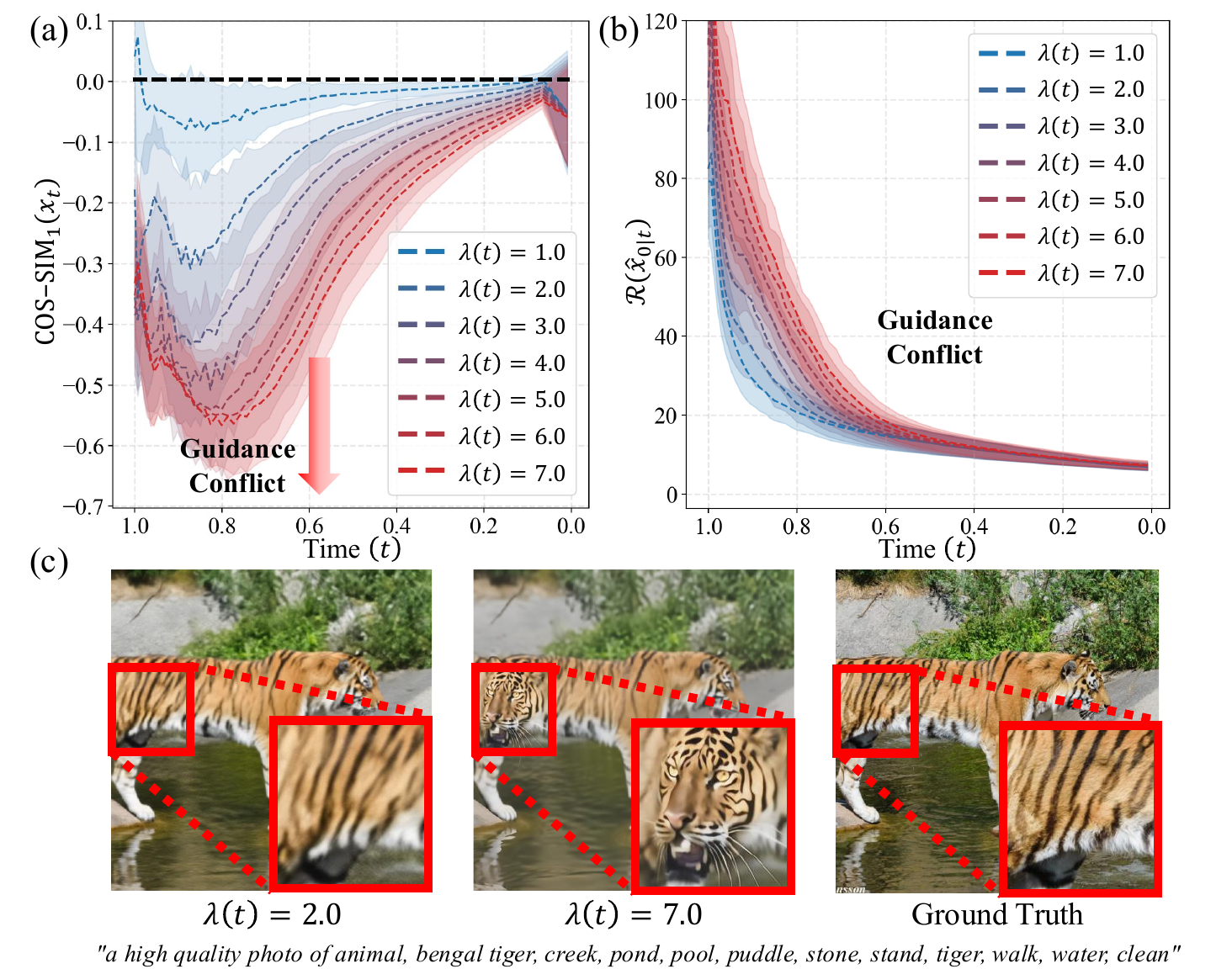}
  \caption{Early stage guidance conflict on super-resolution $\times8$. (a) Cosine similarity $\text{COS-SIM}_{1}(\bm{x}_t)$ between the unit-scale DC guidance $\tilde{b}_{\mathrm{dc}}$ and the unit-scale CFG $\tilde{b}_{\mathrm{cfg}}$ across sampling timesteps, shown for varying CFG scales $\lambda(t)$. As $\lambda(t)$ increases, $\text{COS-SIM}_{1}(\bm{x}_t)$ at early timesteps becomes more negative. (b) Decay of the squared residual norm $\mathcal{R}(\hat{\bm{x}}_{0|t})$, defined in Proposition~\ref{prop:cfg_slows_dc}, for varying CFG scales $\lambda(t)$. (c) Visual comparison on the DIV2K dataset. High CFG scales lead to semantic hallucinations (\textit{e.g.}, distorted tiger patterns) that deviate from the measurement $\bm{y}$.
  }
  \vspace{-1em}
  \label{fig:analysis_dc_cfg}
\end{figure}

\section{Analysis: Triadic Coupling Dynamics}
\label{sec:analysis}
This section analyzes the triadic coupling dynamics in posterior sampling, governed by DC guidance, CFG, and stochasticity. 
We formalize the early-stage conflict of DC guidance, CFG and demonstrate how stochasticity regularizes sampling trajectories towards higher-probability regions.

\subsection{Early-Stage Guidance Conflict}
\label{subsec:dc_cfg_conflict}
The decomposition in $b_{\text{det}}(\bm{x}_t)$ in Eq.~\eqref{eq:drfits_definition} incorporates two potentially competing components: $b_{\text{dc}}(\bm{x}_t;\bm{y})$, which enforces DC to align with the measurements $\bm{y}$, and $b_{\text{cfg}}(\bm{x}_t; c)$, which steers the sampling trajectory toward conditioning text prompt. Since these guidance originate from distinct objectives, their update directions are misaligned, leading to a conflict of DC guidance and CFG. To formalize this, we analyze the behavior of the residual norm $\mathcal{R}(\hat{\bm{x}}_{0|t}) \triangleq \|\bm{y} - \mathcal{A}\hat{\bm{x}}_{0|t}\|_2^2$ with respect to the CFG scale $\lambda(t)$. This derivative is validated through super-resolution $\times8$, averaged over 100 samples on the DIV2K dataset~\cite{div2k}, where we utilize a fixed set of prompts consistent with our baseline.

\begin{proposition}[(first-order derivative of the expected residual norm descent to CFG scale)]
\label{prop:cfg_slows_dc}
Let $\tilde b_{\mathrm{dc}}$, $\tilde{b}_{\mathrm{cfg}}$ be the unit-scale DC guidance, CFG defined in Eq.~\eqref{eq:drfits_definition} and $\mathcal{R}(\hat{\bm{x}}_{0|t}) \triangleq  \|\bm{y} - \mathcal{A}\hat{\bm{x}}_{0|t}\|_2^2$ be the residual norm.

If $\mathcal{R}\in C^2$, the first-order derivative of the next-step expected residual norm conditioned on $\bm{x}_t$ with respect to the CFG scale $\lambda(t)$ is:
\begin{equation}
\label{eq:dlambda}
\begin{aligned}
&\frac{\partial}{\partial \lambda(t)}\,
\mathbb{E}\!\left[\mathcal{R}(\hat{\bm{x}}_{0|t+\Delta t})) \,\middle|\, \bm{x}_t\right]
= \\
&-\Delta t\,\Big\langle \tilde b_{\mathrm{dc}}(\bm{x}_t;y),\,\tilde b_{\mathrm{cfg}}(\bm{x}_t;c)\Big\rangle
+o(\Delta t),
\end{aligned}
\end{equation}
where $\langle \cdot, \cdot \rangle $ denotes the standard Euclidean inner product.
\end{proposition}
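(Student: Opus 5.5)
We treat the statement as a first-order Taylor expansion of $\mathcal{R}$ along one sampling step. The sign convention in \eqref{eq:dlambda} tells us how to set it up: the DC drift is, to leading order, a descent direction for the residual. We therefore identify $\tilde b_{\mathrm{dc}}(\bm{x}_t;\bm{y})$ with $-\nabla\mathcal{R}$ up to a positive factor. For the least-squares loss, $\nabla_{\hat{\bm{x}}_{0|t}}\mathcal{L}\propto \mathcal{A}^\top(\mathcal{A}\hat{\bm{x}}_{0|t}-\bm{y})=\tfrac12\nabla\mathcal{R}$, and we absorb the positive constant into the unit-scale normalization.

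\textbf{Step 1 (write the one-step update).} From the definitions,
\[
\bm{x}_{t+\Delta t} = \bm{x}_t + \Delta t\,\bigl(b_{\text{prior}} + \lambda(t)\tilde b_{\mathrm{cfg}} + \beta(t)\tilde b_{\mathrm{dc}}\bigr) + \bm{\xi}_t ,
\]
where $\bm{\xi}_t=\sigma_{t+\Delta t}\bigl(\tilde{\bm{x}}_{1|t}-\hat{\bm{x}}_{1|t}\bigr)$ is the stochastic part. Conditioned on $\bm{x}_t$, the vector $\bm{\epsilon}$ is independent of $\lambda(t)$. The noise enters $\tilde{\bm{x}}_{1|t}$ additively, and the $\sqrt{1-\eta^2}$ factor multiplies $\hat{\bm{x}}_{1|t}$, whose $\lambda$-dependence is already part of the drift. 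Only the deterministic drift carries the $\lambda(t)$-dependence that enters the leading term.

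\textbf{Step 2 (expand $\mathcal{R}$).} We view $\hat{\bm{x}}_{0|t+\Delta t}$ as the current estimate moved by the same drift, consistent with how the paper plots $\mathcal{R}(\hat{\bm{x}}_{0|t})$. This gives $\hat{\bm{x}}_{0|t+\Delta t}=\hat{\bm{x}}_{0|t}+\Delta t\,b_{\text{det}}+\bm{\xi}_t'+o(\Delta t)$. Since $\mathcal{R}\in C^2$, Taylor's theorem gives
\[
\mathcal{R}(\hat{\bm{x}}_{0|t+\Delta t})=\mathcal{R}(\hat{\bm{x}}_{0|t})+\langle\nabla\mathcal{R},\Delta t\,b_{\text{det}}+\bm{\xi}_t'\rangle+\tfrac12 (\cdot)^\top\nabla^2\mathcal{R}\,(\cdot)+o(\cdot).
\]
We then take the conditional expectation. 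The linear noise term vanishes because $\mathbb{E}[\bm{\epsilon}]=0$. The quadratic noise term depends only on $\eta(t)$ and on quantities whose $\lambda$-derivative carries extra factors of $\Delta t$. This is consistent with the remainder being $o(\Delta t)$ as stated; we take this as the standing assumption (small step, noise variance scaled appropriately).

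\textbf{Step 3 (differentiate in $\lambda(t)$).} Differentiating, the only $O(\Delta t)$ contribution is $\Delta t\langle\nabla\mathcal{R},\tilde b_{\mathrm{cfg}}\rangle$. Substituting $\nabla\mathcal{R}=-\tilde b_{\mathrm{dc}}$ gives $-\Delta t\langle\tilde b_{\mathrm{dc}},\tilde b_{\mathrm{cfg}}\rangle+o(\Delta t)$, which is \eqref{eq:dlambda}.

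\textbf{Main obstacle.} The identification $\tilde b_{\mathrm{dc}}=-\nabla\mathcal{R}$ is only approximate. Here $b_{\text{dc}}$ is defined as a residual drift, it is evaluated at $\hat{\bm{x}}_{0|t}$, which itself depends on $\lambda$, and the paper uses a hybrid BP/LS loss. We would handle this by checking that these discrepancies are either a positive rescaling (irrelevant to the sign of the inner product) or contribute only $O(\Delta t^2)$, i.e.\ $o(\Delta t)$, terms. A second point is controlling the Hessian and noise terms uniformly. Bounded $\nabla^2\mathcal{R}$, which holds since $\mathcal{R}$ is quadratic for linear $\mathcal{A}$, plus finite noise moments suffice for this.
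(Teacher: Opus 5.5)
Your argument is essentially the paper's proof. Both use the same sequence of steps:
\begin{itemize}
\item an Euler--Maruyama step;
\item a second-order Taylor expansion of $\mathcal{R}$ about $\hat{\bm{x}}_{0|t}$, with the state increment substituted directly for the increment of $\hat{\bm{x}}_{0}$;
\item the conditional expectation;
\item differentiation in $\lambda(t)$, giving $\Delta t\,\langle\nabla\mathcal{R},\tilde b_{\mathrm{cfg}}\rangle+o(\Delta t)$;
\item finally the replacement $\nabla\mathcal{R}\approx-\tilde b_{\mathrm{dc}}$.
\end{itemize}
The paper justifies that last replacement empirically: the cosine similarity between $-\nabla\mathcal{R}$ and $\tilde b_{\mathrm{dc}}$ stays near one along the trajectory. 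You justify it through the least-squares gradient instead. Like yours, the paper's check only fixes the direction, not the unit constant.

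One slip to fix is in your treatment of the noise. With your $\bm{\xi}_t$, the piece $\sigma_{t+\Delta t}\bigl(\sqrt{1-\eta^2(t)}-1\bigr)\hat{\bm{x}}_{1|t}$ is neither mean-zero nor $\lambda$-independent. Since $b_{\text{det}}$ is defined at $\eta=0$, this shrinkage is not part of the drift. Its $\lambda$-derivative contributes $\sigma_{t+\Delta t}(1-\sigma_t)\bigl(\sqrt{1-\eta^2(t)}-1\bigr)\langle\nabla\mathcal{R},\tilde b_{\mathrm{cfg}}\rangle$, which is not $o(\Delta t)$ unless $\eta^2(t)=o(\Delta t)$. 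So your claims that the linear noise term vanishes and that only the drift carries the $\lambda$-dependence are false for the actual sampler. The paper sidesteps this by positing additive noise $b_{\text{sto}}(t)\sqrt{\Delta t}\,\xi$ with a $\lambda$-independent coefficient. You should adopt that modeling assumption explicitly rather than claim it follows from the sampler's noise-injection step.
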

(Proof in Appendix~\ref{app:subsec_proof_cfg_slows_dc}). Proposition~\ref{prop:cfg_slows_dc} implies that whenever the CFG opposes the DC guidance direction (\textit{i.e.}, $\langle \tilde b_{\mathrm{dc}}(\bm{x}_t;y), \tilde b_{\mathrm{cfg}}(\bm{x}_t;c)\rangle \le 0$), increasing the CFG scale $\lambda(t)$ impairs the minimization of the expected residual norm. In such cases, a large $\lambda(t)$ slows the descent of the $\mathcal{R}(\hat{\bm{x}}_{0|t})$. To quantify this misalignment, we define the cosine similarity $\text{COS-SIM}_{1}(\bm{x}_t)$ between the unit-scale DC guidance $\tilde b_{\mathrm{dc}}$ and the unit-scale CFG $\tilde b_{\mathrm{cfg}}$:
\begin{equation}
\label{eq:cos_sim_def}
\text{COS-SIM}_{1}(\bm{x}_t) \triangleq \frac{\Big\langle \tilde b_{\mathrm{dc}}(\bm{x}_t;y), \, \tilde b_{\mathrm{cfg}}(\bm{x}_t;c) \Big\rangle}{\big\| \tilde b_{\mathrm{dc}}(\bm{x}_t;y) \big\|_2 \, \big\| \tilde b_{\mathrm{cfg}}(\bm{x}_t;c) \big\|_2},
\end{equation}
Empirical results in Fig.~\ref{fig:analysis_dc_cfg} support this theoretical analysis. Fig.~\ref{fig:analysis_dc_cfg}(a) shows that $\text{COS-SIM}_{1}(\bm{x}_t)$ is predominantly negative at early timesteps (i.e., $t \simeq 1$), indicating that the CFG initially counteracts the DC guidance. As $t \to 0$, $\text{COS-SIM}_{1}(\bm{x}_t)$ approaches zero, suggesting that the directional misalignment diminishes. Consistent with Eq.~\eqref{eq:dlambda}, this directional misalignment causes larger CFG scales $\lambda(t)$ to hinder squared residual norm minimization, as evidenced by the slower decay of $\mathcal{R}(\hat{\bm{x}}_{0|t})$ in Fig.~\ref{fig:analysis_dc_cfg}(b). These observations are qualitatively supported by Fig.~\ref{fig:analysis_dc_cfg}(c), where high CFG scales induce semantic hallucinations that violate DC, whereas lower scales maintain DC.

\begin{figure}[t]
  \centering
  \includegraphics[width=1\linewidth]{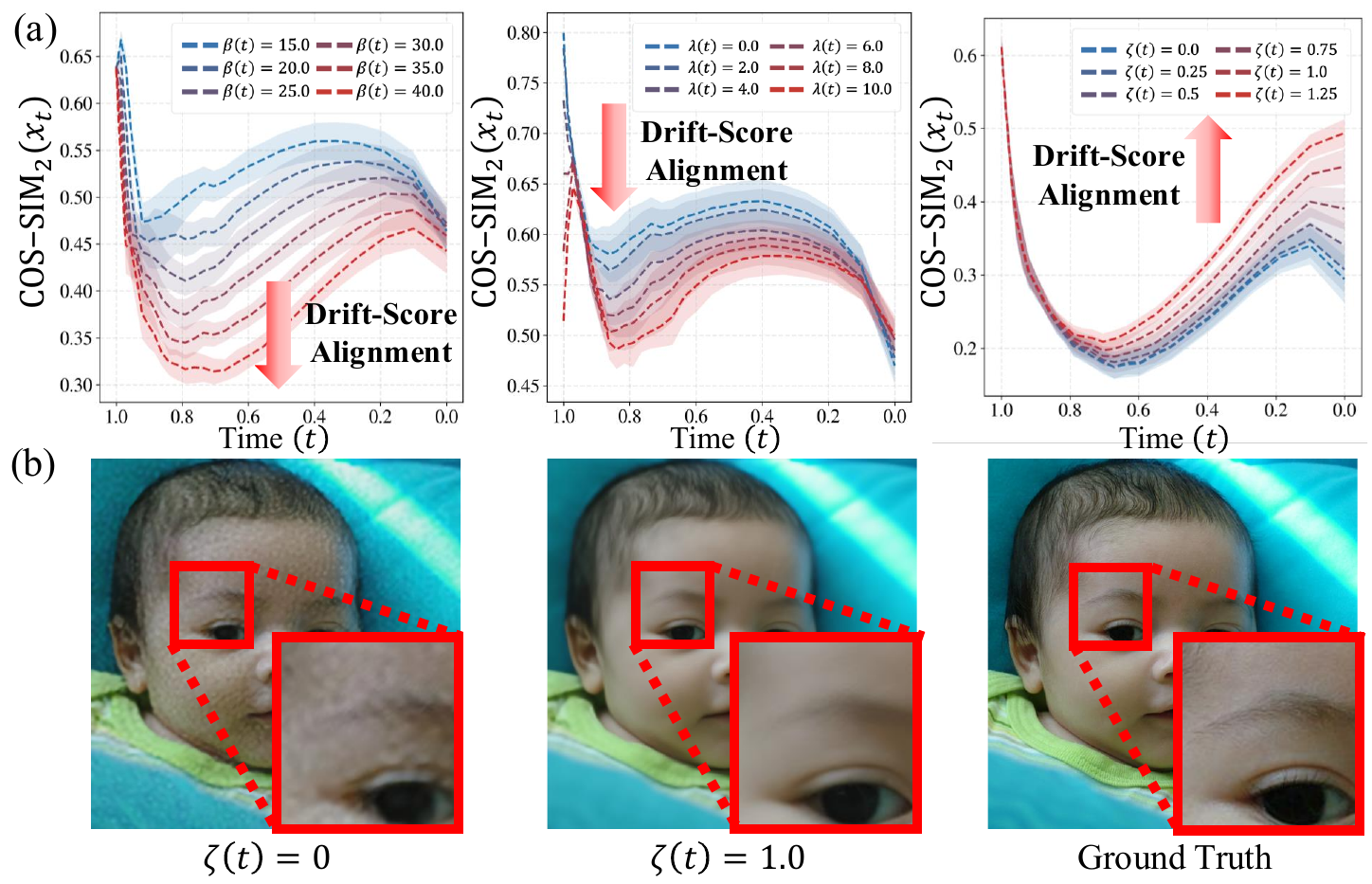}
  \caption{
  Stochasticity as a regularizer for DC guidance and CFG on super-resolution $\times 8$.
  Note that the stochasticity scale is scheduled as $\eta(t) = \zeta(t)\sqrt{1-\sigma_{t+\Delta t}}$ for this analysis.
  (a) Cosine similarity $\text{COS-SIM}_{2}(\bm{x}_t)$ between the total drift $b_{\text{det}}(\bm{x}_t)$ and the score function $\nabla_{\bm{x}_t}\log p_t(\bm{x}_t)$ across sampling timesteps, shown for varying DC guidance scales $\beta(t)$ (left), CFG scales $\lambda(t)$ (middle), and stochasticity scales $\eta(t)$ (right).
  Increasing scales of DC guidance or CFG leads to lower $\text{COS-SIM}_{2}(\bm{x}_t)$ while increasing the stochasticity scale tends to produce higher $\text{COS-SIM}_{2}(\bm{x}_t)$. 
  (b) Visual comparisons on the FFHQ dataset. The proper injection of stochasticity mitigates over-saturation artifacts, thereby restoring perceptual fidelity and improving overall sample quality.
  }
  \vspace{-1em}
  \label{fig:analysis_stochasticity}
\end{figure}


\subsection{Stochasticity as a Regularizer for DC and CFG}
\label{subsec:analysis_sto_regularize}
We examine how stochasticity actively regularizes sampling trajectories to higher-probability regions, specifically counteracting the off-manifold phenomenon, where DC guidance and CFG steer the sampling trajectory away from higher-probability regions.
To quantify how closely the total drift $b_{\text{det}}(\bm{x}_t)$ aligns with the unconditional score function at each timestep $t$,
we define the following cosine similarity:
\begin{equation}
\label{eq:cos_sim_2_def}
\text{COS-SIM}_{2}(\bm{x}_t) \triangleq \frac{\Big\langle b_{\text{det}}(\bm{x}_t), \, \nabla_{\bm{x}_t}\log p_t(\bm{x}_t) \Big\rangle}{\big\| b_{\text{det}}(\bm{x}_t) \big\|_2 \, \big\| \nabla_{\bm{x}_t}\log p_t(\bm{x}_t) \big\|_2},
\end{equation}
where $\nabla_{\bm{x}_t}\log p_t(\bm{x}_t)$ is the unconditional score function pointing toward higher-probability regions of the data distribution. The explicit formulation of the score function within the flow-based framework is detailed in Appendix~\ref{app:subsec_score_from_velocity}.
Larger $\text{COS-SIM}_{2}(\bm{x}_t)$ values indicate that the total drift aligns more strongly with the score direction, steering the sample back to the data manifold, while lower values reflect a direction pointing towards lower-probability regions.

Empirical results in Fig.~\ref{fig:analysis_stochasticity}(a) show that increasing the DC guidance or CFG scale reduces $\text{COS-SIM}_{2}(\bm{x}_t)$, indicating that stronger guidance drives sampling away from higher-probability trajectories. In contrast, increasing the stochasticity scale consistently improves $\text{COS-SIM}_{2}(\bm{x}_t)$, suggesting that appropriately scaled noise enhances alignment with the data distribution. Moreover, Fig.~\ref{fig:analysis_stochasticity}(a) (right) shows that higher stochasticity applied in early sampling stages yields more robust alignment with the score function at later timesteps. This regularizing effect is qualitatively reflected in Fig.~\ref{fig:analysis_stochasticity}(b), where suitable stochasticity suppresses artifacts and restores perceptual fidelity.

To further validate these effects from a marginal distributional perspective, we evaluate the Kernel Inception Distance (KID) between generated samples $\bm{x}_0$ and ground-truth images (see Appendix~\ref{app:subsec_distribution_anal_KID}). Increasing the scale of DC guidance or CFG consistently enlarges the distributional discrepancy, whereas introducing stochasticity reduces this gap by aligning the marginal distribution of generated samples more closely with that of the ground truth. This indicates that stochasticity acts as a regularizer, guiding sampling toward higher-probability regions of the data distribution and counteracting the tendency of DC guidance and CFG to drive samples toward lower-probability regions.

\begin{figure*}[t]
  \centering
  \includegraphics[width=1\textwidth]{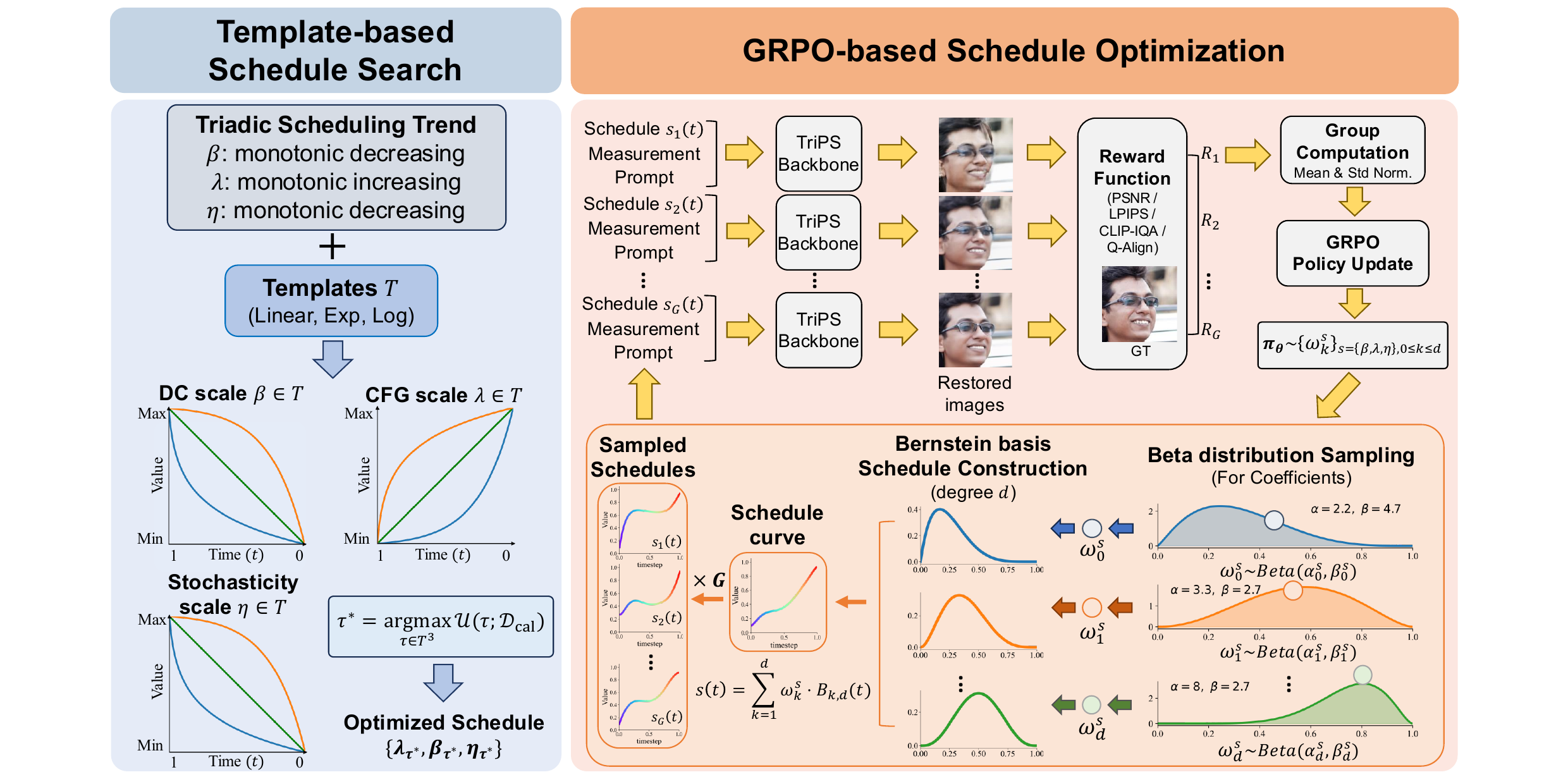}
  \caption{
  Overview of the triadic schedule optimization framework. 
  (Left) Template-based schedule search explores a discrete search space defined by compact templates (Linear, Exp, Log) that satisfy the triadic scheduling trend ($\beta(t)\!\downarrow, \lambda(t)\!\uparrow, \eta(t)\!\downarrow$). 
  (Right) GRPO-based optimization facilitates continuous schedule discovery beyond fixed functional forms. A policy $\pi_\theta$ samples coefficients $w_k^{(s)}$ from learnable Beta distributions to parameterize continuous schedules $s(t)$ via Bernstein polynomials $B_{k,d}(t)$. This formulation strictly constrains $s(t)$ within the predefined range, while $\pi_\theta$ is updated via rewards derived from the restored images. 
  }
  \vspace{-1em}
  \label{fig:method_overview}
\end{figure*}

\subsection{Triadic Scheduling Trend}
\label{subsec:tri_sched_trend}
We adopt a structured scheduling strategy for the DC guidance scale $\beta(t)$, CFG scale $\lambda(t)$, and stochasticity scale $\eta(t)$ that aligns with the coarse-to-fine nature of diffusion sampling. 
In the early high-noise regime, we deliberately set a high DC guidance scale $\beta(t)$ to strongly enforce data consistency with the measurements $\bm{y}$ following a practice in prior works~\cite{ddpg, flowdps}.
This choice forms the foundation of our design, enabling subsequent scheduling decisions to arise naturally from our analyses.
We keep a low $\lambda(t)$ to suppress the early-stage guidance conflict (Sec.~\ref{subsec:dc_cfg_conflict}).
We also set $\eta(t)$ to be a high value because stochasticity in early timestep regularizes the off-manifold phenomenon by promoting alignment of the total drift with the score function in later timestep (Sec.~\ref{subsec:analysis_sto_regularize}).

As the process transitions toward the low-noise regime, focus shifts to detail refinement. With global structure established and the residual norm (Proposition~\ref{prop:cfg_slows_dc}) reduced, the guidance conflict diminishes, allowing increase of $\lambda(t)$ to leverage the mode-seeking properties and sharpen semantics for enhanced fidelity~\cite{ho2022classifier, wang2024analysis}.
Concurrently, we reduce $\beta(t)$ to mitigate the guidance conflict and avoid the over-enforcing of data consistency, which can inject measurement noise into the output~\cite{ddnm}. Finally, we anneal the stochasticity scale $\eta(t)$ to a lower value, as late stage stochasticity introduces sampling errors~\cite{restart} and reducing $\eta(t)$ achieves more precise approximation of the target distribution~\cite{ecva}. This triadic scheduling trend is now summarized as: \textit{Monotonic decrease in $\beta(t)$, increase in $\lambda(t)$ and decrease in $\eta(t)$}.

\section{Method: Triadic Schedule Optimization}
\label{sec: method}
To implement the triadic scheduling trends from Sec.~\ref{subsec:tri_sched_trend}, we propose a framework comprising two complementary paradigms. 
First, template-based schedule search exploits a discrete family of functional forms to identify robust schedule curves.
Second, GRPO-based schedule optimization enables fine-grained schedule adaptation to capture complex temporal curves that transcend the fixed functional templates.
Both approaches utilize the backbone solver described in Sec.~\ref{subsec:trips_backbone} as the underlying sampling mechanism.

\begin{table*}[t!]
    \centering
    \renewcommand{\arraystretch}{0.9} 
    \setlength{\tabcolsep}{0.2em} 
    \caption{Quantitative comparison on linear inverse problems. Results are averaged over 1,000 FFHQ and 800 DIV2K samples using 28 NFEs with Gaussian noise ($\sigma_n = 0.03$). \textbf{Bold} and \underline{underline} denote the best and second-best performance, respectively.}
    \resizebox{0.99\textwidth}{!}{
    \begin{tabular}{l|ccccc|ccccc|ccccc|ccccc}
    \toprule
    \multicolumn{21}{c}{\textbf{Flow Matching Model (SD3.5-M)}}\\
    \midrule
    & \multicolumn{5}{c|}{Super-Resolution $\times$8}
    & \multicolumn{5}{c|}{Super-Resolution $\times$12}
    & \multicolumn{5}{c|}{Motion Deblurring}
    & \multicolumn{5}{c}{Gaussian Deblurring} \\
    \midrule
    Method
    & PSNR$\uparrow$ & SSIM$\uparrow$ & FID$\downarrow$ & LPIPS$\downarrow$ & MUSIQ$\uparrow$
    & PSNR$\uparrow$ & SSIM$\uparrow$ & FID$\downarrow$ & LPIPS$\downarrow$ & MUSIQ$\uparrow$
    & PSNR$\uparrow$ & SSIM$\uparrow$ & FID$\downarrow$ & LPIPS$\downarrow$ & MUSIQ$\uparrow$
    & PSNR$\uparrow$ & SSIM$\uparrow$ & FID$\downarrow$ & LPIPS$\downarrow$ & MUSIQ$\uparrow$ \\
    \midrule
    \multicolumn{21}{c}{\textbf{FFHQ (768 $\times$ 768)}}\\
    \midrule
    ReSample & 24.65 & 0.708 & 98.07 & 0.196 & 32.06 & 
    22.96 & 0.632 & 172.19 & 0.357 & 21.09 & 
    25.54 & 0.747 & 96.62 & 0.216 & 31.65 & 
    26.37 & 0.746 & 97.47 & 0.160 & 31.50 \\
    FlowChef & 27.53 & 0.759 & 57.24 & 0.147 & 49.05 & 
    26.38 & 0.731 & 110.15 & 0.209 & 38.70 & 
    24.88 & 0.716 & 63.48 & 0.237 & 39.22 & 
    27.30 & 0.754 & 46.40 & 0.152 & 44.26 \\
    FlowDPS & 27.92 & \underline{0.772} & \underline{23.80} & 0.120 & \underline{54.36} & 
    26.84 & 0.745 & 30.54 & \underline{0.156} & 48.33 & 
    25.15 & 0.721 & 43.18 & 0.222 & 48.51 & 
    26.02 & 0.731 & 45.00 & 0.204 & 47.11 \\
    FLAIR & \underline{28.88} & 0.768 & 55.51 & 0.123 & 52.26 & 
    27.25 & 0.752 & \underline{29.31} & 0.158 & 46.63 & 
    28.80 & 0.695 & 21.57 & 0.095 & 52.48 & 
    28.60 & 0.729 & \textbf{18.41} & 0.090 & \underline{54.71} \\
    \midrule
    \rowcolor{red!10}
    $\text{TriPS}_{\text{T}}$ (Ours) & \textbf{29.03} & \textbf{0.789} & 26.66 & \underline{0.113} & 53.65 & 
    \textbf{27.45} & \underline{0.754} & 35.13 & 0.161 & \underline{52.63} & 
    \textbf{31.20} &  \underline{0.809} & \underline{17.28} & \underline{0.060} & \underline{63.52} & 
    \textbf{29.95} & \textbf{0.804} & 25.02 & \underline{0.084} & 51.12 \\
    \rowcolor{green!10}
    $\text{TriPS}_{\text{G}}$ (Ours) & 28.55 & 0.762 & \textbf{22.18} & \textbf{0.107} & \textbf{61.69} & 
    \underline{27.38} & \textbf{0.762} & \textbf{28.22} & \textbf{0.154} & \textbf{53.92} & 
    \textbf{31.20} & \textbf{0.813} & \textbf{15.89} & \textbf{0.059} & \textbf{64.37} & 
    \underline{29.60} & \underline{0.782} & \underline{21.21} & \textbf{0.074} & \textbf{61.92} \\
    \midrule
    \multicolumn{21}{c}{\textbf{DIV2K (768 $\times$ 768)}}\\
    \midrule
    ReSample & 20.55 & 0.535 & 75.67 & 0.238 & 26.82 & 
    19.54 & 0.459 & 119.22 & 0.372 & 20.57 & 
    21.47 & 0.588 & 61.79 & 0.222 & 31.17 & 
    21.74 & 0.559 & 80.79 & 0.221 & 24.22 \\
    FlowChef & 22.08 & 0.561 & 47.47 & 0.213 & 41.70 & 
    20.88 & 0.508 & 84.28 & 0.297 & 35.94 & 
    19.62 & 0.482 & 74.01 & 0.366 & 41.24 & 
    21.57 & 0.532 & 52.84 & 0.251 & 39.52 \\
    FlowDPS & 22.14 & 0.545 & 35.18 & 0.175 & 39.87 & 
    21.07 & 0.469 & 48.36 & \textbf{0.251} & 33.47 & 
    19.88 & 0.473 & 52.23 & 0.322 & 40.01 & 
    20.46 & 0.473 & 58.56 & 0.307 & 36.80 \\
    FLAIR & \underline{22.90} & 0.592 & 41.23 & 0.167 & 42.30 & 
    21.12 & \underline{0.520} & \underline{42.16} & 0.256 & 46.23 & 
    23.90 & 0.614 & 22.17 & 0.129 & 52.24 & 
    22.70 & 0.561 & 32.26 & 0.157 & \underline{44.96} \\
    \midrule
    \rowcolor{red!10}
    $\text{TriPS}_{\text{T}}$ (Ours) & \textbf{23.05} & \textbf{0.607} & \underline{31.80} & \textbf{0.158} & \underline{45.24} & 
    \textbf{21.27} & 0.518 & 43.74 & \underline{0.255} & \underline{50.14} & 
    \textbf{26.29} & \textbf{0.728} & \underline{15.49} & \textbf{0.066} & \underline{59.16} & 
    \underline{23.94} & \textbf{0.646} & \underline{28.57} & \underline{0.123} & 44.47 \\
    \rowcolor{green!10}
    $\text{TriPS}_{\text{G}}$ (Ours) & 22.78 & \underline{0.594} & \textbf{27.84} & \underline{0.163} & \textbf{50.14} & 
    \underline{21.13} & \textbf{0.531} & \textbf{37.48} & 0.257 & \textbf{52.26} & 
    \underline{26.19} & \underline{0.715} & \textbf{14.94} & \textbf{0.066} & \textbf{59.89} & 
    \textbf{23.97} & \underline{0.644} & \textbf{26.88} & \textbf{0.121} & 
    \textbf{45.19} \\
    \bottomrule
    \end{tabular}
    }
    \vspace{-1em}
    \label{tab:quanti_flow}
\end{table*}

\subsection{$\text{TriPS}_{\text{T}}$: Template-based Schedule Search} \label{subsec:template_opt}
To instantiate the triadic scheduling trends identified in Sec.~\ref{subsec:tri_sched_trend}, we first perform a coarse optimization over a discrete template family $\mathcal{T} = \{\text{linear, exponential, logarithmic}\}$ as shown in Fig.~\ref{fig:method_overview}. These templates are selected to span a diverse range of temporal dynamics while strictly adhering to the established monotonicity constraints. Specifically, linear templates maintain a constant rate of change across all sampling timesteps. The exponential templates are designed to capture trajectories with an accelerated shift which is characterized by gradual adjustments in early timesteps followed by rapid transitions toward the end, whereas logarithmic templates model the inverse behavior. By constraining the search to these functional priors, we transform the high-dimensional optimization of per-timestep parameters into a low-dimensional selection problem, significantly reducing the degrees of freedom to ensure practical search efficiency.

Each candidate schedule is parameterized as a set $\tau \in \mathcal{T}^3$, with magnitudes bounded by $\lambda(t) \in [1.0, 6.0]$, $\eta(t) \in [0, 1]$, and $\beta(t) \in [\beta_{\min}^{\text{T}}, \beta_{\max}^{\text{T}}]$. The optimal template-based schedule $\tau^\star$ is identified via a systematic grid search maximizing a multi-objective utility function $\mathcal{U}$ on a small calibration set $\mathcal{D}_{\text{cal}}$, balancing fidelity (PSNR) and perceptual quality (LPIPS~\cite{lpips}):
\begin{equation}
\tau^\star \;=\; \arg\max_{\tau\in\mathcal{T}^3}\; \mathcal{U}(\tau;\mathcal{D}_{\text{cal}}).
\label{eq:template_search}
\end{equation}
The resulting schedule $\mathbf{S}_{\text{T}} = \{\lambda_{\tau^\star}, \beta_{\tau^\star}, \eta_{\tau^\star}\}$ provides a robust baseline that also serves as principled warm-start for the subsequent GRPO-based schedule optimization.

\subsection{$\text{TriPS}_{\text{G}}$: GRPO-based Schedule Optimization} \label{subsec:grpo_opt}
\paragraph{Bernstein-Beta schedule parameterization}
\label{par: schedule_parameterization}
To capture complex temporal dynamics that exceed the representational capacity of fixed templates, we introduce a data-driven schedule optimizaiton framework using Group Relative Policy Optimization (GRPO)~\cite{shao2024deepseekmath}, which estimates policy gradients via group-wise reward standardization without a value function. We parameterize each schedule component $s = \{\lambda, \beta, \eta\}$ as a continuous curve: 
\begin{equation}
\tilde{s}(t) = \sum_{k=0}^d w_k^{(s)} B_{k,d}(t),
\end{equation}
where $w_k^{(s)}$ denotes coefficients and $B_{k,d}(t) \;=\; \binom{d}{k}\, t^k (1-t)^{d-k}$ represents the Bernstein basis functions of degree $d$.
We exploit the convex hull property of Bernstein polynomials and the bounded support of Beta distributions to intrinsically confine the optimization within physically valid ranges and mitigate policy divergence during GRPO exploration. Specifically, we define a stochastic policy $\pi_\theta$ where the optimization variables $\theta = \{a_k^{(s)}, b_k^{(s)}\}_{s,k}$ parameterizes Beta distributions from which coefficients are sampled as:
\begin{equation}
w^{(s)}_k \sim \mathrm{Beta}\big(a^{(s)}_k,b^{(s)}_k\big), \quad s=\{\lambda,\beta,\eta\}
\end{equation}
Crucially, since the Beta samples are bounded in $(0,1)$ and the Bernstein basis forms a partition of unity ($\sum_k B_{k,d}(t)=1$), the resulting curve $\tilde{s}(t)$ represents a convex combination guaranteed to lie within $(0,1)$ for all $t$.
These normalized curves are mapped to physical scales via affine rescaling $s(t)= s_{\min} + (s_{\max}-s_{\min})\,\tilde{s}(t)$.
The boundaries $[s_{\min}, s_{\max}]$ for each parameter $s$ are fixed constants shared across all tasks and datasets which are detailed in the Appendix~\ref{app:subsec_implement_grpo}.
We denote the aggregated vector of all sampled coefficients as $\mathbf{w} = \{w_k^{(s)}\}_{s = \{\lambda,\beta,\eta\}, 0\le k \le d}$.

\paragraph{Hybrid IQA reward and group relative policy optimization.}
We drive optimization with a hybrid reward $R = w_{\text{dist}}R_{\text{dist}} + w_{\text{perc}}R_{\text{perc}}$ that combines distortion (PSNR) and perceptual metrics (LPIPS~\cite{lpips}, CLIP-IQA+~\cite{clipiqa}, Q-Align~\cite{qalign}). All metrics are unified to a monotonically increasing scale to ensure consistent optimization direction.
At each update step, we sample a group of $G$ coefficient vectors $\{\mathbf{w}_i\}_{i=1}^G$ from the current policy $\pi_{\theta_{\text{old}}}$, compute their advantages $\hat{A}_i$ via group-wise standardization, and updates the policy using the clipped surrogate objective:
\begin{equation}
\begin{aligned}
\max_{\theta}\;
\mathbb{E}_{i}\Big[
\min\!\big(r_i(\theta)\hat{A}_i,\,
\mathrm{clip}(r_i(\theta),1-\epsilon,1+\epsilon)\hat{A}_i\big) \\
-\beta_{\text{KL}}\,D_{\mathrm{KL}}\!\big(\pi_\theta\,\|\,\pi_{\text{ref}}\big)
\Big]
\end{aligned}
\raisetag{1.5ex}
\end{equation}
where $r_i(\theta)=\pi_\theta(\mathbf{w}_i)/\pi_{\theta_{\text{old}}}(\mathbf{w}_i)$ and $\pi_{\text{ref}}$ is the fixed policy reference whose parameters are initialized to approximate the schedule $\mathbf{S}_{\text{T}}$ obtained via template-based schedule search.
The overall optimization pipeline is illustrated in Fig.~\ref{fig:method_overview}.
The resulting schedule enables a controller of the perception-distortion trade-off by discovering complex temporal dynamics inaccessible to fixed functional forms.

\begin{figure*}[t]
  \centering
  \includegraphics[width=0.95\textwidth]{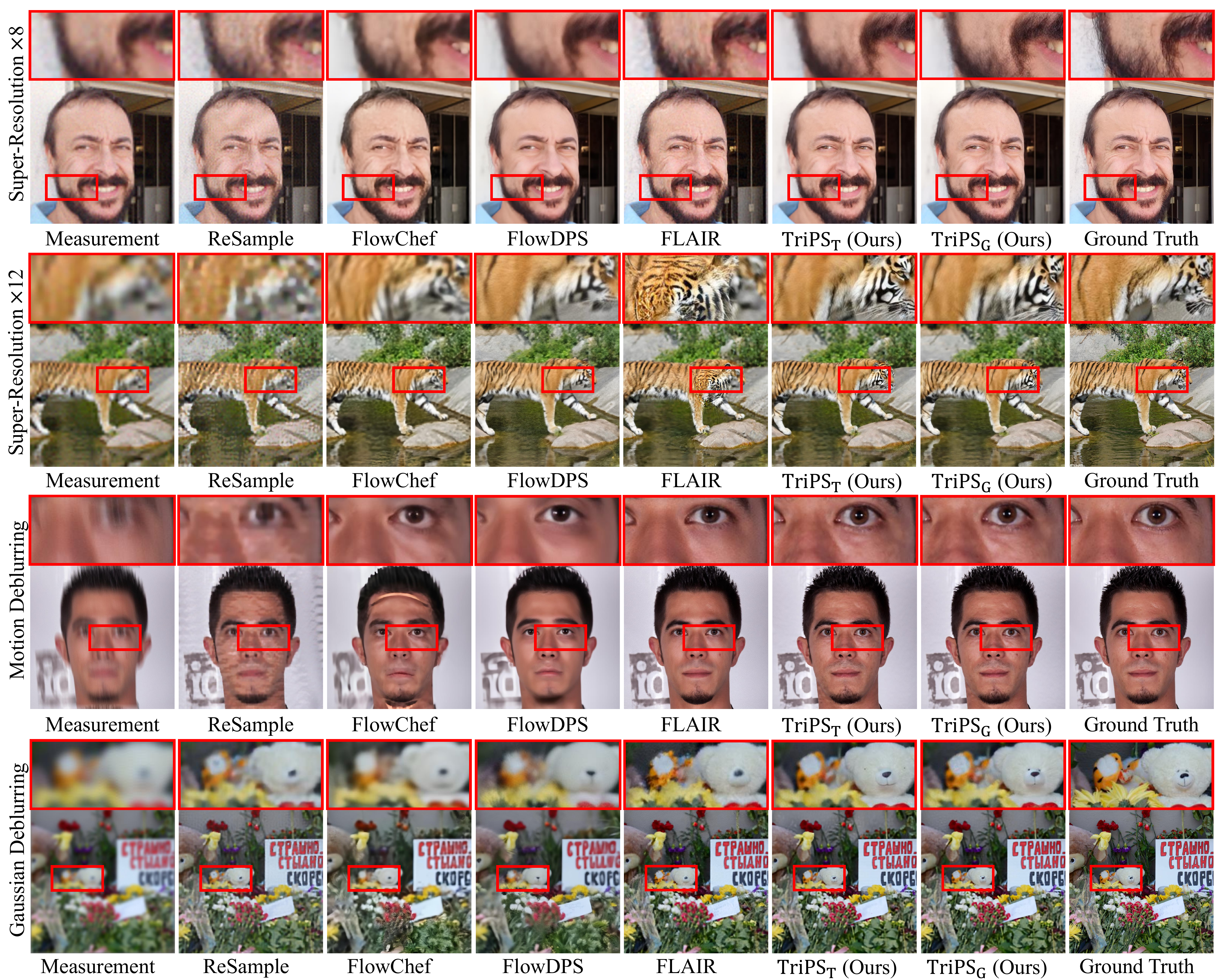}
  \caption{
  Qualitative comparison for FFHQ and DIV2K datasets on linear inverse problems. Best view in zoom. 
  }
  \label{fig:quali_flow}
\end{figure*}

\section{Experiments}
\label{sec: exp}

\subsection{Experimental Setup}
\paragraph{Datasets and metrics} \label{par:data_metric}
We evaluate our method on the high-resolution FFHQ~\cite{ffhq} ($1024^2$, 1k samples) and DIV2K~\cite{div2k} (2K, 800 samples) benchmarks. Quantitative assessment employs PSNR and SSIM~\cite{ssim} as distortion metrics, alongside LPIPS~\cite{lpips} and patch-based FID~\cite{fid} for perceptual realism. We additionally report MUSIQ~\cite{musiq}, a no-reference IQA metric, to effectively capture perception at high resolutions.

\paragraph{Baselines}
To evaluate our method, we benchmark against state-of-the-art baselines categorized by their generative backbone. For flow matching models (Stable Diffusion 3.5-Medium, $768^2$, NFE 28), we evaluate ReSample~\cite{resample}, FlowChef~\cite{flowchef}, FlowDPS~\cite{flowdps}, and FLAIR~\cite{flair}. Detailed hyperparameter configurations are provided in the Appendix~\ref{app:subsec_baseline_hyperpar}

\paragraph{Problem setting}
We evaluate our approach on multiple inverse problems under task-specific degradation settings. Within the flow matching framework, we test super-resolution 8$\times$, 12$\times$ using bicubic downsampling, motion and Gaussian deblurring with $61\times61$ kernels at intensity 0.5 and standard deviation 3.0, respectively. 
In all cases, a Gaussian noise with a standard deviation of 0.03 (1.5\% of the pixel range) is added. We use the fixed prompt "A high quality photo of a face" for FFHQ and image-specific text prompts generated by DAPE~\cite{seesr} for DIV2K.


\subsection{Experimental Results}
\label{subsec:main_exp_results}
\paragraph{Flow matching model}
As detailed in Table~\ref{tab:quanti_flow}, $\text{TriPS}_{\text{T}}$ consistently outperforms flow matching baselines. $\text{TriPS}_{\text{G}}$ further advances these results, significantly improving perceptual metrics while maintaining high measurement consistency. These quantitative gains are demonstrated in Fig.~\ref{fig:quali_flow}, where both variants effectively mitigate structural artifacts and noise present in baseline reconstructions thereby enhancing visual quality. Extended evaluations on diffusion model backbones are provided in the Appendix~\ref{app:subsec_trips_diffusion}.

\begin{table}[t!]
    \centering
    \renewcommand{\arraystretch}{0.9}
    \setlength{\tabcolsep}{1.25em}
    \caption{
    Quantitative comparison of the $\text{TriPS}_{\text{G}}$, optimized for SR $\times 8$, against baselines under different degradation settings: (top) Gaussian deblurring and (bottom) SR $\times 12$ on FFHQ 100 samples.
    }
    \label{tab:transfer_two_tasks_compact}
    \resizebox{0.405\textwidth}{!}{
    \begin{tabular}{ccc|cccc}
        \toprule
        \multicolumn{3}{c|}{Method} &
        PSNR$\uparrow$ & SSIM$\uparrow$ & KID$\downarrow$ & LPIPS$\downarrow$ \\
        \midrule
        \multicolumn{7}{c}{\textbf{Gaussian Deblurring}} \\
        \midrule
        \multicolumn{3}{l|}{Resample} & 26.47 & \underline{0.755} & 0.101 & 0.157 \\
        \multicolumn{3}{l|}{FlowChef} & 27.59 & \textbf{0.768} & 0.026 & 0.140 \\
        \multicolumn{3}{l|}{FlowDPS} & 26.16 & 0.743 & 0.034 & 0.194 \\
        \multicolumn{3}{l|}{FLAIR} & \underline{27.74} & 0.720 & \textbf{0.012} & \underline{0.109} \\
        \rowcolor{green!10}
        \multicolumn{3}{l|}{$\text{TriPS}_{\text{G}}$ on SR$\times$8} & \textbf{28.90} & 0.745 & \underline{0.014} & \textbf{0.089} \\
        \midrule
        \multicolumn{7}{c}{\textbf{Super-Resolution $\times$12}} \\
        \midrule
        \multicolumn{3}{l|}{Resample} & 22.92 & 0.641 & 0.156 & 0.355 \\
        \multicolumn{3}{l|}{FlowChef} & 26.48 & 0.743 & 0.089 & 0.201 \\
        \multicolumn{3}{l|}{FlowDPS} & 26.93 & 0.756 & 0.020 & 0.147 \\
        \multicolumn{3}{l|}{FLAIR} & \underline{27.51} & \underline{0.765} & \underline{0.017} & \underline{0.148} \\
        \rowcolor{green!10}
        \multicolumn{3}{l|}{$\text{TriPS}_{\text{G}}$ on SR$\times$8} & \textbf{28.80} & \textbf{0.774} & \textbf{0.012} & \textbf{0.099} \\
        \bottomrule
    \end{tabular}}
    \vspace{-1em}
\end{table}

\paragraph{Schedule transferability on degradation shifts}
\label{par:sched_general}
We evaluate whether GRPO-based schedule optimization generalize to different degradation settings. Specifically, we transfer the triadic schedules ($\lambda(t), \beta(t), \eta(t)$) optimized on SR$\times$8 directly to Gaussian deblurring and SR$\times$12. As shown in Table~\ref{tab:transfer_two_tasks_compact}, $\mathrm{TriPS}_{\mathrm{G}}$ outperforms established flow matching baselines across both distortion and perceptual metrics. 

\paragraph{Diffusion model} \label{par:diffusion_model}
We further validate our method within the diffusion based framework, implemented on Stable Diffusion 1.5 ($512^2$, NFE 50), against established solvers including PSLD~\cite{psld}, DDPG~\cite{ddpg}, P2L~\cite{p2l}, and TReg~\cite{treg}, with all baseline hyperparameters carefully tuned and reported in Appendix~\ref{par:diffusion_implementation}. As shown in Table~\ref{tab:quanti_diffusion}, the proposed method consistently achieves superior performance over all competing approaches, with qualitative comparisons provided in Appendix~\ref{app:subsec_trips_diffusion}.

\begin{table}[t!]
    \centering
    \renewcommand{\arraystretch}{0.9} 
    \setlength{\tabcolsep}{0.6em} 
    \caption{Quantitative comparison on linear inverse problems based on diffusion model. Evalutation metrics are averaged over 1,000 samples from the FFHQ dataset using 50 NFEs under additive Gaussian noise ($\sigma_n = 0.03$). For each metric, the best and second-best results are indicated in \textbf{bold} and \underline{underline}, respectively.}
    \resizebox{0.48\textwidth}{!}{
    
        \begin{tabular}{l|cccc|cccc}
        \toprule
        \multicolumn{9}{c}{\textbf{Diffusion Model (SD1.5)}}\\
        \midrule
        & \multicolumn{4}{c|}{Super-Resolution $\times$8} & \multicolumn{4}{c}{Motion Deblurring} \\
        \midrule
        Method &
        PSNR$\uparrow$ & SSIM$\uparrow$ & FID$\downarrow$ & LPIPS$\downarrow$ &PSNR$\uparrow$ & SSIM$\uparrow$ & FID$\downarrow$ & LPIPS$\downarrow$ \\
        \midrule
        PSLD & 21.61 & 0.636 & 86.96 & 0.399 & 21.41 & 0.582 & 84.59 & 0.406 \\
        DDPG & 22.42 & 0.689 & 116.39 & 0.308 & 24.48 & 0.548 & 100.78 & 0.275 \\
        P2L & 25.31 & 0.711 & 95.27 & 0.271 & 25.52 & 0.656 & 55.15 & 0.230 \\
        TReg & 26.34 & 0.733 & 128.19 & 0.254 & 26.36 & 0.669 & 57.94 & 0.200 \\
        \rowcolor{red!10}
        $\text{TriPS}_{\text{T}}$ (Ours) & \underline{27.07} & \underline{0.737} & \underline{52.91} & \underline{0.192} & \underline{28.19} & \underline{0.779} & \underline{32.58} & \underline{0.164} \\
        \rowcolor{green!10}
        $\text{TriPS}_{\text{G}}$ (Ours) & \textbf{28.02} & \textbf{0.782} & \textbf{43.04} & \textbf{0.164} & \textbf{28.39} & \textbf{0.782} & \textbf{26.37} & \textbf{0.162} \\
        \bottomrule
        \end{tabular}
    }
    \label{tab:quanti_diffusion}
\end{table}

\begin{table}[t!]
    \centering
    \renewcommand{\arraystretch}{0.9} 
    \setlength{\tabcolsep}{1.3em} 
    \caption{
    Ablation studies of TriPS evaluated on 100 FFHQ samples. $\text{TriPS}_{\text{G}}$ is optimized via GRPO starting from $\text{TriPS}_{\text{T}}$ baseline.
    }
    \resizebox{0.45\textwidth}{!}{
    
        \begin{tabular}{ccc|cccc}
        \toprule
        \multicolumn{7}{c}{\textbf{(a) Reward Guided Perception-Distortion Controller}}\\
        \midrule
        \multicolumn{3}{l|}{Case} &
        PSNR$\uparrow$ & SSIM$\uparrow$ & KID$\downarrow$ & LPIPS$\downarrow$ \\
        \midrule
        \multicolumn{3}{l|}{$\text{TriPS}_{\text{G}}^{\text{Dist}}$} & \textbf{30.62} & \textbf{0.840} & 0.036 & 0.072 \\
        \midrule
        \multicolumn{3}{l|}{$\text{TriPS}_{\text{T}}$} & \underline{30.33} & \underline{0.822} & \underline{0.016} & \underline{0.070} \\
        \midrule
        \multicolumn{3}{l|}{$\text{TriPS}_{\text{G}}^{\text{Perc}}$} & 29.89 & 0.793 & \textbf{0.011} & \textbf{0.067} \\
        \midrule
        \multicolumn{7}{c}{\textbf{(b) Joint Triadic Schedule Optimization}}\\
        \midrule
        DC & CFG & Stoch. &
        PSNR$\uparrow$ & SSIM$\uparrow$ & KID$\downarrow$ & LPIPS$\downarrow$ \\
        \midrule
        \xmark & \xmark & \xmark & 29.04 & 0.783 & 0.121 & 0.155 \\
        \cmark & \xmark & \xmark & 29.43 & \underline{0.808} & \underline{0.117} & \underline{0.125} \\
        \xmark & \cmark & \xmark & 29.05 & 0.784 & 0.121 & 0.154 \\
        \xmark & \xmark & \cmark & \underline{29.51} & 0.804 & 0.120 & 0.133 \\
        \rowcolor{red!10}
        \cmark & \cmark & \cmark & \textbf{29.77} & \textbf{0.820} & \textbf{0.072} & \textbf{0.101} \\
        \bottomrule
        \end{tabular}
    }
    \vspace{-1.0em}
    \label{tab:ablation_studies}
\end{table}

\subsection{Ablation Studies}
\label{subsec:ablation_studies}
\paragraph{Reward guided control of the perception–distortion trade-off}
The GRPO-based optimization navigates the perception–distortion trade-off by modulating weights in the reward function $R = w_{\text{dist}}R_{\text{dist}} + w_{\text{perc}}R_{\text{perc}}$. 
Initialized from $\text{TriPS}_{\text{T}}$ schedules, we optimize two variants for Gaussian deblurring on FFHQ dataset: $\text{TriPS}_{\text{G}}^{\text{Dist}}$ (($w_{\text{dist}}=0.9, w_{\text{perc}}=0.1$)) and $\text{TriPS}_{\text{G}}^{\text{Perc}}$ ($w_{\text{dist}}=0.3, w_{\text{perc}}=0.7$). As shown in Fig.~\ref{fig:grpo_results_overall}(a), each variant successfully maximizes its targeted reward while the alternate reward decreases during optimization. This aligns with the results in Table~\ref{tab:ablation_studies}(a), confirming that each variant effectively enhances performance in its intended direction. Furthermore, these trends are visually reflected in Fig.~\ref{fig:grpo_results_overall}(b), where $\text{TriPS}_{\text{G}}^{\text{Perc}}$ restores sharper textures and $\text{TriPS}_{\text{G}}^{\text{Dist}}$ preserves high data fidelity without structural artifacts. The resulting triadic schedules for DC guidance, CFG, and stochasticity are presented in Fig.~\ref{fig:grpo_results_overall}(c). Detailed analysis of these curves are provided in the Appendix~\ref{app:subsec_ablation_more_discussion_PD_control}.

\paragraph{Impact of joint triadic schedule optimization}
To validate the necessity of joint triadic optimization, we compare our fully scheduled $\text{TriPS}_{\text{T}}$ approach against baselines where only one component, such as DC guidance, CFG, or stochasticity, is optimized while the others are fixed to its search range's mean. Evaluated on the SR $\times8$ task, the results in Table~\ref{tab:ablation_studies}(b) demonstrate that the the simultaneous optimization of all three temporal schedules consistently yields superior performance compared to the partially fixed variants. These results show that leveraging their interplay in time-varying scheduling plays a constructive role in enhancing restoration quality. Additional results are provided in Appendix~\ref{app:subsec_ablation_more_joint}.

\begin{figure}[t!]
  \centering
  \includegraphics[width=0.48\textwidth]{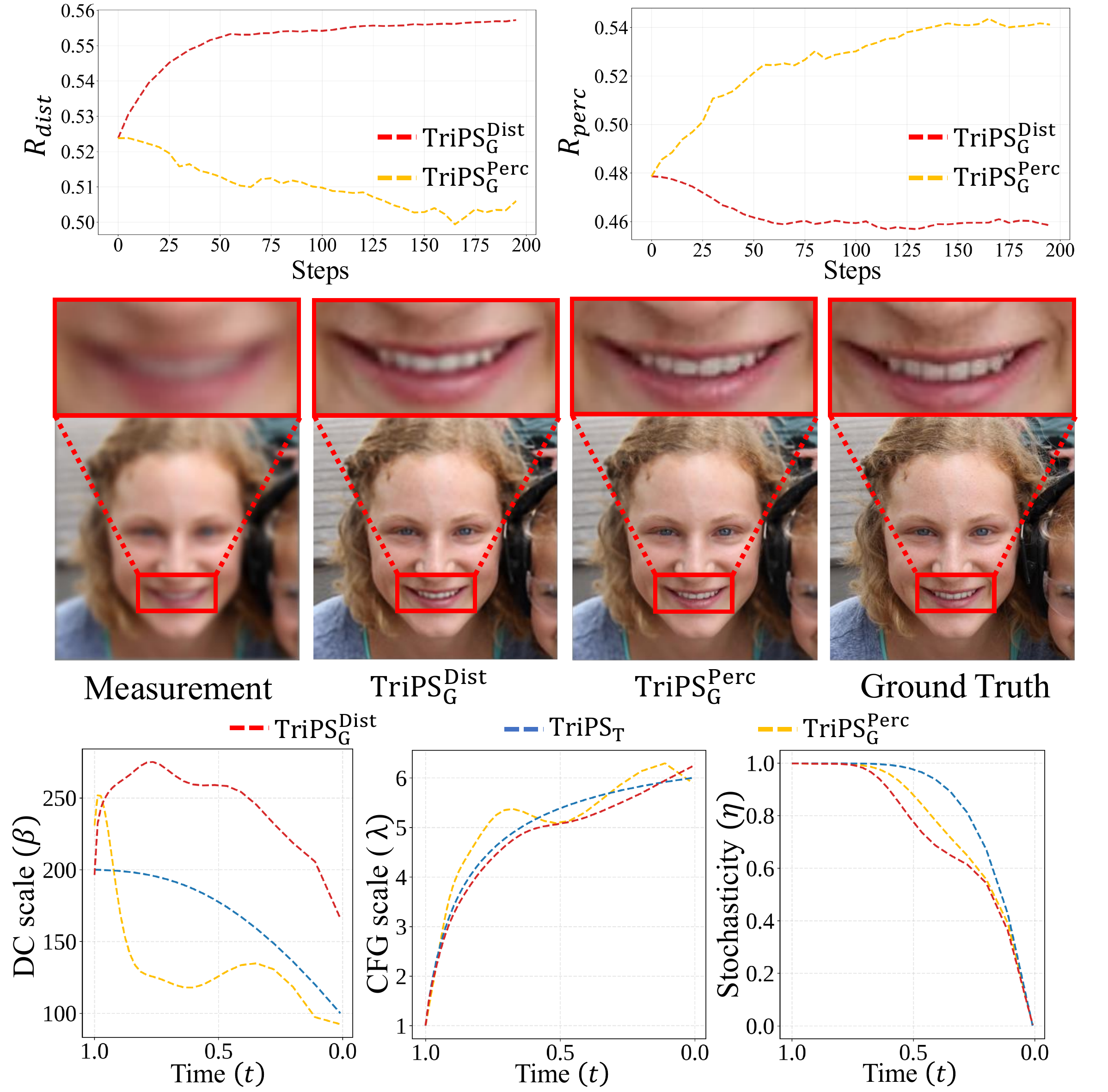}
  \caption{
  Reward-guided control of the perception-distortion trade-off via $\text{TriPS}_{\text{G}}$. (Top) Evolution of distortion and perceptual rewards ($R_{\text{dist}}$ and $R_{\text{perc}}$) on a validation set during optimization. 
  (Middle) Visual comparison between the perception oriented ($\text{TriPS}_{\text{G}}^{\text{Perc}}$) and distortion oriented ($\text{TriPS}_{\text{G}}^{\text{Dist}}$) variants, both optimized via GRPO starting from the $\text{TriPS}_{\text{T}}$ baseline.
  $\text{TriPS}_{\text{G}}^{\text{Perc}}$ recovers sharper textures, whereas the $\text{TriPS}_{\text{G}}^{\text{Dist}}$ preserves structural fidelity without artifacts. (Bottom) The optimized triadic schedules for DC guidance, CFG, and stochasticity across the three cases.
  }
  \vspace{-1em}
  \label{fig:grpo_results_overall}
\end{figure}

\subsection{Discussion}
The optimized $\text{TriPS}_{\text{G}}$ schedules align with the proposed triadic scheduling trends in Sec.~\ref{subsec:tri_sched_trend}. Beyond the consistent global patterns, the optimized schedules exhibit fine-grained temporal variations, such as local non-monotonic fluctuations and magnitude shifts, that prove critical for pushing the perception-distortion Pareto frontier. This effectiveness extends beyond flow matching frameworks to diffusion models (See Appendix~\ref{app:subsec_trips_diffusion}), demonstrating TriPS generalizability across generative architectures. Furthermore, applying TriPS to diverse solvers including FlowDPS~\cite{flowdps} and FLAIR~\cite{flair} consistently improves perceptual metrics while maintaining competitive distortion performance compared to default configurations, confirming the broad compatibility of triadic schedule optimization across inverse problem solvers (See Appendix~\ref{app:subsec_applicability_trips}). 

\section{Conclusion}
\label{sec: conc}
In this work, we propose TriPS, a framework for posterior sampling through time-varying coordination of data consistency (DC) guidance, classifier-free guidance (CFG), and stochasticity. We characterize a triadic coupling dynamic in which excessive CFG scales lead to the guidance conflict that hinder data consistency convergence, while stochasticity regularizes off-manifold phenomena induced by DC guidance and CFG. To optimize these dynamics, two complementary methods are proposed: template-based schedule search that employs fixed functional templates to perform coarse schedule search, providing robust baselines across inverse problems, and GRPO-based schedule optimization that captures complex temporal curves beyond fixed functional forms, enabling effective navigation of the perception-distortion trade-off through a hybrid IQA reward that jointly regularizes both metrics. Through these strategies, TriPS achieves high fidelity restoration across diverse tasks. The established triadic scheduling principles will provide a robust basis for future research in adaptive control for generative modeling and its broader applications.





\section*{Acknowledgements}
This work was supported in part by Institute of Information \& Communications Technology Planning \& Evaluation (IITP) grants funded by the Korea government(MSIT) [No.~RS-2021-II211343, Artificial Intelligence Graduate School Program (Seoul National University) / No.~RS-2025-02314125, Effective Human-Machine Teaming With Multimodal Hazy Oracle Models], the National Research Foundation of Korea (NRF) grant funded by the Korea government(MSIT) (No.~RS-2025-02263628), the BK21 FOUR program of the Education and Research Program for Future ICT Pioneers, Seoul National University, and Samsung Electronics Co., Ltd.\ (IO251217-14748-01).


\section*{Impact Statement}
Our research introduces TriPS (Triadic Dynamics Aware Posterior Sampling), a novel framework that optimizes the time-varying interplay of data consistency (DC) guidance, classifier-free guidance (CFG), and stochasticity in generative posterior sampling for solving inverse problems with pretrained diffusion and flow matching models. We develop two complementary schedule optimization strategies: template-based schedule search using compact functional priors and GRPO-based schedule optimization strategies that discovers complex temporal curves. These advancements offer broad practical utility in fields such as medical imaging, scientific reconstruction, and remote sensing, while providing new theoretical and algorithmic insights for conditional generative sampling. This work raises no ethical concerns and focuses on enhancing reconstruction accuracy without adverse societal consequences.





\bibliography{main}
\bibliographystyle{icml2026}

\newpage
\appendix
\onecolumn
\section{Derivations and Proof}
\label{app:sec_derivations_and_proof}

This appendix derives (i) the flow-based posterior-mean estimator $\hat{\bm{x}}_{0|t}(\bm{x}_t)\approx\mathbb{E}[x_0\mid \bm{x}_t]$ used for likelihood/DC guidance, and (ii) the corresponding relation between the marginal score $\nabla_{\bm{x}_t}\log p_t(\bm{x}_t)$ and the flow velocity field. The derivations follow the standard OT/linear conditional flow setting.

\subsection{Setup: linear conditional flow}
\label{app:subsec_linear_conditional_flow}
We consider the linear conditional flow (OT/rectified flow matching) that interpolates between clean data $\bm{x}_0\sim p_0$ and a Gaussian endpoint $\bm{x}_1\sim \mathcal{N}(0,I)$:
\begin{equation}
\label{eq:app_linear_flow}
\bm{x}_t = (1-t)\bm{x}_0 + t x_1,\qquad t\in[0,1].
\end{equation}
The (optimal) marginal velocity field is given by the conditional expectation
\begin{equation}
\label{eq:app_marginal_velocity}
v_t(\bm{x}) \triangleq \mathbb{E}[\bm{x}_1-\bm{x}_0\mid \bm{x}_t=\bm{x}],
\end{equation}
which is approximated by the learned model velocity $v_\theta(\bm{x}_t;\emptyset)$ (time dependence is implicit, consistent with the main text).

\subsection{Flow-based Posterior Mean}
\label{app:subsec_flow_tweedie_mean}
From \eqref{eq:app_linear_flow}, we have $\bm{x}_t-\bm{x}_0=t(\bm{x}_1-\bm{x}_0)$, hence
\begin{equation}
\label{eq:app_x0_identity}
\bm{x}_0 = \bm{x}_t - t(\bm{x}_1-\bm{x}_0).
\end{equation}
Taking conditional expectation with respect to $p_t(x_0,x_1\mid x_t)$ yields the flow-based analogue of Tweedie's variable identity:
\begin{equation}
\label{eq:app_tweedie}
\mathbb{E}[\bm{x}_0\mid \bm{x}_t]
= \bm{x}_t - t\,\mathbb{E}[\bm{x}_1-\bm{x}_0\mid \bm{x}_t]
= \bm{x}_t - t\,v_t(\bm{x}_t).
\end{equation}
Replacing $v_t$ with the learned velocity gives the estimator used in the main text:
\begin{equation}
\label{eq:app_xhat0}
\hat{\bm{x}}_{0|t}(\bm{x}_t)\triangleq \mathbb{E}[\bm{x}_0\mid \bm{x}_t]\approx \bm{x}_t - t\,v_\theta(\bm{x}_t;\emptyset).
\end{equation}
Similarly, the conditional expectation of the Gaussian endpoint can be written as
\begin{equation}
\label{eq:app_xhat1}
\mathbb{E}[\bm{x}_1\mid \bm{x}_t]=\bm{x}_t + (1-t)\,v_t(\bm{x}_t).
\end{equation}

\subsection{Score from Velocity}
\label{app:subsec_score_from_velocity}
Under \eqref{eq:app_linear_flow} with $\bm{x}_1\sim\mathcal{N}(0,I)$, the conditional distribution is
\begin{equation}
\label{eq:app_conditional}
p_t(\bm{x}_t\mid \bm{x}_0)=\mathcal{N}\big((1-t)\bm{x}_0,\; t^2 I\big),
\end{equation}
and thus the conditional score is
\begin{equation}
\label{eq:app_conditional_score}
\nabla_{\bm{x}_t}\log p_t(\bm{x}_t\mid \bm{x}_0)
= -\frac{1}{t^2}\big(\bm{x}_t-(1-t)\bm{x}_0\big).
\end{equation}
Using $\nabla_{\bm{x}_t}\log p_t(\bm{x}_t)=\mathbb{E}[\nabla_{\bm{x}_t}\log p_t(\bm{x}_t\mid \bm{x}_0)\mid \bm{x}_t]$ gives
\begin{equation}
\label{eq:app_marginal_score_condexp}
\nabla_{\bm{x}_t}\log p_t(\bm{x}_t)
= -\frac{1}{t^2}\Big(\bm{x}_t-(1-t)\mathbb{E}[\bm{x}_0\mid \bm{x}_t]\Big).
\end{equation}
Substituting \eqref{eq:app_tweedie} yields:
\begin{equation}
\label{eq:app_score_velocity}
\nabla_{\bm{x}_t}\log p_t(\bm{x}_t)
= -\frac{\bm{x}_t}{t} - \frac{1-t}{t}\,v_t(\bm{x}_t)
\ \approx\ -\frac{\bm{x}_t}{t} - \frac{1-t}{t}\,v_\theta(\bm{x}_t;\emptyset).
\end{equation}
Equivalently, solving \eqref{eq:app_score_velocity} for $v_t$ gives
\begin{equation}
\label{eq:app_velocity_score}
v_t(\bm{x}_t)= -\frac{\bm{x}_t}{1-t}-\frac{t}{1-t}\,\nabla_{\bm{x}_t}\log p_t(\bm{x}_t),
\end{equation}
which matches the form used in \cite{liu2025flow}.

\subsection{Proof of Proposition~\ref{prop:cfg_slows_dc}}
\label{app:subsec_proof_cfg_slows_dc}
\begin{proof}
In Proposition~\ref{prop:cfg_slows_dc}, we conduct a first-order derivative analysis.
To analyze the derivative of the one-step expected residual norm $\mathcal{R}(\hat{\bm{x}}_{0|t}) \triangleq \|y - \mathcal{A}\hat{\bm{x}}_{0|t}\|_2^2$ reduction with respect to the CFG scale $\lambda(t)$, we start from the Euler-Maruyama discretization in~\cite{platen1992numerical}:
\begin{equation}
\label{app_eq:em_step}
\begin{aligned}
\bm{x}_{t+\Delta t}
=
\bm{x}_t
+\Delta t\,b_{\text{prior}}(\bm{x}_t)
+\Delta t\,b_{\text{cfg}}(\bm{x}_t;c) 
+\Delta t\,b_{\text{dc}}(\bm{x}_t;\bm{y})
+\ b_{\text{sto}}(t)\sqrt{\Delta t}\,\xi,
\end{aligned}
\end{equation}
where $\xi \sim \mathcal{N}(0, I)$ and $\tilde{b}_{\text{cfg}}$ is the unit-scale CFG defined in \eqref{eq:drfits_definition} and $b_{\mathrm{sto}}(t)$ is the stochastic drift coefficient
corresponding to the stochasticity scale $\eta(t)$ in Eq.~(5). Using the second-order Taylor expansion around $\hat{\bm{x}}_{0|t}$, the residual norm at the next state $\hat{\bm{x}}_{0|t+\Delta t}$ is:
\begin{equation}
\label{app_eq:taylor}
\begin{aligned}
\mathcal{R}(\hat{\bm{x}}_{0|t+\Delta t})) &= \mathcal{R}(\hat{\bm{x}}_{0|t}) + \nabla \mathcal{R}(\hat{\bm{x}}_{0|t})^\top (\hat{\bm{x}}_{0|t+\Delta t} - \hat{\bm{x}}_{0|t}) \\
&+ \frac{1}{2} (\hat{\bm{x}}_{0|t+\Delta t} - \hat{\bm{x}}_{0|t})^\top \nabla^2 \mathcal{R}(\hat{\bm{x}}_{0|t}) (\hat{\bm{x}}_{0|t+\Delta t} - \hat{\bm{x}}_{0|t}) + o(|\hat{\bm{x}}_{0|t+\Delta t} - \hat{\bm{x}}_{0|t}|^2).
\end{aligned}
\end{equation}
We now compute the conditional expectation $\mathbb{E}[\mathcal{R}(\hat{\bm{x}}_{0|t+\Delta t}) \mid \bm{x}_t]$. Substituting \eqref{app_eq:em_step} into \eqref{app_eq:taylor} and noting that $\mathbb{E}[\xi] = 0$ and $\mathbb{E}[\xi \xi^\top] = I$, we obtain:
\begin{equation}
\label{app_eq:expectation}
\begin{aligned}
\mathbb{E}[\mathcal{R}(\hat{\bm{x}}_{0|t+\Delta t}) \mid \bm{x}_t] &= \mathcal{R}(\hat{\bm{x}}_{0|t}) + \Delta t \nabla \mathcal{R}(\hat{\bm{x}}_{0|t})^\top \Big( b_{\text{prior}} + \lambda(t)\tilde{b}_{\text{cfg}}(\bm{x}_t;c) + b_{\text{dc}}(\bm{x}_t;\bm{y}) \Big) \\
&+ \frac{1}{2} b_{\text{sto}}(t)^2 \Delta t \text{Tr}\big(\nabla^2 \mathcal{R}(\hat{\bm{x}}_{0|t})\big) + o(\Delta t).
\end{aligned}
\end{equation}
In this derivation, the quadratic terms involving $b_{\text{det}}(\bm{x}_t)$ and $\lambda(t)$-dependence of $b_{\text{dc}}$ are of order $O(\Delta t^2)$ and are absorbed into the $o(\Delta t)$ remainder neglecting higher-order curvature effect. Finally, to find the first-order derivative with respect to the CFG scale $\lambda(t)$, we differentiate the expected measurement residual in \eqref{app_eq:expectation}:
\begin{equation}
\label{app_eq:final_diff}
\begin{aligned}
\frac{\partial}{\partial \lambda(t)} \mathbb{E}[\mathcal{R}(\hat{\bm{x}}_{0|t+\Delta t} \mid \bm{x}_t] &= \frac{\partial}{\partial \lambda(t)} \left[ \Delta t \nabla \mathcal{R}(\hat{\bm{x}}_{0|t})^\top \Big( \lambda(t)\tilde{b}_{\text{cfg}}(\bm{x}_t;c) \Big) \right] + o(\Delta t) \\
&= \Delta t \Big\langle \nabla \mathcal{R}(\hat{\bm{x}}_{0|t}), \tilde{b}_{\text{cfg}}(\bm{x}_t;c) \Big\rangle + o(\Delta t) \\
&= -\Delta t\,\Big\langle \tilde b_{\mathrm{dc}}(\bm{x}_t;\bm{y}),\,\tilde b_{\mathrm{cfg}}(\bm{x}_t;c)\Big\rangle
+o(\Delta t),
\end{aligned}
\end{equation}
where $\Big\langle \nabla \mathcal{R}(\hat{\bm{x}}_{0|t}), \tilde{b}_{\text{cfg}}(\bm{x}_t;c) \Big\rangle \approx -\Big\langle \tilde b_{\mathrm{dc}}(\bm{x}_t;\bm{y}),\,\tilde b_{\mathrm{cfg}}(\bm{x}_t;c)\Big\rangle$ by our experimental validation.
Our experimental validation is set by cosine similarity ($\text{Cos-Sim($-\mathcal{R}(\hat{\bm{x}}_{0|t})$, $\tilde b_{\mathrm{dc}}(\bm{x}_t;\bm{y})$)}$) between $-\mathcal{R}(\hat{\bm{x}}_{0|t})$ and $\tilde b_{\mathrm{dc}}(\bm{x}_t;\bm{y})$. Empirically, in Fig.~\ref{fig:prop_3.1_validation}, we observe that the cosine similarity between $-\mathcal{R}(\hat{\bm{x}}_{0|t})$ and $\tilde b_{\mathrm{dc}}(\bm{x}_t;\bm{y})$ remains near unity throughout the entire sampling trajectory, thereby substantiating the approximation in Eq.~\eqref{app_eq:final_diff}.

\begin{figure}[ht]
    \centering
    \includegraphics[width=0.40\linewidth]{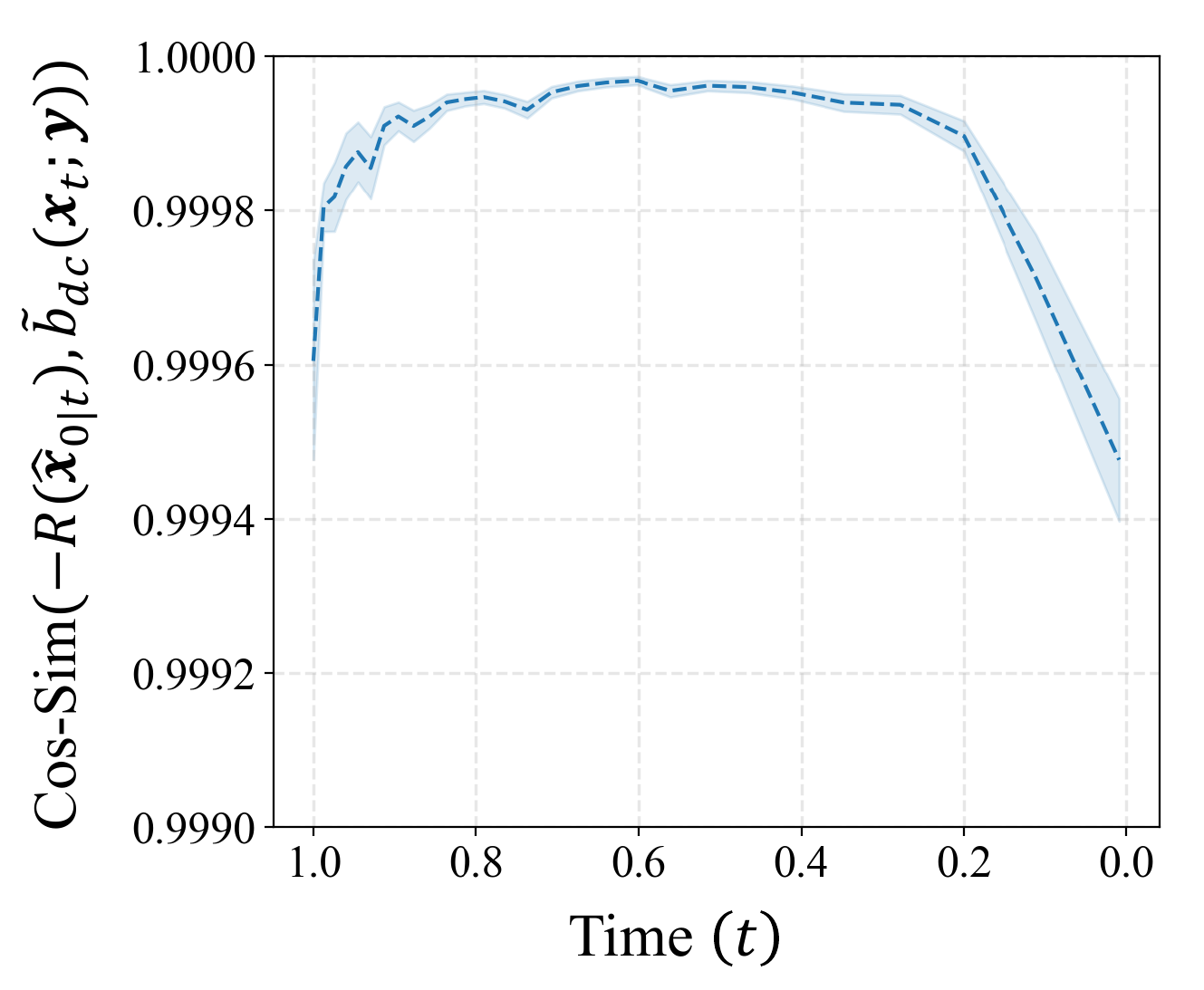}
    \caption{Cosine similarity between squared residual norm $-\mathcal{R}(\hat{\bm{x}}_{0|t})$ and unit-scale DC guidance $\tilde b_{\mathrm{dc}}(\bm{x}_t;\bm{y})$.}
    \label{fig:prop_3.1_validation}
\end{figure}

Conclusively, this results shows that the directional alignment (inner product) between the DC guidance and the CFG determines whether increasing the CFG scale $\lambda(t)$ accelerates or hinders the minimization of the measurement residual.
\end{proof}

\section{More Analysis on Triadic Coupling Dynamics}
\label{app:sec_drift_interplay_and_regularization}
\subsection{Sampling Process: Early-Stage Guidance Conflict}
\label{app:subsec_sampling_process_conflict_dc_and_cfg}

To empirically substantiate the theoretical analysis presented in Sec.~\ref{subsec:dc_cfg_conflict}, we visualize the evolution of the clean image prediction $\hat{\bm{x}}_{0|t}$ defined in Eq.~\eqref{eq:tweedie_update} throughout the reverse diffusion process. Fig.~\ref{fig:process_comparison_dc_and_cfg} provides a comparative visualization of the restoration trajectories under a lower CFG scale ($\lambda=2.0$) and a high CFG scale ($\lambda=7.0$).

\begin{figure}[ht]
    \centering
    \includegraphics[width=0.90\linewidth]{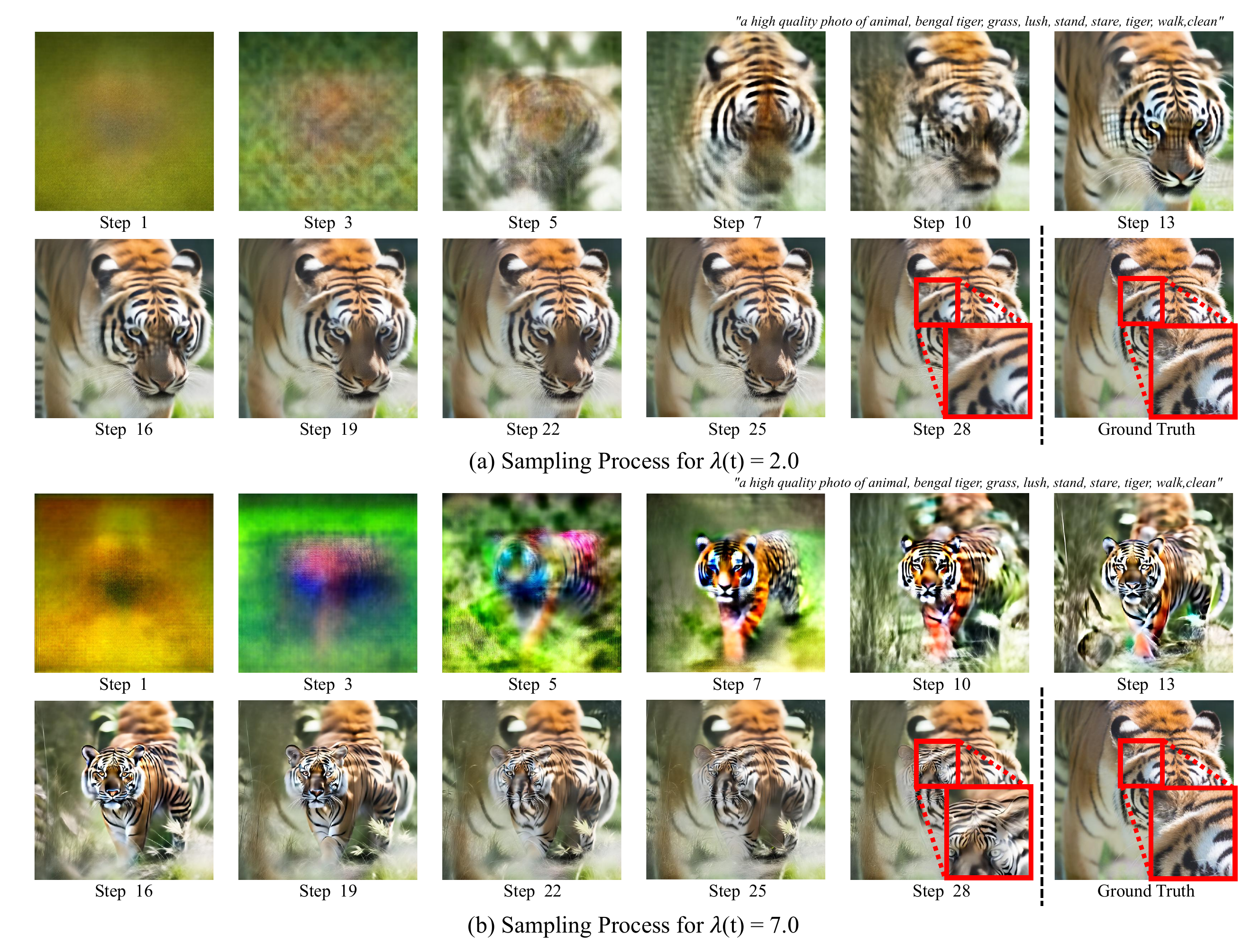}
    \caption{Visual comparison of early-stage dynamics under different CFG scales at different sampling steps (NFE:28). This experiment demonstrates a super-resolution $\times8$ on the DIV2K validation set. (a): Low CFG ($\lambda(t)=2.0$) scale facilitates a gradual transition from noise to clean data, respecting data consistency. (b): High CFG ($\lambda(t)=7.0$) scale induces sudden shifts and intense color saturation in the early phase. This visualizes the \textit{guidance conflict}, where the aggressive CFG steers the generation toward semantic hallucinations that violate the measurement constraints.}
    \label{fig:process_comparison_dc_and_cfg}
\end{figure}

The visual results corroborate our theoretical claims. Consistent with the behavior of standard CFG scale reported in recent studies~\citep{cfg++,Imagen}, the high CFG scale (Fig.~\ref{fig:process_comparison_dc_and_cfg} (b)) exhibits rapid signal amplification and saturation during the early timesteps. As formalized in Eq.~\eqref{eq:cos_sim_def}, this is the phase where the CFG $\tilde b_{\mathrm{CFG}}(\bm{x}_t;c)$ and the DC guidance $\tilde b_{\mathrm{DC}}(\bm{x}_t;\bm{y})$ are most misaligned ($\text{COS-SIM}_{1}(\bm{x}_t) < 0$). 

Specifically, the high CFG scale (Fig.~\ref{fig:process_comparison_dc_and_cfg} (b)) forces the intermediate states to commit prematurely to a semantic mode, leading to hallucinations that are structurally inconsistent with the degraded measurements $\bm{y}$. In contrast, the low CFG scale (Fig.~\ref{fig:process_comparison_dc_and_cfg} (a)) maintains a trajectory that allows the data consistency term to effectively guide the restoration process without conflicting with the prior. These observations confirm that the conflict is not merely theoretical but manifests as tangible degradation in the early-stage generation process.

\subsection{Sampling Process: Stochasticity as a Regularizer for DC guidance and CFG}
\label{app:subsec_sampling_process_stochasticity_regularization}
To further substantiate the observation in Sec.~\ref{subsec:analysis_sto_regularize}, we conduct a process-wise analysis to visualize how stochasticity mitigates the instability induced by strong DC guidance and CFG ($b_{\text{dc}}(\bm{x}_t;\bm{y}), b_{\text{cfg}}(\bm{x}_t; c)$). Following the setup in Fig.~\ref{fig:analysis_stochasticity}, we conduct this analysis scheduling the stochasticity scale $\eta(t) = \zeta(t)\sqrt{1-\sigma_{t+\Delta t}}$. While Fig.~\ref{fig:process_comparison_sto} illustrates the final-state $t = t_{0}$, the intermediate sampling dynamics provide deeper insights into the "regularization" role of stochasticity. As shown in Fig.~\ref{fig:process_comparison_sto}, employing high DC, CFG scales $\beta(t), \lambda(t)$ without sufficient stochasticity leads to a progressive accumulation of artifacts. High DC, CFG scales push the intermediate latent $x_t$ toward the data consistency constraint with such intensity that it overshoots the natural image manifold, resulting in the high-frequency "saturated" artifacts observed in Fig.~\ref{fig:process_comparison_sto} (a). In contrast, as shown in the step-by-step evolution in Fig.~\ref{fig:process_comparison_sto} (b), the introduction of a proper scheduled stochasticity acts as a per-step manifold projection. By injecting controlled stochasticity and subsequently removing it via the reverse diffusion step, the sampler effectively repairs the accumulated error, pulling the trajectory back toward the high-density regions of the prior. This process-wise evidence confirms that the synergetic coupling of scales of DC guidance, CFG and stochasticity ($\beta(t), \lambda(t)$, $\eta(t)$) is not merely hyperparameters balancing act, but a necessary mechanism to maintain sampling stability under strong guidance.

\begin{figure}[ht]
    \centering
    \includegraphics[width=0.90\linewidth]{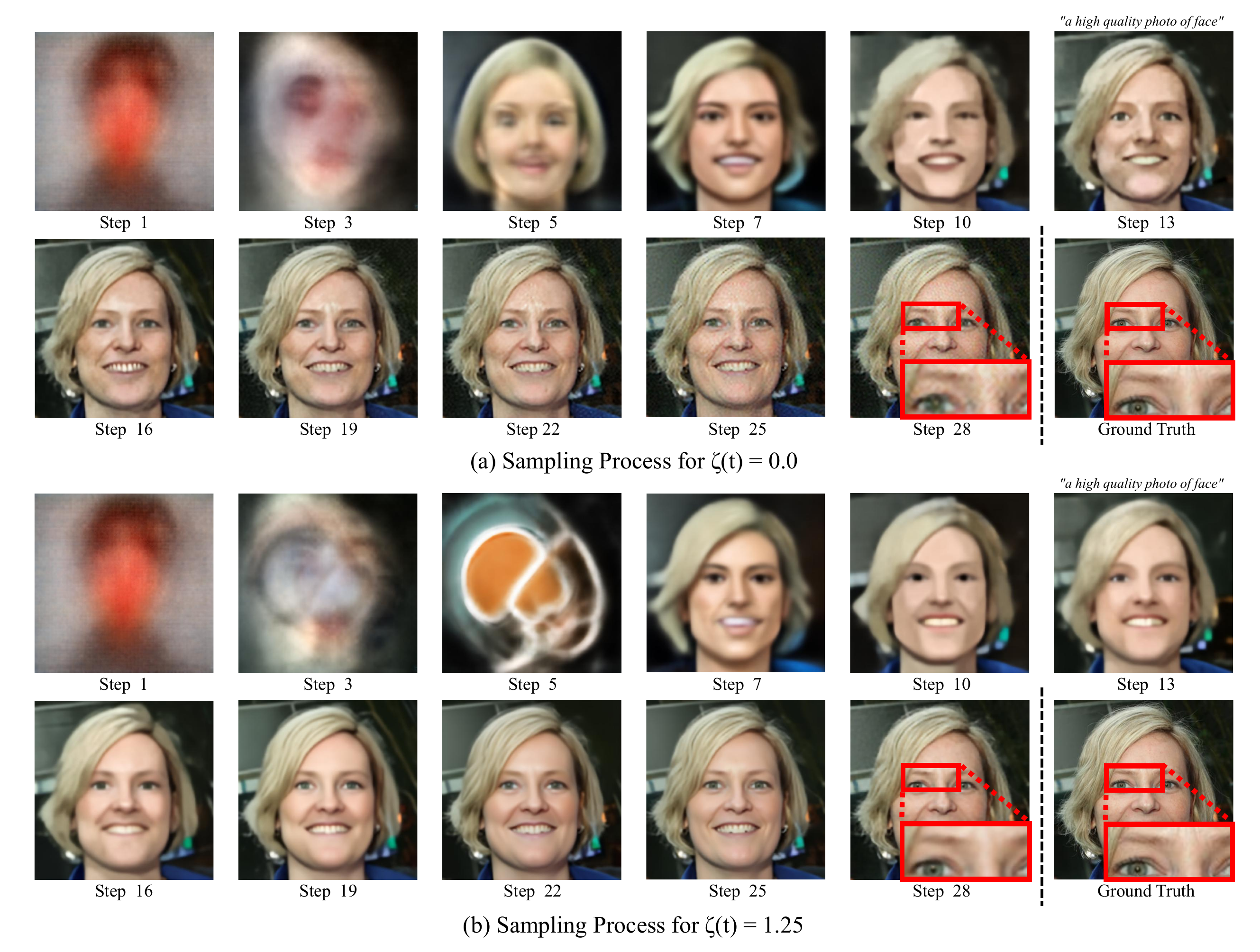}
    \caption{Visual comparison of the stabilization effect of stochasticity under different stochasticity scales at different sampling steps (NFE:28). The stochasticity scale is scheduled as $\eta(t) = \zeta(t)\sqrt{1-\sigma_{t+\Delta t}}$ for this analysis. This experiment demonstrates a super-resolution $\times8$ on the FFHQ validation set. (a): Without stochasticity ($\eta(t) = 0$), the error accumulates over iterations, leading to significant manifold deviation and visual artifacts. (b): High stochasticity ($\eta(t) = 1.25$) scale stabilizes the data manifold trajectory, effectively suppressing artifact growth and maintaining perceptual fidelity throughout the restoration process.}
    \label{fig:process_comparison_sto}
\end{figure}

\subsection{Stochasticity as a Regularizer w.r.t Marginal Distribution}
\label{app:subsec_distribution_anal_KID}
To rigorously quantify the marginal distribution discussed in Sec.~3.2, we employ the Kernel Inception Distance (KID) \cite{kid} as a metric for manifold alignment. Unlike the Fréchet Inception Distance (FID), KID is an unbiased estimator even for small sample sizes, making it particularly suitable for our evaluation on the DIV2K and FFHQ datasets (100 samples each).

KID measures the squared Maximum Mean Discrepancy (MMD) between the Inception-v3 feature representations of the generated samples $\hat{\bm{x}}_0$ and the ground-truth images $\bm{x}_{\text{gt}}$:\begin{equation}\text{KID}(\mathbb{P}_g, \mathbb{P}r) = \mathbb{E}{\bm{x}, \bm{x}' \sim \mathbb{P}_g, \bm{y}, \bm{y}' \sim \mathbb{P}_r} [k(\bm{x}, \bm{x}') + k(\bm{y}, \bm{y}') - 2k(\bm{x}, \bm{y})],\end{equation}where $k(\cdot, \cdot)$ denotes the polynomial kernel.

As shown in Fig.~\ref{fig:app_anal_sto_KID}(a), we observe a distinct correlation between the DC,CFG scales ($\lambda, \beta$) and the KID score. When stochasticity is low ($\eta \approx 0$), increasing either the CFG scale $\lambda$ or the DC guidance scale $\beta$ results in an increase of the KID score. This empirical evidence supports our hypothesis that excessive DC guidance and CFG steer the sampling trajectory away from the natural image manifold, inducing a distribution shift. Conversely, as we inject more stochasticity (increasing $\eta$), the KID scores consistently decrease across all DC,CFG scales. This result validates the role of stochasticity as a distributional regularizer that mitigates the bias induced by the triadic coupling of DC guidance and CFG, ensuring that the generated samples remain anchored to the higher-probability regions of the data distribution.

\begin{figure}[ht]
    \centering
    \includegraphics[width=0.60\linewidth]{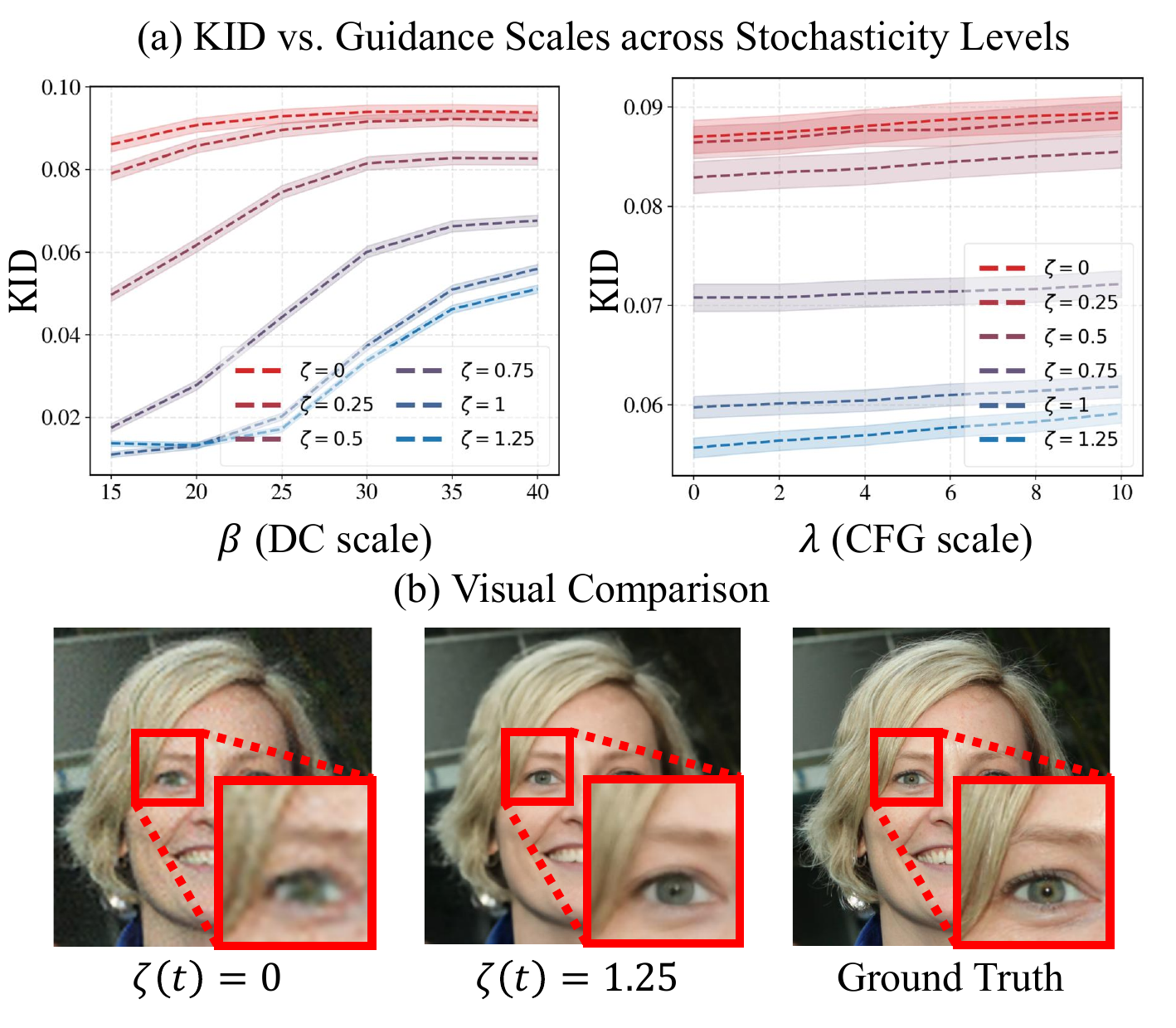}
    \caption{Distributional analysis of the triadic coupling. (a) Kernel Inception Distance (KID) calculated between 100 generated samples and ground-truth images on the DIV2K dataset. Note that the stochasticity scale is scheduled as $\eta(t) = \zeta(t)\sqrt{1-\sigma_{t+\Delta t}}$ for this analysis. The plots demonstrate that high DC guidance,CFG scales ($\lambda, \beta$) without sufficient stochasticity ($\eta$) lead to significant distributional shift away (higher KID). Increasing the stochasticity scale effectively regularizes, pulling the samples back toward the natural image manifold. This highlights the necessity of the triadic balance for maintaining both data consistency and perceptual fidelity.}
    \label{fig:app_anal_sto_KID}
\end{figure}

\subsection{Study on CFG Scheduling Trend in Posterior Sampling}
\label{app:cfg_scheduling_profiles}
We present a qualitative comparison of different CFG scaling schedules $\lambda(t)$ to validate the analysis in Sec.~\ref{subsec:dc_cfg_conflict} regarding the guidance conflict. Specifically, we evaluate distinct schedules within the range $[\lambda_{\min}, \lambda_{\max}] = [1, 6]$: Fixed (constant at the mean), Linearly (increasing/decreasing), and non-monotonic Tent and V-shape function~\cite{smith2017cyclical} curves (linearly increasing to decreasing / linearly decreasing to increasing) for the super-resolution $\times 12$. our observations confirm that a linearly decreasing schedule, which imposes high CFG scales during the early sampling phase, induces the directional misalignment between
DC guidance and CFG, leading to severe semantic hallucinations and artifacts. In contrast, the linearly increasing schedule mitigates this early-stage conflict while preserving the capability for late-stage texture refinement, yielding reconstructions that are most faithful to the ground truth. It is worth noting, however, that while an increasing profile empirically favors fidelity in this regime, the optimal $\lambda(t)$ remains a design choice governed by the trade-off between semantic adherence and data consistency.

\begin{figure}[ht]
    \centering
    \includegraphics[width=0.70\linewidth]{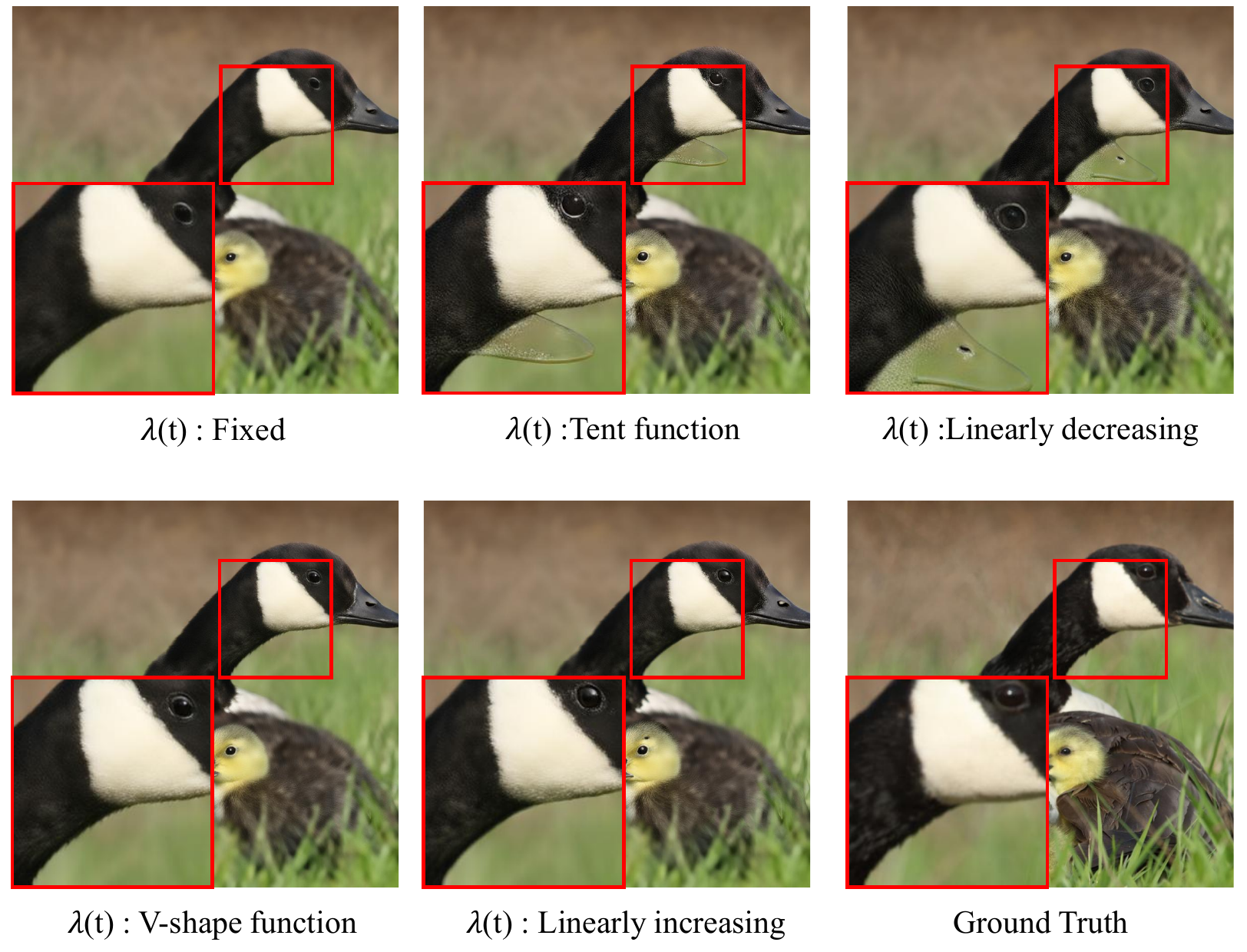}
    \caption{Qualitative comparison of CFG scheduling designs on super-resolution $\times 12$. Using the prompt ``\textit{a photo of baby, bird, duck, duckling, goose, grass, grassy, green, lush, nest, sit, stand, clean}'', we compare reconstruction quality across different $\lambda(t)$ designs. The linearly decreasing schedule induces semantic hallucinations due to the early-stage DC guidance-CFG conflict, whereas the linearly increasing schedule effectively aligns with the ground truth by delaying affects of CFG until the structural feature is established.}
    \label{fig:app_time_window_cfg}
\end{figure}

\section{Details on TriPS Backbone Sampler}
\label{app:sec_algo}
TriPS decomposes the guided reverse step into three core components: generative prior denoising modulated by the classifier-free guidance (CFG) scale $\lambda(t)$, data consistency (DC) guidance refinement with strength $\beta(t)$, and stochastic renoising controlled by $\eta(t)$. To implement the DC guidance, we adopt the hybrid update scheme~\cite{ddpg} within the latent space of the VAE, which interpolates between Back-Projection (BP) and Least-Squares (LS) objectives. This is achieved by minimizing the following loss function:
\begin{equation}
\begin{aligned}
\mathcal{L}(\mathcal{A}\mathcal{D}(\hat{\bm{z}}_{0|t}), \bm{y}) = (1-\omega_t)\,\big\|\mathcal{A}^\dagger(\bm{y})-\mathcal{A}^\dagger\!\big(\mathcal{A}\mathcal{D}(\hat{\bm{z}}_{0|t})\big)\big\|_2 + \omega_t\,\big\|\bm{y}-\mathcal{A}\mathcal{D}(\hat{\bm{z}}_{0|t})\big\|_2,
\label{eq:hybrid_dc}
\end{aligned}
\end{equation}
where $\mathcal{D}(\cdot)$ denotes the pre-trained VAE decoder, $\hat{\bm{x}}_{0|t} \coloneqq\mathcal{D}(\hat{\bm{z}}_{0|t})$ (each denote the clean image prediction in pixel and latent space in Eq.~\eqref{eq:tweedie_update}.) and $\mathcal{A}^\dagger$ is the pseudo-inverse of the measurement operator $\mathcal{A}$. The interpolation weight $\omega_t$ modulates the transition from BP to LS and is parameterized as $\omega_t = (1-\sigma_\omega(t))^\phi$ with $\phi=0.8$. In this formulation, we set $\sigma_\omega(t) = \sigma_t$ (noise schedule $\sigma_t$) for flow matching models and $\sigma_\omega(t) = 1 - \bar{\alpha}_t$ (noise variance schedule $1 - \bar{\alpha}_t$) for diffusion models. 

The complete procedure is detailed in Algorithm~\ref{alg:sampler_flow}, with further diffusion-specific sampling variants provided in Algorithm~\ref{alg:sampler_diffusion}.

\begin{algorithm}[!t]
    \caption{Inference of TriPS (Flow Matching)}\label{alg:sampler_flow}
    \begin{algorithmic}[1]
    \Require Measurement $\bm{y}$, linear operator $\mathcal{A}$ (and adjoint $\mathcal{A}^\top$), pre-trained flow model $v_\theta$, VAE encoder/decoder $(\mathcal{E},\mathcal{D})$, text embeddings $(c_\varnothing, c)$, noise schedule $\{\sigma_t\}_{t\in[0,1]}$, Schedules of DC, CFG, Stochasticity scale $\beta(t),\lambda(t),\eta(t)$
    \State $\bm{z}_1 \sim \mathcal{N}(0, I_d)$
    \For{$t: 1\rightarrow 0$}
        \State $v_t(\bm{z}_t) \gets v_\theta(\bm{z}_t, c_\varnothing) + \lambda(t)\big(v_\theta(\bm{z}_t, c) - v_\theta(\bm{z}_t, c_\varnothing)\big)$ \Comment{\textcolor[rgb]{0,0,0}{\textit{1. CFG-induced velocity field}}} 
        \State $\hat{\bm{z}}_{0|t} \gets \bm{z}_t - \sigma_tv_t(\bm{z}_t)$
        \State $\hat{\bm{z}}_{1|t} \gets \bm{z}_t + (1-\sigma_t)v_t(\bm{z}_t)$
        \State $\tilde{\bm{z}}_{0|t}(\bm{y}) \gets \hat{\bm{z}}_{0|t} - \beta(t) \nabla_{\hat{\bm{z}}_{0|t}} \mathcal{L}(\mathcal{A}\mathcal{D}(\hat{\bm{z}}_{0|t}), \bm{y})$ \big(Eq.~\eqref{eq:hybrid_dc}\big)
        \Comment{\textcolor[rgb]{0,0,0}{\textit{2. Data consistency (Sec.~\ref{par:flow_implementation})}}} 
        \State $\bm{\epsilon} \sim \mathcal{N}(0, I_d)$
        \State $\tilde{\bm{z}}_{1|t} \gets \sqrt{1-\eta^2(t)}\bm{z}_{1|t} + \eta(t)\bm{\epsilon}$
        \Comment{\textcolor[rgb]{0,0,0}{\textit{3. Stochasticity}}} 
        \State $\bm{z}_{t+\Delta t} \gets (1-\sigma_{t+\Delta t})\tilde{\bm{z}}_{0|t}(\bm{y}) + \sigma_{t+\Delta t} \tilde{\bm{z}}_{1|t}$
        \Comment{\textcolor[rgb]{0,0,0}{\textit{4. Euler update}}} 
    \EndFor
    \end{algorithmic}
    \label{alg:TriPS_flow}
\end{algorithm}

\begin{algorithm}[!t]

    \caption{Inference of TriPS (Diffusion)}\label{alg:sampler_diffusion}
    \begin{algorithmic}[1]
    \Require Measurement $\bm{y}$, linear operator $\mathcal{A}$ (and adjoint $\mathcal{A}^\top$), pre-trained flow model $v_\theta$, VAE encoder/decoder $(\mathcal{E},\mathcal{D})$, text embeddings $(c_\varnothing, c)$, NFE $T$, noise variance schedule $\{ 1-\bar{\alpha}_t \}_{t\in[T,1]}$, Schedules of DC, CFG, Stochasticity scale $\beta(t),\lambda(t),\eta(t)$
    \State $\bm{z}_1 \sim \mathcal{N}(0, I_d)$
    \For{$t$ from $T$ to 1}
        \State $\bm{\epsilon}_t(\bm{z}_t) \gets \bm{\epsilon}_\theta(\bm{z}_t, c_\varnothing) + \lambda(t)\big(\bm{\epsilon}_\theta(\bm{z}_t, c) - \bm{\epsilon}_\theta(\bm{z}_t, c_\varnothing)\big)$ \Comment{\textcolor[rgb]{0,0,0}{\textit{1. CFG-scaled predicted noise}}} 
        \State $\hat{\bm{z}}_{0|t} \gets \frac{1}{\sqrt{1-\bar{\alpha}_t}}\big(\bm{z}_t - \sqrt{1-\bar{\alpha}_t} \bm{\epsilon}_t(\bm{z}_t)\big)$
        \State $\tilde{\bm{z}}_{0|t}(\bm{y}) \gets \hat{\bm{z}}_{0|t} - \beta(t) \nabla_{\hat{\bm{z}}_{0|t}} \mathcal{L}(\mathcal{A}\mathcal{D}(\hat{\bm{z}}_{0|t}), \bm{y})$ \big(Eq.~\eqref{eq:hybrid_dc}\big)
        \Comment{\textcolor[rgb]{0,0,0}{\textit{2. Data consistency (Sec.~\ref{par:diffusion_implementation})}}} 
        \State $\bm{\epsilon} \sim \mathcal{N}(0, I_d)$
        \State $\tilde{\bm{\epsilon}}_t(\bm{z}_t)  \gets \sqrt{1-\eta^2(t)} \bm{\epsilon}_t(\bm{z}_t) + \eta(t)\bm{\epsilon}$
        \Comment{\textcolor[rgb]{0,0,0}{\textit{3. Stochasticity}}} 
        \State $\bm{z}_{t-1} \gets \sqrt{\bar{\alpha}_{t-1}} \tilde{\bm{z}}_{0|t}(\bm{y}) + \sqrt{1-\bar{\alpha}_{t-1}} \tilde{\bm{\epsilon}}_t(\bm{z}_t)$
        \Comment{\textcolor[rgb]{0,0,0}{\textit{4. Euler update}}} 
    \EndFor
    \end{algorithmic}
    \label{alg:TriPS_diff}
\end{algorithm}
\section{TriPS in Diffusion Models} \label{app:subsec_trips_diffusion}
\subsection{Datasets and Metrics}
For the quantitative evaluation, we utilize 1,000 high-resolution images from the FFHQ dataset. To ensure an objective and consistent comparison with other generative frameworks, the evaluation protocol and performance metrics are identical to those established in Section~\ref{par:data_metric}.

\subsection{Baselines}
To evaluate the performance of TriPS in diffusion-based inverse problems, we employ Stable Diffusion 1.5 as the generative backbone with a fixed resolution of $512 \times 512$. For all experiments, the number of function evaluations (NFE) is set to 50. We compare TriPS against a representative set of diffusion-based baselines, including PSLD~\cite{psld}, DDPG~\cite{ddpg}, P2L~\cite{p2l}, and TReg~\cite{treg}. Detailed implementation specifications for the diffusion setup and baseline configurations are provided in Section~\ref{par:diffusion_implementation}.

\subsection{Problem Setting}
For diffusion-based experiments, we include super-resolution 8$\times$ and motion deblurring with a kernel size of 61 and intensity of 0.5. In all cases, Gaussian noise with a standard deviation of 0.03 (1.5\% of the pixel range) is added.

\subsection{Experimental Results for Diffusion Model}
To evaluate the generalizability of TriPS beyond flow matching frameworks, we extend the proposed method to diffusion models by employing the sampling algorithm formulated in Algorithm~\ref{alg:sampler_diffusion}. We benchmark TriPS on two representative inverse problems: super-resolution $\times8$ and motion deblurring. In both tasks, TriPS demonstrates superior performance compared to established diffusion-based baselines. As summarized in Table~\ref{tab:quanti_diffusion}, the framework achieves a superior distortion-perception trade-off, which suggests that the optimized triadic schedules effectively leverage pre-trained generative priors for image restoration. These quantitative improvements are further supported by the qualitative comparisons in Fig.~\ref{fig:app_quali_diffusion_model}, where TriPS exhibits enhanced visual fidelity and structural consistency. Collectively, these findings validate the robustness and adaptability of the TriPS framework across distinct generative mechanisms.

\begin{figure*}[!ht]
  \centering
  \includegraphics[width=0.9\textwidth]{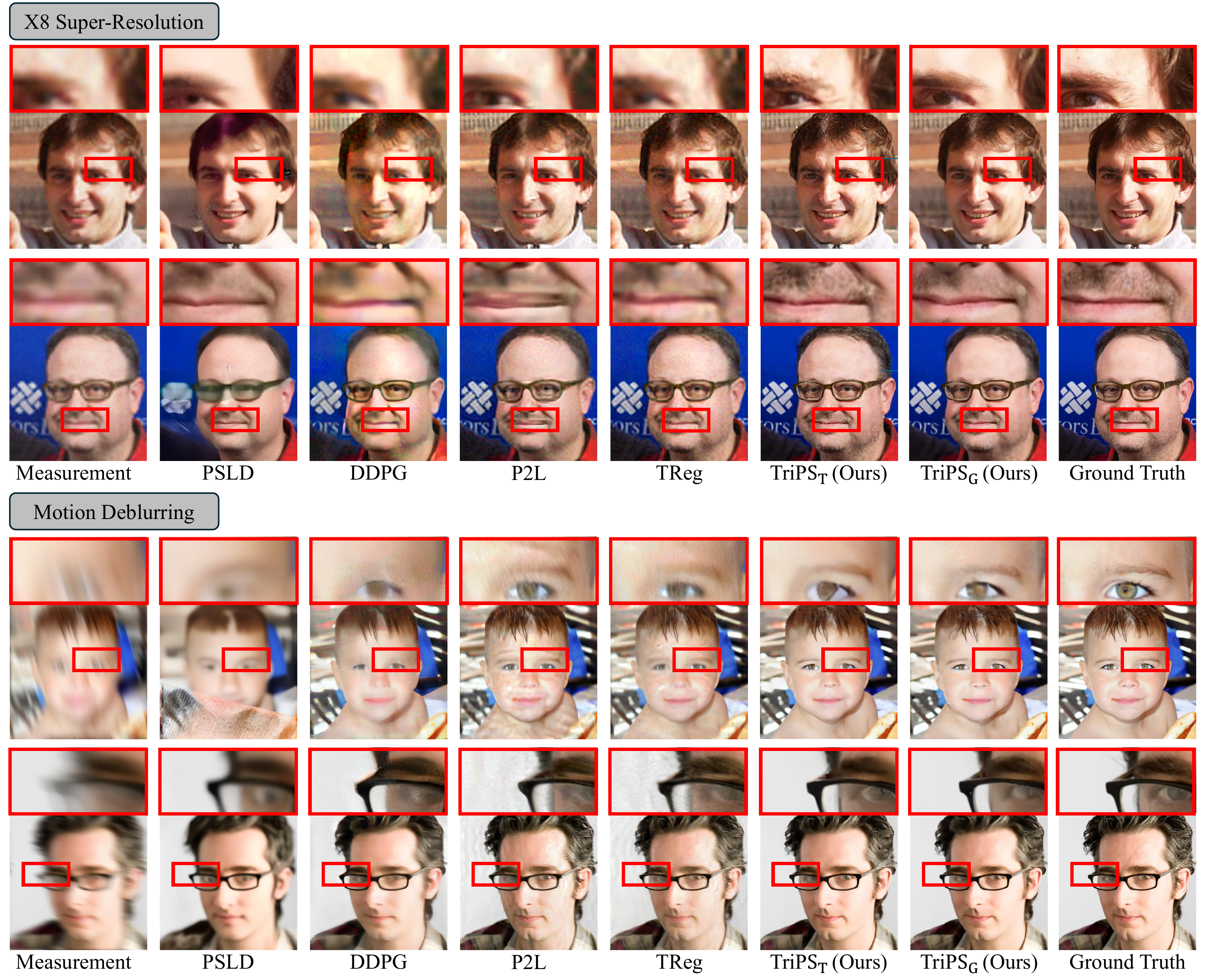}
  \caption{
  Qualitative comparison for FFHQ dataset on linear inverse problems based on Diffusion Model (SD 1.5). 
  }
  \label{fig:app_quali_diffusion_model}
\end{figure*}


\section{Implementation Details}
\label{app:sec_implementation}
\subsection{Implementation Details for Template-based Schedule Search}
\label{app:subsec_imp_detail_template}

\paragraph{Search space}
Each of the three schedule components
$(\beta(t),\,\lambda(t),\,\eta(t))$ is independently assigned
one template from the family
$\mathcal{T}=\{\text{linear},\,\text{exponential},\,
\text{logarithmic}\}$,
yielding a Cartesian product
$\mathcal{T}^{3}$ of $3^{3}=27$ candidate triads
$\tau=(\tau_\beta,\tau_\lambda,\tau_\eta)$.
Monotonicity is enforced by construction: the decreasing
direction is used for $\beta(t)$ and $\eta(t)$, and the
increasing direction for $\lambda(t)$.
 
\paragraph{Template functions}
Given endpoint values $[s_{\min}, s_{\max}]$
and time $t \in [0,1]$ (decreasing from noise to clean),
the three template functions are defined as:
\begin{align*}
  s^{\mathrm{lin}}(t)
    &= s_{\max} - (s_{\max}-s_{\min})\,t, \\[4pt]
  s^{\mathrm{exp}}(t)
    &= s_{\min}
      + (s_{\max}-s_{\min})\,
        \frac{e^{-\gamma t}-e^{-\gamma}}{1-e^{-\gamma}},
      \quad \gamma=3, \\[4pt]
  s^{\mathrm{log}}(t)
    &= s_{\max}
      - (s_{\max}-s_{\min})\,
        \frac{\ln(1+\gamma t)}{\ln(1+\gamma)},
      \quad \gamma=10.
\end{align*}
Linear templates maintain a constant rate of change.
Exponential templates change slowly at first then rapidly
(capturing schedules that front-load the adjustment in the
early/high-noise regime).
Logarithmic templates change rapidly at first then slowly
(suited for schedules that converge quickly and plateau
in the late/low-noise regime).
 
\paragraph{Parameter ranges}
The endpoint values used in the grid search are:
$\lambda(t)\in[1.0,\,6.0]$ and $\eta(t)\in[0,\,1]$,
fixed across all tasks; $\beta(t)\in
[\beta^{\mathrm{T}}_{\min},\,\beta^{\mathrm{T}}_{\max}]$,
which are task-dependent and listed in Tables~5 and~6.
 
\paragraph{Evaluation protocol}
All $27$ candidate triads are evaluated on a calibration
set $\mathcal{D}_{\mathrm{cal}}$ of $100$ images per task,
sampled from indices disjoint from the held-out test set.
For each candidate $\tau$, the utility
$\mathcal{U}(\tau;\mathcal{D}_{\mathrm{cal}})
  = \alpha\cdot\mathrm{PSNR}_{\mathrm{norm}}
  + (1{-}\alpha)\cdot(1{-}\mathrm{LPIPS})$
(with $\alpha=0.5$) (Eq.~\eqref{eq:template_search}) is computed, and the highest-scoring
triad is chosen as $\tau^{\star}$.
Here, $\mathrm{PSNR}_{\mathrm{norm}}$ denotes the min-max normalized PSNR
computed across all candidates in the search,
so that both terms share a common $[0,1]$ scale.

For the implementation of template-based schedule search ($\text{TriPS}_{\text{T}}$), the specific hyperparameter configurations, including $\beta_{\min}^{\text{T}}$ and $\beta_{\max}^{\text{T}}$ as defined in Section~\ref{subsec:template_opt}, are summarized in Table~\ref{tab:temp_id_flow} for flow matching models and Table~\ref{tab:temp_id_diffusion} for diffusion models. These tables further detail the optimized schedule templates identified for each task, providing the necessary parameters to ensure the reproducibility of our experimental results.
\begin{table*}[!ht]
    \centering
    \renewcommand{\arraystretch}{1.1} 
    \setlength{\tabcolsep}{0.1em} 
    \caption{
    The resulting templates of $\text{TriPS}_{\text{T}}$ and hyperparameters for DC guidance scale $[\beta_{\min}^{T}, \beta_{\max}^{T}]$ for linear inverse problems based on the flow matching model.
    }
    \resizebox{1\textwidth}{!}{
    \begin{tabular}{cccc|cccc|cccc|cccc}
    \toprule
    \multicolumn{16}{c}{\textbf{Flow Matching Model (SD3.5-M)}}\\
    \midrule
    \multicolumn{4}{c|}{Super-Resolution $\times$8}
    & \multicolumn{4}{c|}{Super-Resolution $\times$12}
    & \multicolumn{4}{c|}{Motion Deblurring}
    & \multicolumn{4}{c}{Gaussian Deblurring}\\
    \midrule
    
    $\beta(t)\downarrow$ & $\lambda(t)\uparrow$ & $\eta(t)\downarrow$ & $[\beta_{\min}^{T}, \beta_{\max}^{T}]$
    & $\beta(t)\downarrow$ & $\lambda(t)\uparrow$ & $\eta(t)\downarrow$ & $[\beta_{\min}^{T}, \beta_{\max}^{T}]$
    & $\beta(t)\downarrow$ & $\lambda(t)\uparrow$ & $\eta(t)\downarrow$ & $[\beta_{\min}^{T}, \beta_{\max}^{T}]$
    & $\beta(t)\downarrow$ & $\lambda(t)\uparrow$ & $\eta(t)\downarrow$ & $[\beta_{\min}^{T}, \beta_{\max}^{T}]$ \\
    \midrule
    \multicolumn{16}{c}{\textbf{FFHQ (768 $\times$ 768)}}\\
    \midrule
    linear & logarithmic & logarithmic & [50,250]  & linear & logarithmic & logarithmic & [40,250] &  linear & logarithmic & linear & [150,350] & logarithmic & logarithmic & logarithmic & [100,200]  \\
    \midrule
    \multicolumn{16}{c}{\textbf{DIV2K (768 $\times$ 768)}}\\
    \midrule
    linear & logarithmic & logarithmic & [100,300] & linear & logarithmic & logarithmic & [90,250] & linear & exponential & linear & [150,350] & linear & logarithmic & logarithmic & [200,300] \\
    \bottomrule
    \end{tabular}
    }
    \label{tab:temp_id_flow}
\end{table*}

\begin{table*}[!ht]
    \centering
    \renewcommand{\arraystretch}{1.1} 
    \setlength{\tabcolsep}{2.1em} 
    \caption{
    The resulting templates of $\text{TriPS}_{\text{T}}$ and hyperparameters for DC guidance scale $[\beta_{\min}^{T}, \beta_{\max}^{T}]$ for linear inverse problems based on the diffusion model.
    }
    \resizebox{1\textwidth}{!}{
    \begin{tabular}{cccc|cccc}
    \toprule
    \multicolumn{8}{c}{\textbf{Diffusion Model (SD1.5)}}\\
    \midrule
    \multicolumn{4}{c|}{Super-Resolution $\times$8}
    & \multicolumn{4}{c}{Motion Deblurring} \\
    \midrule
    $\beta(t)\downarrow$ & $\lambda(t)\uparrow$ & $\eta(t)\downarrow$ & $[\beta_{\min}^{T}, \beta_{\max}^{T}]$
    & $\beta(t)\downarrow$ & $\lambda(t)\uparrow$ & $\eta(t)\downarrow$ & $[\beta_{\min}^{T}, \beta_{\max}^{T}]$ \\
    \midrule
    linear & exponential & logarithm & [15,30] & exponential & exponential & linear & [30,60] \\
    \bottomrule
    \end{tabular}
    }
    \label{tab:temp_id_diffusion}
\end{table*}

\subsection{Implementations Details for GRPO-based Schedule Optimization}
\label{app:subsec_implement_grpo}
We parameterize the schedules of DC guidance, CFG and stochasticity using the Bernstein-Beta policy, defining a search space bounded by $\lambda_{min}=1, \lambda_{max}=8$, $\beta_{min}=10, \beta_{max}=400$, and $\eta_{min}=0, \eta_{max}=1$, which are specified as $s_{\min}$ and $s_{\max}$ in Sec~\ref{par: schedule_parameterization}. 
We employ a learning rate of $10^{-2}$, group and batch sizes of 4, and a KL divergence coefficient of $10^{-3}$ for a maximum of 200 iterations during optimization.
The task-dependent weights for the reward function, $w_{\text{dist}}$ and $w_{\text{perc}}$, are detailed in Table~\ref{tab:grpo_id_flow} for flow matching models and Table~\ref{tab:grpo_id_diffusion} for diffusion models.
All experiments are conducted on a single NVIDIA A100 GPU using a calibration set of 100 images per task, sampled from indices disjoint from the test set to ensure zero-shot evaluation conditions. 

\paragraph{Mathematical motivation for the Bernstein basis}
The Bernstein basis $B_{k,d}(t)=\binom{d}{k}t^{k}(1-t)^{d-k}$
is selected for two principled reasons.
\textbf{(i)~Partition-of-unity:}
$\sum_{k=0}^{d}B_{k,d}(t)=1$ for all $t\in[0,1]$,
which ensures that the resulting curve
$\tilde{s}(t)=\sum_{k=0}^{d}w^{(s)}_{k}B_{k,d}(t)$
is always a \emph{convex combination} of the sampled
coefficients $\{w^{(s)}_{k}\}$.
\textbf{(ii)~Bounded range:}
since each $w^{(s)}_{k}\sim\mathrm{Beta}(a^{(s)}_{k},b^{(s)}_{k})$
is supported on $(0,1)$ and the basis forms a partition of
unity, $\tilde{s}(t)\in(0,1)$ is guaranteed for \emph{all}
$t$, regardless of the sampled coefficients.
Together, these properties ensure that the affinely
rescaled schedule
$s(t)=s_{\min}+(s_{\max}-s_{\min})\tilde{s}(t)$
remains within $[s_{\min},s_{\max}]$ throughout GRPO
exploration, preventing out-of-range values and stabilizing
policy optimization.

\paragraph{Polynomial degree}
We use degree $d=25$ for all experiments.
This choice provides sufficient expressive capacity to
represent the complex temporal curves identified in
Sec.~5.4 (fine-grained local fluctuations and magnitude
shifts) while keeping the parameter count tractable
($d{+}1=26$ coefficients per schedule component,
$3{\times}26=78$ total policy parameters).

\begin{table*}[!ht]
    \centering
    \renewcommand{\arraystretch}{1.1} 
    \setlength{\tabcolsep}{1.0em} 
    \caption{
    Task-dependent reward weights, $w_{\text{dist}}$ and $w_{\text{perc}}$, utilized during GRPO-based optimization based on flow matching model.
    }
    \resizebox{0.65\textwidth}{!}{
    \begin{tabular}{cc|cc|cc|cc}
    \toprule
    \multicolumn{8}{c}{\textbf{Flow Matching Model (SD3.5-M)}}\\
    \midrule
    \multicolumn{2}{c|}{Super-Resolution $\times$8}
    & \multicolumn{2}{c|}{Super-Resolution $\times$12}
    & \multicolumn{2}{c|}{Motion Deblurring}
    & \multicolumn{2}{c}{Gaussian Deblurring}\\
    \midrule
    
    $w_{\text{dist}}$ & $w_{\text{perc}}$ & $w_{\text{dist}}$ & $w_{\text{perc}}$ & $w_{\text{dist}}$ & $w_{\text{perc}}$ & $w_{\text{dist}}$ & $w_{\text{perc}}$ \\
    \midrule
    \multicolumn{8}{c}{\textbf{FFHQ (768 $\times$ 768)}}\\
    \midrule
    0.3 & 0.7 & 0.5 & 0.5 & 0.5 & 0.5 & 0.3 & 0.7 \\
    \midrule
    \multicolumn{8}{c}{\textbf{DIV2K (768 $\times$ 768)}}\\
    \midrule
    0.3 & 0.7 & 0.4 & 0.6  & 0.3 & 0.7 & 0.3 & 0.7 \\
    \bottomrule
    \end{tabular}
    }
    \label{tab:grpo_id_flow}
\end{table*}

\begin{table*}[!ht]
    \centering
    \renewcommand{\arraystretch}{1.1} 
    \setlength{\tabcolsep}{4.1em} 
    \caption{
    Task-dependent reward weights, $w_{\text{dist}}$ and $w_{\text{perc}}$, utilized during GRPO-based optimization based on diffusion model.
    }
    \resizebox{0.65\textwidth}{!}{
        \begin{tabular}{cc|cc}
        \toprule
        \multicolumn{4}{c}{\textbf{Diffusion Model (SD1.5)}}\\
        \midrule
        \multicolumn{2}{c|}{Super-Resolution $\times$8}
        & \multicolumn{2}{c}{Motion Deblurring} \\
        \midrule
        $w_{\text{dist}}$ & $w_{\text{perc}}$ & $w_{\text{dist}}$ & $w_{\text{perc}}$ \\
        \midrule
        0.3 & 0.7 & 0.5 & 0.5 \\
        \bottomrule
        \end{tabular}
    }
    \label{tab:grpo_id_diffusion}
\end{table*}

\subsection{Implementation Details for Baseline Hyperparameter Settings}
\label{app:subsec_baseline_hyperpar}
\paragraph{Flow matching model baselines}
\label{par:flow_implementation}
For all flow matching evaluations, we utilize Stable Diffusion 3.5-Medium as the generative backbone with the time scheduler shift factor set to 4.0 and the number of function evaluations (NFE) fixed at 28. The specific configurations for each method are as follows:
\begin{itemize}
    \item \textbf{Resample} We adopt the resampling hyperparameter $\gamma \left(\frac{1-\bar\alpha_{t-1}}{\bar\alpha_t}\right)\left(1-\frac{\bar\alpha_t}{\bar\alpha_{t-1}}\right)$ with $\gamma=40$ as proposed in the original paper. To enforce hard data consistency, the skip step size is set to 1, accompanied by 20 steps of gradient descent with a step size (DC guidance scale) of 30. The CFG scale is fixed at 2.0.
    \item \textbf{FlowChef} We employ 3 steps of gradient descent with a step size (DC guidance scale) of 1 for data consistency optimization. The CFG scale is fixed at 2.0.
    \item \textbf{FlowDPS} We employ 3 steps of gradient descent with a step size (DC guidance scale) of 15 for data consistency optimization. The CFG scale is fixed at 2.0.
    \item \textbf{FLAIR} We employ 15 steps of gradient descent for data consistency optimization. The learning rates are set to 2 for super-resolution $12\times$ and 0.1 for deblurring tasks, consistent with the original paper. For the SR $\times8$ task, which was not specified in the original work, we determined the optimal learning rate to be 6 via a grid search on the calibration set. The CFG scale is fixed at 2.0.
    \item \textbf{TriPS} TriPS employs time-varying schedules for the DC guidance scale $\beta(t)$, CFG scale $\lambda(t)$, and stochasticity $\eta(t)$, as derived from our triadic schedule optimization. For the data consistency update, we utilize $N=6$ gradient descent steps. The inner DC step size $\beta_{dc}$ is defined as $\beta_{dc} = \beta(t) \cdot (0.25 + 0.75 \sigma_t^2) / N$, where $\sigma_t$ represents the noise schedule.
\end{itemize}

\paragraph{Diffusion model baselines} \label{par:diffusion_implementation}
For all diffusion-based evaluations, we utilize Stable Diffusion 1.5 as the generative backbone with the NFE fixed at 50. To ensure a competitive comparison, hyperparameters for all baselines are determined via an extensive grid search on a calibration set. Specific configurations for each method are as follows:
\begin{itemize}
    \item \textbf{PSLD} 
    We set the DC and gluing update step sizes to 1.0 for super-resolution (SR). For motion deblurring, the DC step size is adjusted to 10, while the gluing update step size remains 1.0.
    \item \textbf{DDPG} We set hyperparameters for SR and motion deblurring set to $\{\gamma, \zeta, \tilde{\eta}\} = \{10.0, 0.8, 0.3\}$ and $\{5.0, 0.6, 0.6\}$, respectively, with the guidance step size $\mu_t$ set to the theoretically safe value $\mu_t^*$. Here, $\gamma$ controls the guidance transition from back-projection to least-squares, $\zeta$ balances stochastic noise injection against reconstruction accuracy, and $\tilde{\eta}$ regularizes the back-projection operator to prevent noise amplification.
    \item \textbf{P2L} We employ $K=1$ text embedding update per timestep with a learning rate of $10^{-4}$ and a regularization weight $\lambda=10^{-6}$. The projection interval $\gamma$ and DC guidance scale $\rho_t$ are set to $\{20, 0.05\}$ for SR and $\{10, 0.05\}$ for motion deblurring, respectively.
    \item \textbf{TReg} We employ a fixed CFG scale of 2.0. The CG update range is restricted to $t \le 50$ for every third timestep ($t \pmod 3 = 0$), without utilizing DPS updates. Other parameters follow the original paper.
    \item \textbf{TriPS} TriPS employs time-varying schedules for the DC guidance scale $\beta(t)$, CFG scale $\lambda(t)$, and stochasticity $\eta(t)$, as derived from our triadic schedule optimization. For the data consistency update, we utilize $N=6$ gradient descent steps. The inner DC step size $\beta_{dc}$ is defined as $\beta_{dc} = \beta(t) \cdot (0.25 + 0.75 (1-\bar{\alpha}_t)^2) / N$, where $1-\bar{\alpha}_t$ represents the noise variance schedule.
\end{itemize}

\section{Additional Experiments}
\label{app:sec_add_exp}

\subsection{Additional Task: Inpainting}
\label{app:subsec_discuss_inpainting}
\paragraph{Triadic scheduling trend on inpainting}
\label{par:trend_inpainting}
While our primary framework suggests a tapering DC guidance scale $\beta(t)$ to prevent the amplification of measurement noise in the late sampling stages, we identify inpainting as a notable exception that necessitates a distinct scheduling behavior. Unlike blur or sub-sampling kernels that act globally, the inpainting operator $\mathbf{A}$ provides exact, high-frequency likelihood supervision for the unmasked regions. In this context, the late-stage increase of $\beta(t)$ does not lead to noise injection but rather serves as a crucial anchoring mechanism.

Our time-window ablation in Table~\ref{tab:ablation_inpainting_schedule_shape} reveals that a monotonic increasing $\beta(t)$ schedule outperforms the standard tapering strategy across both fidelity (PSNR, SSIM) and perceptual metrics (LPIPS, patch-based KID). Specifically, we evaluate on 100 FFHQ images within the range $[\beta_{\min}^{\text{T}}, \beta_{\max}^{\text{T}}] = [80, 240]$: Fixed (constant at the mean), Linearly (increasing/decreasing), and non-monotonic  Tent and V-shape function~\cite{smith2017cyclical} curves (linearly increasing to decreasing / linearly decreasing to increasing) for inpainting task. This divergence stems from the spatial nature of the inpainting task: as the sampling process approaches the low-noise regime, maintaining or even intensifying the DC guidance ensures a seamless transition between the generated content and the ground-truth pixels at the mask boundaries. Without this late-stage reinforcement, the generative manifold may slightly drift away from the hard constraints of the known pixels, resulting in boundary artifacts. Consequently, for tasks with strong spatial-domain likelihood, we refine our triadic strategy to favor a late-stage boost in $\beta(t)$, which effectively keeps the high-frequency details into the measurement-consistent subspace.

\begin{table}[!ht]
    \centering
    \renewcommand{\arraystretch}{1.1} 
    \setlength{\tabcolsep}{2.0em}    

    \caption{Time-window ablation on $\beta(t)$ scheduling on inpainting task (FFHQ).}
    \resizebox{0.7\textwidth}{!}{
    \begin{tabular}{l|cccc}
    \toprule
    Scheduling Pattern & PSNR$\uparrow$ & SSIM$\uparrow$ & KID$\downarrow$ & LPIPS$\downarrow$ \\
    \midrule
    Fixed ($\rightarrow$)
    & 22.93 & 0.835 & 0.008 & 0.094 \\
    Linearly decreasing ($\searrow$)
    & 22.92 & 0.830 & 0.009 & 0.101 \\
    Tent function ($\nearrow\!\searrow$)
    & \underline{23.18} & 0.836 & 0.008 & 0.097 \\
    V-shape function ($\searrow\!\nearrow$)
    & 23.07 & \underline{0.842} & \underline{0.007} & \underline{0.092} \\
    \midrule
    Linearly increasing ($\nearrow$)
    & \textbf{23.23} & \textbf{0.847} & \textbf{0.006} & \textbf{0.089} \\
    \bottomrule
    \end{tabular}
    }
    \label{tab:ablation_inpainting_schedule_shape}
\end{table}

\paragraph{Experimental results for inpainting}
Beyond the four primary inverse problems discussed in Sec.~\ref{subsec:main_exp_results} in the main text, we further evaluate the generalizability of our framework on the inpainting task. As summarized in Table~\ref{tab:quanti_inpainting_flow} and visualized in Fig.~\ref{fig:app_quali_inpainting_flow}, $\text{TriPS}_{\text{T}}$ consistently achieves state-of-the-art performance across box mask inpainting scenario. By leveraging the optimized triadic schedules, specifically the monotonic increasing $\beta(t)$ tailored for spatial consistency, our method effectively resolves the boundary artifacts and blurry textures often observed in baseline posterior sampling methods. Notably, $\text{TriPS}_{\text{T}}$ maintains a superior balance in the perception-distortion trade-off, yielding the highest fidelity (PSNR, SSIM) while simultaneously minimizing perceptual distance (LPIPS, patch-based KID).

\begin{table*}[!ht]
    \centering
    \renewcommand{\arraystretch}{0.9}
    \setlength{\tabcolsep}{0.6em}
    \caption{Quantitative comparison on box-inpainting task based on flow matching model. Evalutation metrics are averaged over 1,000 samples from the FFHQ dataset and 800 samples from the DIV2K dataset each using 28 NFEs under additive Gaussian noise ($\sigma_n = 0.03$). For each metric, the best and second-best results are indicated in \textbf{bold} and \underline{underline}, respectively.}
    \label{tab:quanti_inpainting_flow}
    \resizebox{0.7\textwidth}{!}{
    \begin{tabular}{l|cccc|cccc}
    \toprule
    \multicolumn{9}{c}{\textbf{Flow Matching Model (SD3.5-M)}}\\
    \midrule
    & \multicolumn{4}{c|}{\textbf{FFHQ (768 $\times$ 768)}}
    & \multicolumn{4}{c}{\textbf{DIV2K (768 $\times$ 768)}}\\
    \midrule
    Method
    & PSNR$\uparrow$ & SSIM$\uparrow$ & FID$\downarrow$ & LPIPS$\downarrow$
    & PSNR$\uparrow$ & SSIM$\uparrow$ & FID$\downarrow$ & LPIPS$\downarrow$ \\
    \midrule
    ReSample
    & 17.45 & 0.761 & 63.20 & 0.208
    & 19.93 & 0.729 & 31.65 & 0.159 \\
    FlowChef
    & 17.96 & 0.714 & 50.85 & 0.239
    & 19.50 & 0.577 & 51.21 & 0.265 \\
    FlowDPS
    & 17.01 & 0.722 & 38.32 & 0.238
    & 20.06 & 0.633 & 25.70 & 0.202 \\
    FLAIR
    & \underline{22.63} & \underline{0.832} & \textbf{10.29} & \underline{0.095}
    & \underline{23.68} & \underline{0.823} & \underline{11.34} & \underline{0.069} \\
    \midrule
    \rowcolor{red!10}
    $\text{TriPS}_{\text{T}}$ (Ours)
    & \textbf{23.30} & \textbf{0.845} & \underline{12.85} & \textbf{0.090}
    & \textbf{24.27} & \textbf{0.835} & \textbf{9.03} & \textbf{0.068} \\
    \bottomrule
    \end{tabular}
    }
\end{table*}

\begin{figure*}[!ht]
  \centering
  \includegraphics[width=0.95\textwidth]{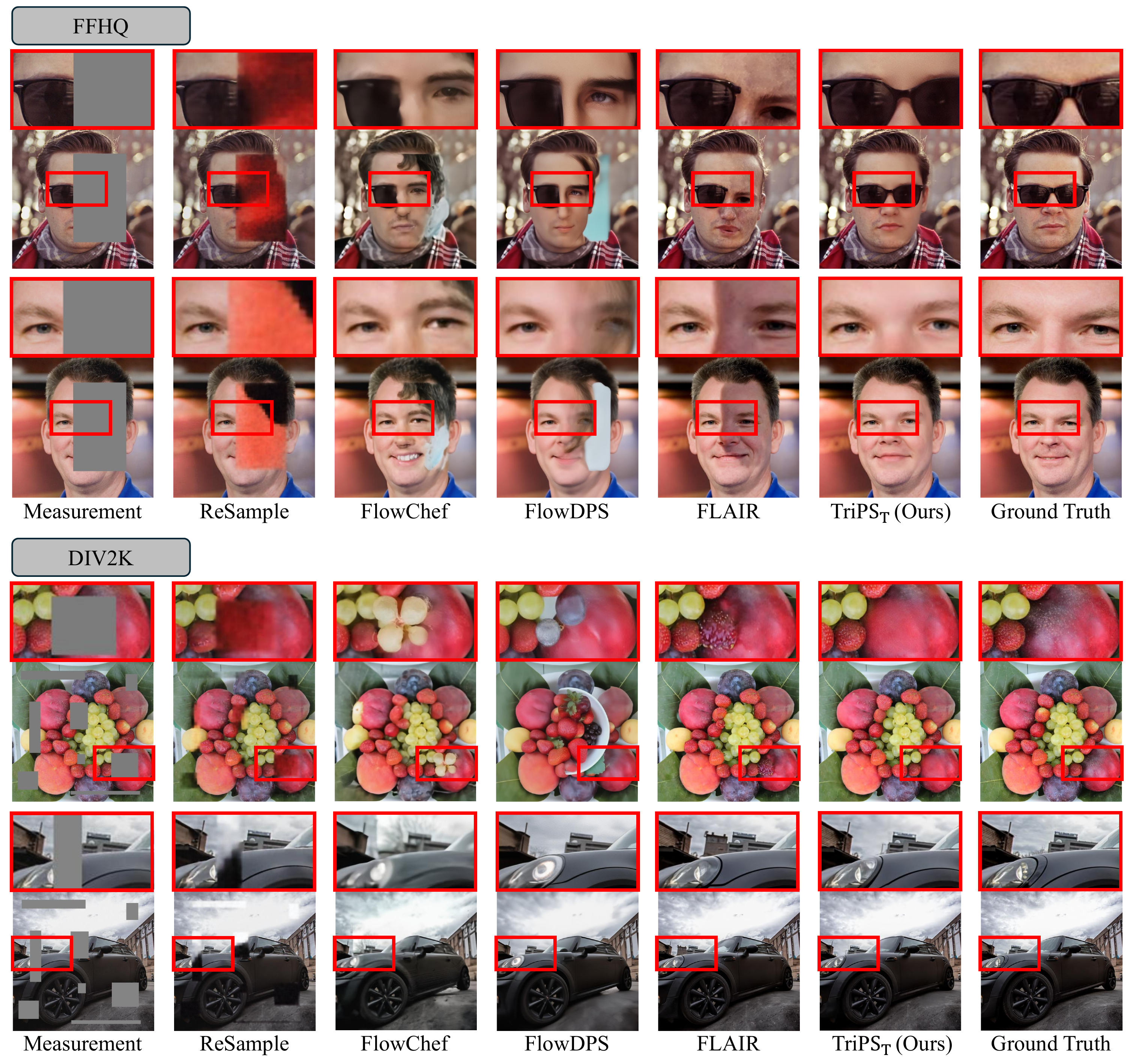}
  \caption{
  Qualitative comparison for FFHQ and DIV2K datasets on box-inpainting task. 
  }
  \label{fig:app_quali_inpainting_flow}
\end{figure*}

\subsection{Applicability of Triadic Schedule Optimization across Different Backbone Solvers}
\label{app:subsec_applicability_trips}
To evaluate the versatility of the TriPS framework, we extend its application to flow matching-based solvers with distinct algorithmic foundations: FlowDPS~\cite{flowdps}, which employs posterior sampling, and FLAIR~\cite{flair}, which utilizes variational inference. Following the $\text{TriPS}_{\text{T}}$ described in Sec.~\ref{subsec:template_opt}, we derive task-specific schedules for DC, CFG, and stochasticity scales for an super-resolution $\times8$ on the FFHQ dataset. As illustrated in Fig.~\ref{fig:TriPS_general_sampler}, $\text{TriPS}_{\text{T}}$ enhances perceptual quality compared to default baselines by restoring high-frequency details while maintaining structural fidelity. 
Quantitative results in Table~\ref{tab:quanti_backbone_versatility_ffhq_small} further confirm that $\text{TriPS}_{\text{T}}$ consistently improves perceptual scores (KID, LPIPS) while maintaining competitive performance in distortion metrics (PSNR, SSIM) relative to the original solvers. These findings demonstrate that our proposed triadic schedule optimization is compatible with diverse inverse problem solvers.

\begin{figure}[t!]
  \centering
  \includegraphics[width=0.7\textwidth]{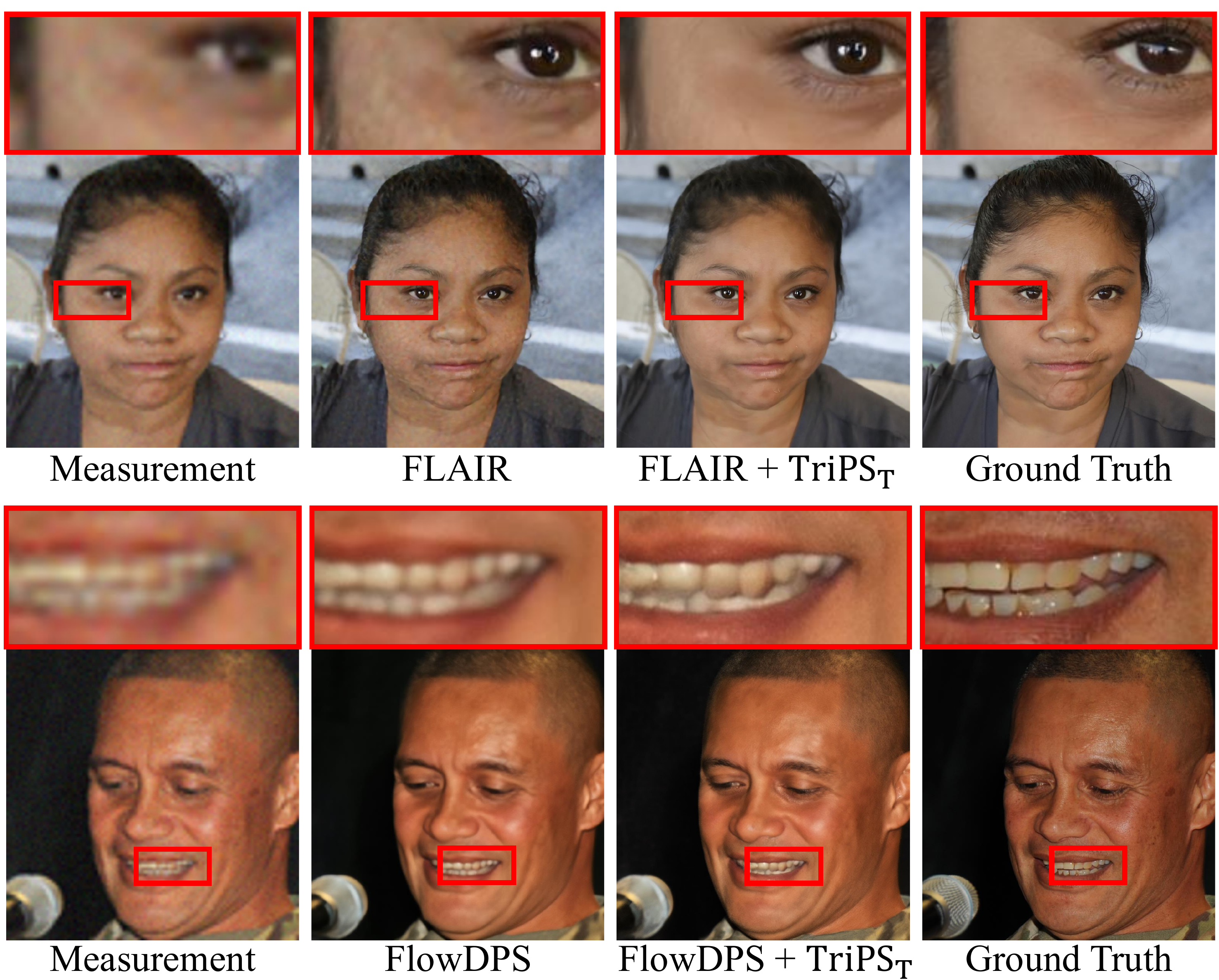}
  \caption{
  Applicability of triadic schedule optimization across distinct flow matching based algorithms. Qualitative results of FlowDPS (posterior sampling) and FLAIR (variational inference) on a Super-Resolution $\times8$ using the FFHQ dataset. Triadic schedules obtained via $\text{TriPS}_{\text{T}}$ enhance high-frequency textures and structural fidelity relative to default baselines.
  }
  \label{fig:TriPS_general_sampler}
\end{figure}

\begin{table}[t!]
    \centering
    \setlength{\tabcolsep}{1.6em}
    
    \caption{
    Quantitative comparison of triadic schedule optimization across diverse backbone solvers. We evaluate the performance on an super-resolution $\times8$ using 100 images from the FFHQ dataset. The results compare the original FlowDPS~\cite{flowdps} and FLAIR~\cite{flair} baselines against their counterparts integrated with our proposed triadic schedule optimization framework.
    }
    \label{tab:quanti_backbone_versatility_ffhq_small}
    \begin{tabular}{l|cccc}
    \toprule
    Method & PSNR$\uparrow$ & SSIM$\uparrow$ & KID$\downarrow$ & LPIPS$\downarrow$ \\
    \midrule
    FlowDPS
    & \textbf{28.09} & 0.779 & 0.012 & 0.105 \\
    \rowcolor{red!10}
    FlowDPS + $\text{TriPS}_{\text{T}}$
    & 27.97 & \textbf{0.782} & \textbf{0.009} & \textbf{0.099} \\
    \midrule
    FLAIR
    & \textbf{29.18} & 0.778 & 0.041 & 0.120 \\
    \rowcolor{red!10}
    FLAIR + $\text{TriPS}_{\text{T}}$
    & 29.03 & \textbf{0.784} & \textbf{0.008} & \textbf{0.099} \\
    \bottomrule
    \end{tabular}
    
\end{table}

\subsection{Additional Ablation: Triadic Schedule Optimization Strategies}
\label{app:subsec_ablation_triadic_schedule}
We investigate the template-based schedule search and GRPO-based schedule optimization frameworks against a Zeroth-Order (ZO) baseline for a Gaussian deblurring task using a flow matching-based solver on the FFHQ dataset. The ZO baseline utilizes an ES-style Gaussian perturbation gradient estimator with Adam updates.
In this setup, the ZO schedule adopts the same Beta-Bernstein parameterization described in Sec.~\ref{par: schedule_parameterization} and is initialized using the schedule obtained from the template-based search. 
As illustrated in Fig.~\ref{fig:grpo_results_curve}, the optimized schedule curve for ZO remains closely aligned with the initial template-based schedule, whereas the GRPO-derived curve exhibits a distinct departure from its initialization. These visual differences indicate that the ZO method suffers from poor convergence in the high-dimensional Beta-Bernstein parameter space, failing to effectively optimize the target objective compared to the GRPO-based approach.

\begin{figure}[!ht]
  \centering
  \includegraphics[width=0.8\linewidth]{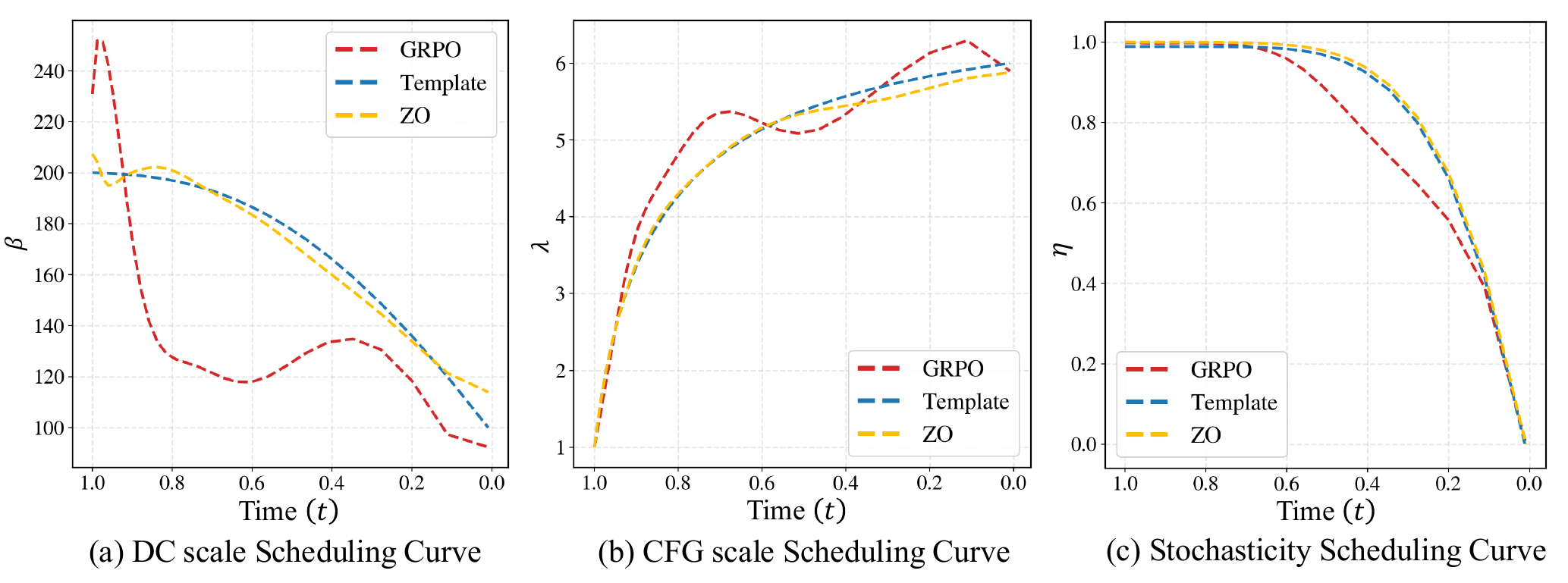}
  \caption{
  Visualization of optimized triadic schedule curves. Comparison of the schedules for DC, CFG, and Stochasticity scales obtained via template-based search, GRPO-based optimization, and the Zeroth-Order (ZO) baseline. The ZO-optimized curves remain closely aligned with the initial template-based schedules, suggesting suboptimal convergence in the parameter space, whereas the GRPO-based schedules demonstrate distinct trajectories specifically tailored to the restoration objective.
  }
  \label{fig:grpo_results_curve}
\end{figure}

\subsection{Additional Ablation: More Results for Impact of Joint Triadic Schedule Optimization}
\label{app:subsec_ablation_more_joint}
To further examine the interplay among the triadic components during optimization, we extend the ablation study in Sec.~\ref{subsec:ablation_studies} by evaluating configurations where only two of the three components are jointly optimized. In these settings, the non-optimized component is fixed to the midpoint of its predefined parameter range(See Sec.~\ref{app:subsec_imp_detail_template}), while the remaining two components are obtained via the $\text{TriPS}_{\text{T}}$ framework. As summarized in Table~\ref{tab:ablation_studies_joint_app}, the joint optimization of all three components yields superior performance across both distortion and perceptual metrics compared to any dual-component configuration. These findings underscore that the synergistic interaction of the triadic schedules is essential for effectively navigating the perception-distortion trade-off, thereby justifying the necessity of the proposed unified search space.

\begin{table}[!ht]
    \centering
    \renewcommand{\arraystretch}{0.9} 
    \setlength{\tabcolsep}{1.6em} 
    \caption{
    Additional ablation study for the impact of joint triadic schedule optimization of TriPS. Super-resolution $\times8$ results evaluated on 100 FFHQ images.
    }
    \resizebox{0.7\textwidth}{!}{
    
        \begin{tabular}{ccc|cccc}
        \toprule
        DC & CFG & Stoch. &
        PSNR$\uparrow$ & SSIM$\uparrow$ & KID$\downarrow$ & LPIPS$\downarrow$ \\
        \midrule
        \cmark & \cmark & \xmark & 29.41 & 0.807 & 0.116 & 0.126 \\
        \xmark & \cmark & \cmark & 29.53 & 0.804 & 0.120 & 0.133 \\
        \cmark & \xmark & \cmark & \underline{29.76} & \underline{0.819} & 0.073 & \underline{0.104} \\
        \rowcolor{red!10}
        \cmark & \cmark & \cmark & \textbf{29.77} & \textbf{0.820} & \textbf{0.072} & \textbf{0.101} \\
        \bottomrule
        \end{tabular}
    }
    \label{tab:ablation_studies_joint_app}
\end{table}

\subsection{Additional Ablation: Optimized Schedule Curve Analysis}
\label{app:subsec_ablation_more_discussion_PD_control}
In this section, we provide an extended analysis of the reward-guided control mechanisms governing the perception–distortion trade-off, supplementing the results presented in Sec.~\ref{subsec:ablation_studies}. The optimized schedules in Fig.~\ref{fig:grpo_results_overall}(c) reveal that perception-oriented posterior sampling favors high initial DC guidance followed by a rapid late-stage decay, whereas distortion-oriented posterior sampling maintains overall higher DC guidance characterized by an increasing-then-decreasing profile. Furthermore, perception-oriented sampling consistently employs larger CFG scales throughout the trajectory to leverage semantic guidance for high-frequency detail synthesis. This pattern aligns with established findings in text-to-image generation~\cite{wang2024analysis}, where such CFG scheduling is shown to prioritize perceptual fidelity over pixel-wise reconstruction. Finally, the elevated stochasticity in the perception-oriented variant, relative to its distortion-oriented counterpart, indicates that the optimization actively exploits the regularizing effects identified in Sec.~\ref{subsec:analysis_sto_regularize} to mitigate distributional mismatches with the ground truth.

\subsection{Runtime Comparison}
\label{app:subsec_runtime_anal}
To evaluate the inference efficiency of TriPS, we compare its average runtime with various diffusion and flow-based baselines. Table~\ref{tab:efficiency_comparison} summarizes these results, reporting the average sampling time per image (s/image) averaged over 10 samples.

\begin{table*}[!ht]
    \centering
    \renewcommand{\arraystretch}{1.0} 
    \setlength{\tabcolsep}{0.8em}    
    \caption{Comparison of different methods in terms of runtime based on flow matching Model and diffusion model. We report the sampling runtime (seconds per image) averaged over 10 samples.}

    \begin{subtable}{\textwidth}
        \centering
        \resizebox{0.70\textwidth}{!}{
        \begin{tabular}{l|c|c|c|c}
        \toprule
        \multicolumn{5}{c}{\textbf{Flow Matching Model (SD3.5-M)}} \\ \midrule
        \multirow{2}{*}{\textbf{Method}} & Super-Resolution $\times$8 & Super-Resolution $\times$12 & Motion Deblurring & Gaussian Deblurring \\ \cmidrule(lr){2-2} \cmidrule(lr){3-3} \cmidrule(lr){4-4} \cmidrule(lr){5-5}
         & Runtime (s) $\downarrow$ & Runtime (s) $\downarrow$ & Runtime (s) $\downarrow$ & Runtime (s) $\downarrow$  \\ \midrule
        ReSample & 11.31 & 11.29 & 12.42 & 10.91 \\
        FlowChef & 4.68 & 4.64 & 5.36 & 4.94 \\
        FlowDPS  & 4.71 & 4.67 & 5.42 & 4.97 \\
        FLAIR    & 11.48 & 11.46 & 12.90 & 11.99 \\
        \textbf{TriPS (Ours)} & 6.58 & 6.48 & 9.87 & 7.45  \\ \bottomrule
        \end{tabular}
        }
    \end{subtable}

    \vspace{1.0em}

    \begin{subtable}{\textwidth}
        \centering
        \resizebox{0.38\textwidth}{!}{
        \begin{tabular}{l|c|c}
        \toprule
        \multicolumn{3}{c}{\textbf{Diffusion Model (SD1.5)}} \\ \midrule
        \multirow{2}{*}{\textbf{Method}} & Super-Resolution $\times$8 & Motion Deblurring \\ \cmidrule(lr){2-2} \cmidrule(lr){3-3}
         & Runtime (s) $\downarrow$  & Runtime (s) $\downarrow$  \\ \midrule
        PSLD & 6.09 & 6.73  \\
        DDPG & 2.15 & 2.58  \\
        P2L  & 9.18 & 9.79  \\
        TReg & 2.73 & 3.10 \\
        \textbf{TriPS (Ours)} & 6.01  & 9.52  \\ \bottomrule
        \end{tabular}
        }
    \end{subtable}
    \vspace{0.5em}
    \label{tab:efficiency_comparison}
\end{table*}

\section{Additional Qualitative Results}
We present additional qualitative results for motion deblurring, super-resolution $\times$12, and Gaussian deblurring. Across all tasks, $\text{TriPS}_{\text{T}}$ and $\text{TriPS}_{\text{G}}$ consistently achieve superior fidelity-perception trade-offs compared to existing baselines, producing more realistic and structurally faithful reconstructions. ReSample and FlowChef tend to produce over-smoothed reconstructions, losing fine-grained facial details such as hair strands, skin textures, and eye contours. FlowDPS and FLAIR recover sharper outputs in some cases, but occasionally at the cost of introducing reconstruction artifacts or hallucinated features that deviate from the ground truth. In contrast, both $\text{TriPS}_{\text{T}}$ and $\text{TriPS}_{\text{G}}$ faithfully recover high-frequency details while maintaining structural consistency with the original image. For the super-resolution$\times$12 setting, most baselines struggle to resolve fine structures from severely downsampled measurements, whereas $\text{TriPS}_{\text{T}}$ and $\text{TriPS}_{\text{G}}$ consistently produce plausible high-frequency details that closely match the ground truth. Similarly, in motion deblurring and Gaussian deblurring, our methods recover sharper edges and more natural textures compared to the baselines, particularly in regions with complex patterns. $\text{TriPS}_{\text{G}}$ further navigates the perception-distortion trade-off via GRPO-optimized triadic schedules, enabling flexible control over the balance between perceptual quality and measurement fidelity. These results corroborate the quantitative improvements reported in Tables~\ref{tab:quanti_flow} and~\ref{tab:transfer_two_tasks_compact}, confirming that triadic schedule optimization translates to visible gains in reconstruction quality across diverse degradation settings. Additional comparisons on the DIV2K dataset further validate the generalization capability of our approach beyond face-centric images.

\begin{figure*}[tp]
  \centering
  \includegraphics[width=0.83\textwidth]{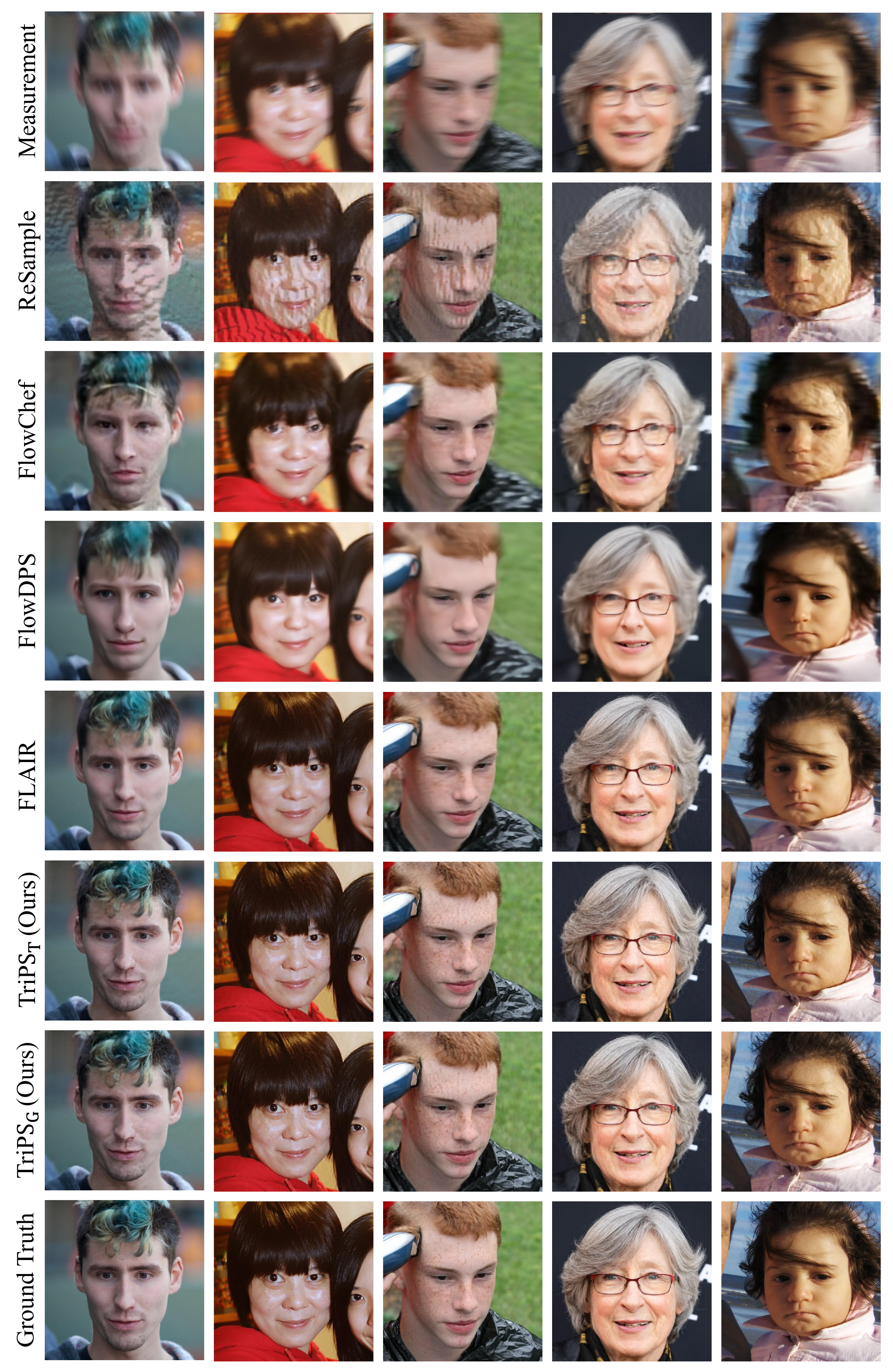}
  \caption{
  Additional qualitative comparison for the motion deblurring on the FFHQ dataset.
  }
  \label{fig:Add_app_quali_MD_FFHQ}
\end{figure*}

\newpage

\begin{figure*}[tp]
  \centering
  \includegraphics[width=0.83\textwidth]{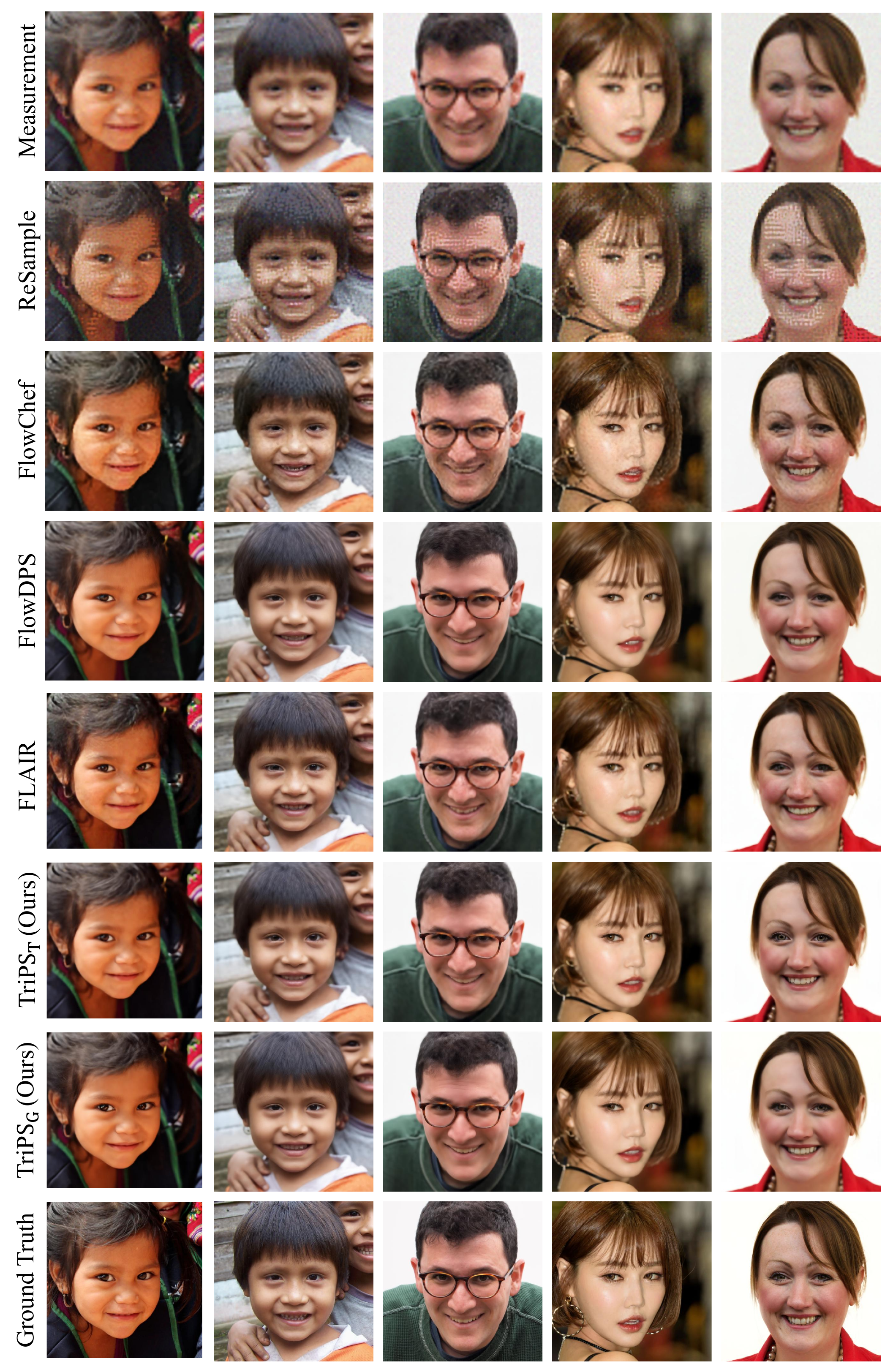}
  \caption{
  Additional qualitative comparison for the super-resolution $\times$12 on the FFHQ dataset.
  }
  \label{fig:Add_app_quali_SRx12_FFHQ}
\end{figure*}

\newpage

\begin{figure*}[tp]
  \centering
  \includegraphics[width=0.83\textwidth]{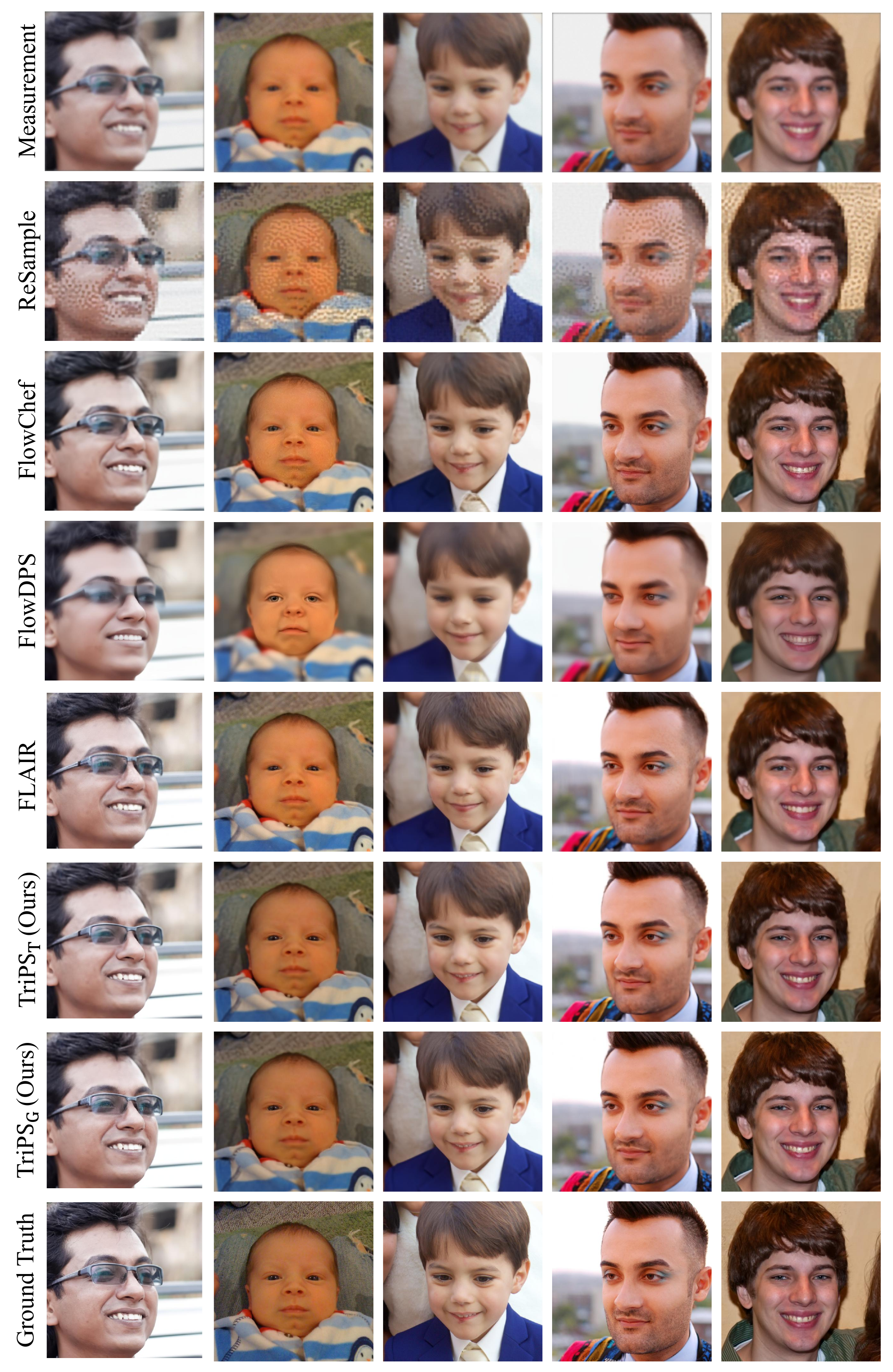}
  \caption{
  Additional qualitative comparison for the gaussian deblurring on the FFHQ dataset.
  }
  \label{fig:Add_app_quali_GD_FFHQ}
\end{figure*}

\newpage

\begin{figure*}[tp]
  \centering
  \includegraphics[width=0.83\textwidth]{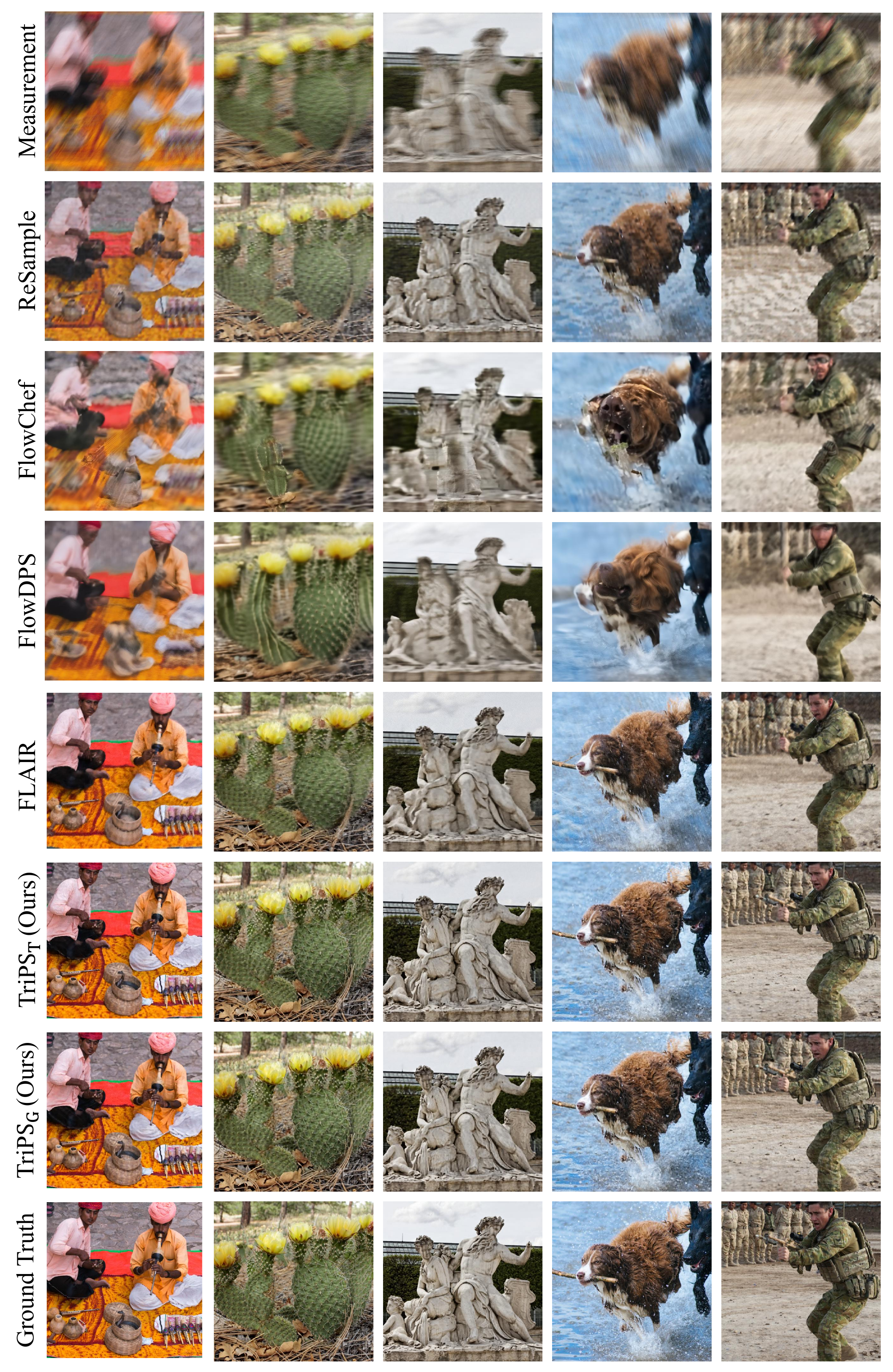}
  \caption{
  Additional qualitative comparison for the motion deblurring on the DIV2K dataset.
  }
  \label{fig:Add_app_quali_MD_DIV2K}
\end{figure*}

\newpage

\begin{figure*}[tp]
  \centering
  \includegraphics[width=0.83\textwidth]{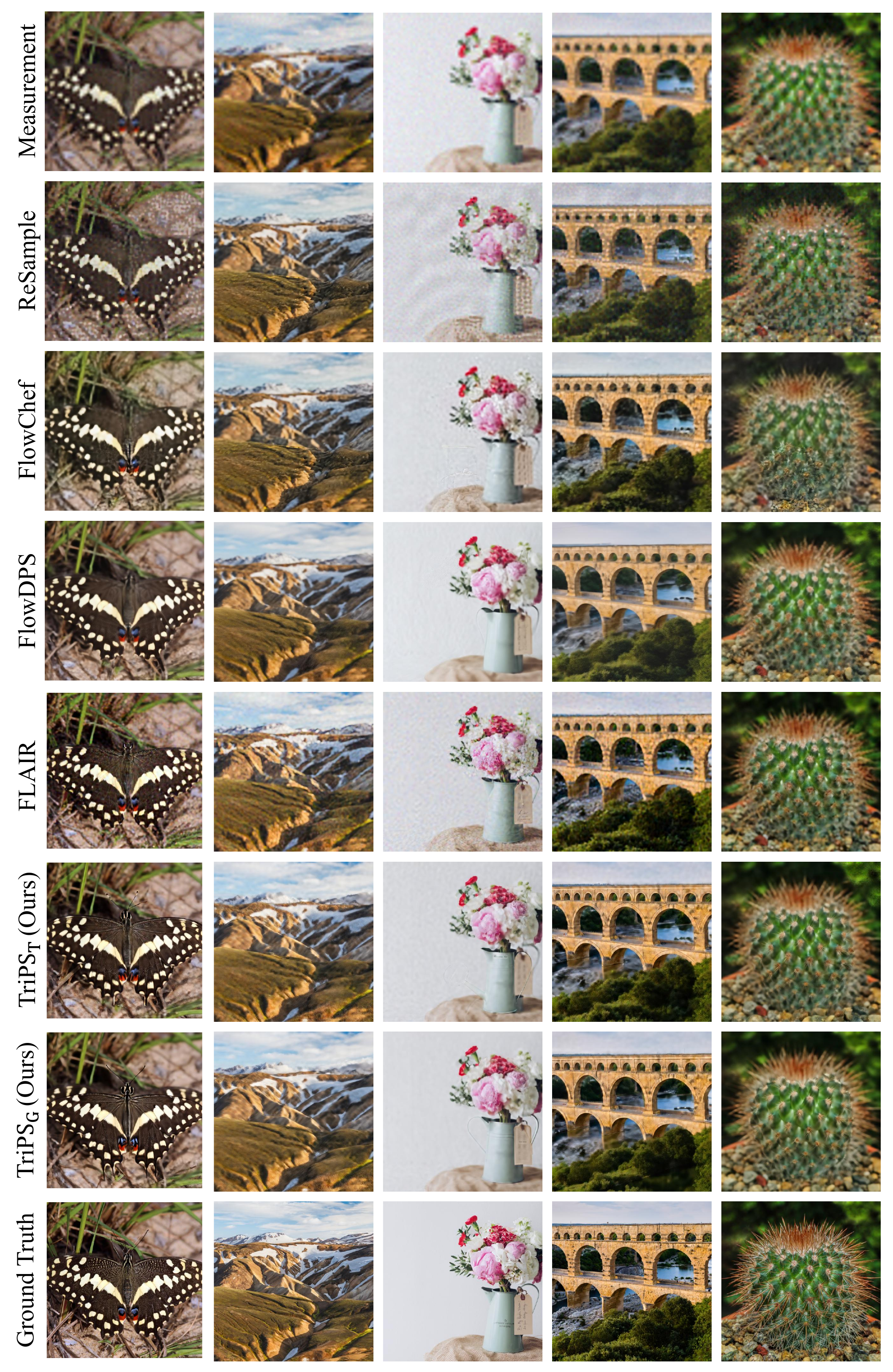}
  \caption{
  Additional qualitative comparison for the super-resolution $\times$12 on the DIV2K dataset.
  }
  \label{fig:Add_app_quali_SRx12_DIV2K}
\end{figure*}

\newpage

\begin{figure*}[tp]
  \centering
  \includegraphics[width=0.83\textwidth]{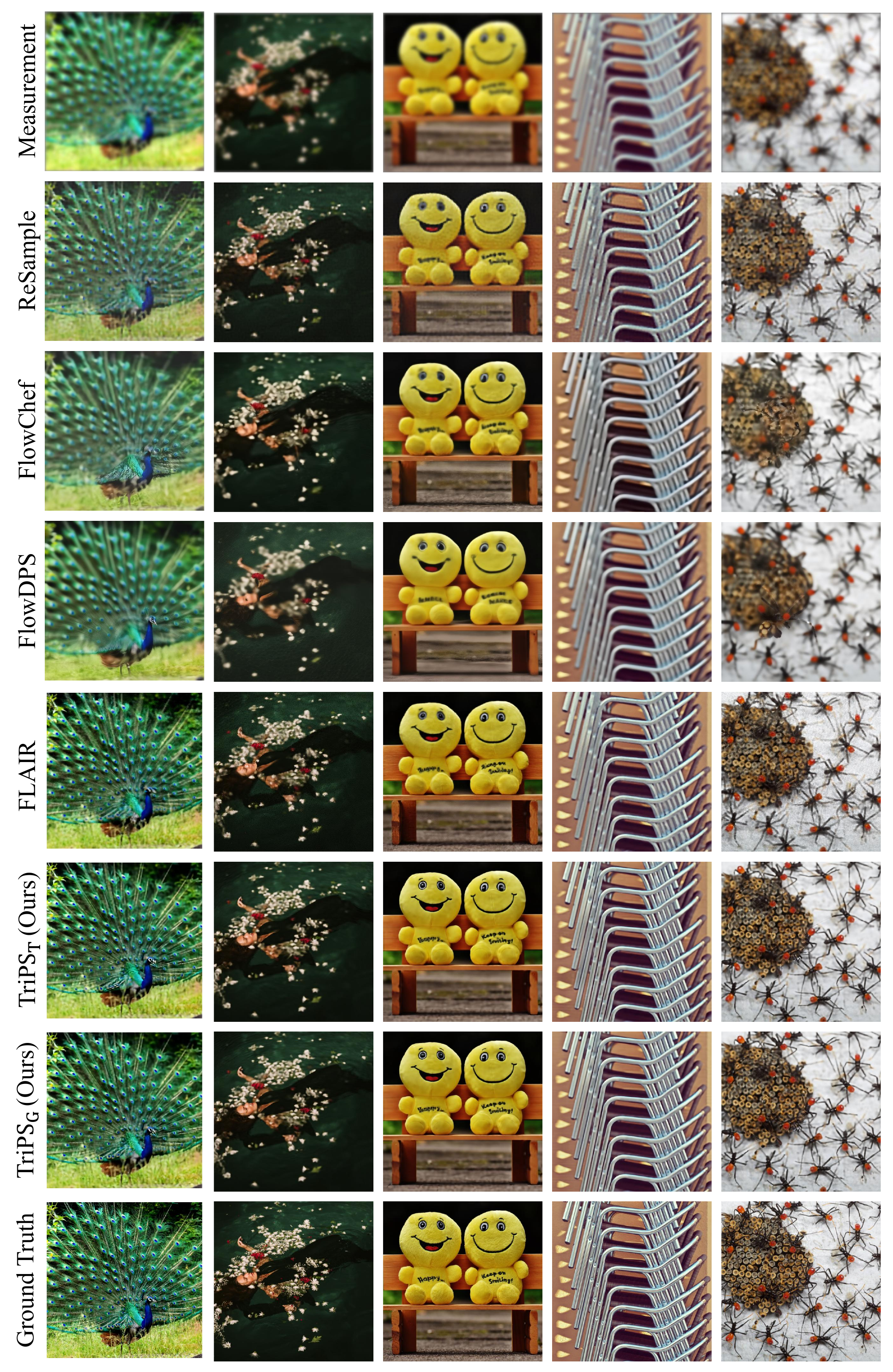}
  \caption{
  Additional qualitative comparison for the gaussian deblurring on the DIV2K dataset.
  }
  \label{fig:Add_app_quali_GD_DIV2K}
\end{figure*}



\end{document}
